\documentclass[twoside,11pt]{article}

%

\usepackage{jmlr2e}

\usepackage{times}
\usepackage{graphicx}
\usepackage{subcaption}
\usepackage{amsfonts}
\usepackage{amsmath}
\usepackage{mathtools}
\usepackage{bbm}
\usepackage{color}


\newcommand{\R}{\mathbb{R}}

\newcommand{\E}{\mathbf{E}}
\newtheorem{assumptions}{Assumptions}

\usepackage{placeins}

\DeclareMathOperator{\argmin}{argmin}


\usepackage{enumerate}




\ShortHeadings{Graph-Dependent Implicit Regularisation for Distributed SGD}{Richards and Rebeschini}
\firstpageno{1}

\begin{document}

\title{Graph-Dependent Implicit Regularisation\\for Distributed Stochastic Subgradient Descent}

\author{\name Dominic Richards \email dominic.richards@spc.ox.ac.uk \\
       \addr Department of Statistics\\
       University of Oxford\\
       24-29 St Giles', Oxford, OX1 3LB
       \AND
       \name Patrick Rebeschini \email patrick.rebeschini@stats.ox.ac.uk \\
       \addr Department of Statistics\\
       University of Oxford\\
       24-29 St Giles', Oxford, OX1 3LB
}

\editor{}

\maketitle

\begin{abstract}%
We propose graph-dependent implicit regularisation strategies for distributed stochastic subgradient descent (Distributed SGD) for convex problems in multi-agent learning. Under the standard assumptions of convexity, Lipschitz continuity, and smoothness, we establish statistical learning rates that retain, up to logarithmic terms,  
centralised  statistical guarantees through implicit regularisation (step size tuning and early stopping) with appropriate dependence on the graph topology. Our approach avoids the need for explicit regularisation in  decentralised learning problems, such as adding constraints to the empirical risk minimisation rule. Particularly for distributed methods, the use of implicit regularisation allows the algorithm to remain simple, without projections or dual methods. To prove our results, we establish graph-independent generalisation bounds for Distributed SGD that match the centralised setting (using algorithmic stability), and we establish graph-dependent optimisation bounds that are of independent interest.
We present numerical experiments to show that the qualitative nature of the upper bounds we derive can be representative of real behaviours.

%
%

\end{abstract}

\begin{keywords}
Distributed machine learning, implicit regularisation, generalisation bounds, algorithmic stability, multi-agent optimisation.
\end{keywords}

\section{Introduction} 

In machine learning, a canonical  setting
involves assuming that training data is made of independent samples from a certain unknown distribution, and the goal is to construct a model that can perform well on new unseen data from the same distribution \citep{Vapnik:1995:NSL:211359}. Given a certain loss function that measures the performance of a model against an individual data point, the classical framework of regularised empirical risk minimisation involves looking for the model that minimises the empirical risk, i.e., the average loss over the training set, under some notions of regularisation, and investigating the performance of this model on the expected risk or Test Risk, i.e., on the expected value of the loss function taken with respect to a new data point.

In the distributed setting, data is stored and processed in different locations by different agents. Each agent is represented by a node in a graph, and communication is allowed between neighbouring agents in this graph. In the decentralised setting typical of peer-to-peer networks, there is no central authority that can aggregate information from all the nodes and coordinate the distribution of computations. In sensor networks, for instance, data is collected on different sensors and each sensor communicates with nearby sensors by sharing model parameters. In the setting where the distributed data is assumed to be generated from the same unknown distribution, the goal is to design iterative algorithms so that agents can leverage local exchange of information to learn models that have better prediction capabilities as compared to the models they would obtain by only using the data they own.


In recent years, primarily due to the explosion in the size of modern datasets, the decentralised nature in which modern data is collected, and the rise of distributed computing platforms, the setting of distributed machine learning has received increased attention. From an optimisation point of view, problems in decentralised multi-agent learning are typically treated as a particular instance of consensus optimisation, and a variety of techniques have been developed to address this general framework, starting from the early work of \cite{tsitsiklis1984problems,tsitsiklis1986distributed} to more recent work that relates to the setting that we consider, which includes \cite{johansson2007simple,nedic2009distributed,Nedic2009,johansson2009randomized,ram2010distributed,lobel2011distributed,matei2011performance,Boyd:2011:DOS:2185815.2185816,DAW12,shi2015extra,mokhtari2016dsa}. From a statistical point of view, however, as emphasised in \cite{shamir2014distributed}, distributed learning problems have more structure than general consensus problems, due to the possible statistical similarities in the data owned by different agents, for instance. Aside from the client-server (star network) setting where a central aggregator can coordinate the exchange of information with every other node so that divide and conquer protocols are admissible \citep{Lin:255609}, the literature on statistical guarantees for distributed methods seems to have focused exclusively on the investigation of models with explicit regularisation, i.e., when constraints and/or penalty terms are added to the minimisation of the empirical loss function \citep{agarwal2011distributed,zhang2012communication,shamir2014communication,zhang2015disco,BSS17}.
The presence of explicit regularisation typically increases the complexity of both the algorithms and the resulting theoretical analysis, particularly for the distributed setting. For example, constraints can require the need for projection steps which are potentially costly for low-powered sensors, and deriving error bounds that depend on the graph topology for distributed algorithms in the presence of constraints is known to be challenging \citep{DAW12}.
We are not aware of any result that investigates the performance of distributed and decentralised algorithms (i.e., not divide and conquer methods) on the Test Risk in the absence of explicit regularisation. This is in sharp contrast with the centralised setting, where recent progress has been made giving optimal statistical learning guarantees for algorithms based on unregularised empirical risk minimisation via notions of implicit regularisation, i.e., proper tuning of algorithmic parameters \citep{ying2008online,tarres2014online,dieuleveut2016nonparametric,lin2016generalization,lin2017optimal}. 

%

\subsection{Contributions}
This paper investigates the learning capabilities of a simple distributed first-order method for multi-agent learning using notions of implicit regularisation that depend on the topology of the underlying communication graph. We consider the unconstrained and unpenalised empirical risk minimisation problem in the setting where $n$ agents have access to $m$ independent data points coming from the same unknown distribution, and where agents can only exchange information with their neighbours. We consider a distributed version of stochastic subgradient descent (Distributed SGD), which is a stochastic variant of one of the most widely-studied first-order method in multi-agent optimisation \citep{nedic2009distributed}. In the implementation that we look at, at every iteration each agent first performs a standard SGD step, where only one data point is uniformly sampled with replacement among those individually-owned to compute the local subgradient, and then performs a classical consensus step, where a local exchange of information is implemented via an average of the updated iterates across neighbouring agents. We treat Distributed SGD as a \emph{statistical} device, and look at its performance on unseen data by bounding the Test Error, i.e., the expected value of the excess risk defined as the difference between the Test Risk evaluated at the output of the algorithm and the minimal Test Risk. Under different assumptions on the convex loss function (we consider the  standard assumptions of Lipschitz and smoothness) we establish upper bounds for the Test Error of Distributed SGD that exhibit explicit dependence on both the algorithmic tuning parameters (the learning rate and the time horizon) and the graph topology (the spectral gap of the communication matrix). Minimising these upper bounds yields implicit regularisation strategies, allowing to recover the centralised statistical rates by proper tuning of the learning rates and of the time horizon (a.k.a.\ early stopping) as a function of the network topology. 
In the case of convex, Lipschitz, and smooth losses, we recover, up to logarithmic terms, the optimal rate of $O(1/\sqrt{nm})$ for single-pass constrained centralised SGD \citep{Lan2012,xiao2010dual}. 
In the case of convex and Lipschitz losses, we recover, up to logarithmic terms, the best-known rate of $O(1/(nm)^{1/3})$ for centralised SGD with implicit regularisation \cite{lin2016generalization,LRX16}.\footnote{
\cite{LRX16} considers implicit regularisation for gradient descent, although they remark that the analysis can be modified to account for stochastic gradients. 
}
We present numerical experiments to show that the qualitative nature of the upper bounds we derive can be representative of real behaviours.

To establish learning rates for Distributed SGD, we follow the general framework pioneered in the centralised setting by  \cite{bousquet2008tradeoffs} and, in particular, by \cite{Hardt:2016:TFG:3045390.3045520}. We consider, in the distributed setting, a decomposition of the Test Error which involves the Generalisation Error (i.e., the expected value of the difference between the loss incurred on the training data versus the loss incurred on a new data point) and the Optimisation Error (i.e., the expected value of the error on the training set).
To bound the Generalisation Error, we use algorithmic stability or sensitivity as initially put forward by \cite{Bousquet:2002} and later applied for centralised stochastic subgradient descent in \cite{Hardt:2016:TFG:3045390.3045520}. The notion of stability that we use measures how much the output of an algorithm differs when a single observation is resampled. In our case, as the observations are spread throughout the communication graph, we need to consider stability not only with respect to time (i.e., the iteration time of the algorithm), but also with respect to space (i.e., the communication graph). This technology allows us to establish generalisation bounds for Distributed SGD that do not depend on the topology of the communication graph, and we recover the same type of results that hold in the centralised setting. 
This is in contrast to optimisation bounds for distributed subgradient methods, which typically depend on the graph topology, as initially seen in the work of \cite{johansson2009randomized,johansson2007simple,DAW12}. To bound the Optimisation Error, we follow the approach pioneered in \cite{nedic2009distributed} and compare the behaviour of Distributed SGD with its network average, and we take inspiration from the analysis of the network term in the work of \cite{DAW12} (in the case of dual methods for constrained problems with Lipschitz losses) to derive upper bounds that depend on the graph topology via the inverse of the spectral gap of the communication matrix. In our setting, as we investigate implicit regularisation strategies, we deal with unconstrained problems and the evolution of the network-averaged process admits a simple form that facilitates the analysis. This approach avoids the difficulties with the nonlinearity of projection that have been previously challenging in distributed learning models, and that motivated the investigation of dual methods such as in \cite{DAW12}.
The bounds that we establish for the Optimisation Error of Distributed SGD seem novel and are of independent interest.

Finally, our results show that one can also think of the graph itself as a regularisation parameter. To give an example, agents can achieve the same statistical guarantees by trading off communication against iterations: they can choose to communicate by using a low-energy sparse communication protocol per iteration (for instance, communicating using a grid-like protocol even if the underlying topology is that of a complete graph and all agents are connected with each others), but would need to communicate for a longer time horizon in order to be guaranteed to reach the same level of statistical accuracy.

The main contributions of this work are here summarised.
\begin{enumerate}
\item \textbf{Graph-dependent implicit regularisation.} We propose graph-dependent implicit regularisation strategies for problems in distributed machine learning, specifically, step size tuning and early stopping as a function of the spectral gap of the communication matrix. Our results also show that the graph itself can be interpreted as a regularisation parameter.
\item \textbf{Optimal statistical rates using a simple algorithm.} Using implicit regularisation, we show how a simple, primal, unconstrained, first-order method (Distributed SGD) recovers, up to logarithmic terms, centralised statistical rates, in particular matching the optimal rates in the case of smooth loss functions for constrained single-pass Centralised SGD.
\item To establish statistical rates and control the Test Error of Distributed SGD, we use a distributed version of the error decomposition proposed in \cite{Hardt:2016:TFG:3045390.3045520}. We establish error bounds on the Generalisation Error and Optimisation Error, respectively.
\begin{enumerate}
\item \textbf{Distributed generalisation bounds.} We establish graph-independent Generalisation Error bounds for Distributed SGD that match those within \cite{Hardt:2016:TFG:3045390.3045520} for the centralised case. In the case of convex losses that are Lipschitz and smooth, we prove upper bounds that grow linearly with the number of iterations and step size. 
\item \textbf{Distributed optimisation bounds.} We establish graph-dependent Optimisation Error bounds for Distributed SGD. In the case of convex and Lipschitz loss functions, our analysis is inspired by \cite{nedic2009distributed,DAW12}. When smoothness is considered, our analysis is inspired by \cite{bubeck2015convex,dekel2012optimal}.
\end{enumerate}
\end{enumerate}

The remainder of the work is laid out as follows. Section \ref{sec:StabDistAlgo} introduces the framework of multi-agent learning. Section \ref{sec:DistSGD} introduces the Distributed SGD algorithm.  
Section \ref{sec:Results} presents the main results of this work, Test Error bounds for Distributed SGD with convex, Lipschitz, and either smooth or non-smooth losses. Section \ref{sec:ErrorBounds} presents the specific Generalisation and Optimisation Error bounds, as well as the notion of stability that we use. Section \ref{sec:Simulations} gives a simulation study for the case of smooth losses. Section \ref{sec:Conclusion} contains the conclusion. Appendix \ref{sec:AppGenProofs} provides proofs for all Generalisation and Test Error bounds.  Appendix \ref{sec:distOptimBounds} gives proofs for Optimisation Error bounds under a general first-order stochastic oracle model.

\section{Multi-Agent Learning}
\label{sec:StabDistAlgo}
In this section we introduce the framework of distributed and decentralised machine learning that we consider.
We address the case in which agents or nodes in a network are given their own independent datasets and they want to cooperate, by iteratively exchanging information with their neighbours, to develop a good learning model for new unseen data.

Let $(V,E)$ be a simple undirected graph with $n$ nodes, $V =\{1,\dots,n\} \equiv [n]$ being the vertex set and $E \subseteq V \times V$ being the edge set. Let $\mathcal{Z}$ be the space of observations, and to each $v \in V$ let $\mathcal{D}_v:= \{Z_{v,1},\dots,Z_{v,m}\} \in \mathcal{Z}^{m}$ 
 denote the training set associated to node $v$, which consists of $m$ i.i.d.\ data points sampled from a certain unknown distribution supported on $\mathcal{Z}$. Let $\mathcal{D}:= \cup_{v \in V} \mathcal{D}_v$ denote the collection of all data points, that is, the entire/global training dataset. Let $d>0$ be a given positive integer, and define $\mathcal{X}=\R^d$. Each agent wants to find a model $x^\star\in \mathcal{X}$ that minimises of the Test Risk $r$, which is defined as
 $$
 	r(x) := \E\,\ell(x,Z),
 $$
Here, the function $\ell:\mathcal{X} \times \mathcal{Z} \rightarrow \mathbb{R}$ is a given loss function, and $\ell(x,Z)$ represents the loss of the model $x$ on the random sample $Z$, which represents a new (unseen, independent) data point from the same distribution. We assume that the minimum of $r$ can be achieved. 
As the distribution of the data is unknown, the expected risk $r$ can not be computed, and a popular approach in machine learning is to consider the empirical risk as a proxy. In the distributed setting, the global empirical risk $R$ is defined as
$$
	R(x):=  \frac{1}{nm} \sum_{v \in V} \sum_{i=1}^m \ell(x,Z_{v,i}) 
	= \frac{1}{n} \sum_{v \in V} R_v(x).
$$
Here, we have further defined the local empirical risk $R_v(x):= \frac{1}{m} \sum_{i=1}^m \ell(x,Z_{v,i})$, for any $v \in V$. Let us denote by $X^\star\in \argmin_{x \in \mathcal{X}}R(x)$ a minimiser of the global empirical risk. In the decentralised setting that we consider, each agent $v\in V$ iteratively exchanges information with their neighbours for a certain amount of time steps $t$ to construct a model $X_v^t \in \mathcal{X}$ that can be a good proxy for the minimiser of the expected risk, i.e., for $x^\star \in \argmin_{x\in \mathcal{X}} r(x)$. A way to assess the quality of a model $X_v^t$ is to consider the Test Error, which we define as the expected value of the excess risk $r(X_v^t) - r(x^\star)$, namely,
$$
	\E\, r(X_v^t) - r(x^\star).
$$

In the next section we introduce the specific distributed algorithm that we consider to generate the models' estimates $X_v^t$'s, and we then present the main results on the bounds for the Test Error. The general paradigm that we adopt to bound the Test Error is given by a generalisation to the distributed setting of the error decomposition given in \cite{Hardt:2016:TFG:3045390.3045520} for the centralised setting. This decomposition allows to bound the Test Error of a model into the sum of two errors: the \textit{Generalisation Error}, which controls the difference between the performance of the model on a new data point and the performance of the model on the training data in $\mathcal{D}$; and the \textit{Optimisation Error}, which controls how well the model optimises the empirical risk.

\begin{proposition}[\cite{Hardt:2016:TFG:3045390.3045520}]
\label{prop:disterrorsplit}
For each $v \in V$, $t\ge 1$ we have
\begin{align*}
	\underbrace{ \E\,r(X_v^t) - r(x^\star)}_{
	\text{Test Error}
	} &\le 
	\underbrace{\E [r(X_v^t) - R(X_v^t)]}_{
	\text{Generalisation Error}
	}
	+
	\underbrace{\E[R(X_v^t) - R(X^\star) ]}_{
	\text{Optimisation Error}
	}.
\end{align*}
\end{proposition}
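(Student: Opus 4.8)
The plan is to prove the inequality by a simple add-and-subtract argument, inserting the two intermediate quantities $\E[R(X_v^t)]$ and $\E[R(X^\star)]$ into the Test Error and then showing that the leftover term is nonpositive. Concretely, I would start by writing
\begin{align*}
\E\,r(X_v^t) - r(x^\star)
= \underbrace{\E[r(X_v^t) - R(X_v^t)]}_{\text{Generalisation Error}}
+ \big(\E[R(X_v^t)] - r(x^\star)\big),
\end{align*}
which is an exact identity, and then decompose the second term further as
\begin{align*}
\E[R(X_v^t)] - r(x^\star)
= \underbrace{\E[R(X_v^t) - R(X^\star)]}_{\text{Optimisation Error}}
+ \big(\E[R(X^\star)] - r(x^\star)\big).
\end{align*}
At this point the Generalisation and Optimisation Errors have appeared exactly as in the statement, and it remains only to show that the residual term $\E[R(X^\star)] - r(x^\star)$ is nonpositive.

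To control the residual I would use two facts. First, since $X^\star \in \argmin_{x \in \mathcal{X}} R(x)$ by definition, we have the pointwise (i.e., for every realisation of the dataset $\mathcal{D}$) inequality $R(X^\star) \le R(x^\star)$, where $x^\star$ is the fixed population minimiser; taking expectations preserves this, giving $\E[R(X^\star)] \le \E[R(x^\star)]$. Second, because $x^\star$ is deterministic and each data point $Z_{v,i}$ is drawn from the same distribution as the test point $Z$, linearity of expectation yields
\begin{align*}
\E[R(x^\star)] = \frac{1}{nm}\sum_{v \in V}\sum_{i=1}^m \E\,\ell(x^\star, Z_{v,i}) = r(x^\star),
\end{align*}
since each summand equals $r(x^\star)$. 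Combining the two facts gives $\E[R(X^\star)] \le r(x^\star)$, so the residual term is at most zero, and substituting back into the two identities above delivers the claimed bound.

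The only genuinely delicate point is the treatment of the residual, and it hinges on not confusing the two optimisers: $X^\star$ is data-dependent (random) while $x^\star$ is fixed. The inequality $\E[R(X^\star)] \le r(x^\star)$ works precisely because optimality of $X^\star$ is used inside the expectation, whereas the equality $\E[R(x^\star)] = r(x^\star)$ requires $x^\star$ to be independent of the sample. I would state this distinction explicitly, since it is the conceptual crux, even though the arithmetic is immediate. Everything else is just rearrangement of exact identities, so there is no real obstacle beyond being careful that the randomness conventions are consistent.
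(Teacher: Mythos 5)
Your proposal is correct and is essentially the same argument as the paper's: both use the telescoping decomposition $r(X_v^t)-r(x^\star) = [r(X_v^t)-R(X_v^t)] + [R(X_v^t)-R(X^\star)] + [R(X^\star)-r(x^\star)]$ and then bound the residual via $\E[R(X^\star)] \le \E[R(x^\star)] = r(x^\star)$, using minimality of $X^\star$ for the empirical risk together with the fact that $x^\star$ is deterministic so that $R(x^\star)$ is an unbiased estimate of $r(x^\star)$. Your explicit remark distinguishing the random minimiser $X^\star$ from the fixed $x^\star$ is exactly the point the paper's proof relies on, just stated more verbosely.
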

\begin{proof}
For completeness, the proof from \cite{Hardt:2016:TFG:3045390.3045520} is given in Appendix \ref{proof:prop:disterrorsplit}. 
\end{proof}

By using the error decomposition in Proposition \ref{prop:disterrorsplit}, we are able to consider the unregularised empirical risk minimisation problem introduced above and develop implicit regularisation strategies for a simple iterative algorithm, which we introduce next.

\begin{remark}[Statistical optimisation]
From the statistical point of view, the distributed setting where each agent is given a subset of the data has received a lot of attention in the literature (see introduction), though most of the literature on statistical optimisation has focused on the client-server (also known as master-slave) architecture typical of data centers, where a central aggregator in the network (the server) can communicate with every other nodes (the clients) and can thus coordinate the processing and exchange of information. This amounts to a star network topology that can be used to model shared-memory protocols. This type of architecture makes divide-and-conquer strategies possible, and most of the literature on statistical optimisation has focused on investigating statistical rates on the Test Error for one-shot-averaging techniques. In this work, we focus on the decentralised setting where all nodes iteratively perform the same type of computations and communications with respect to the underlying graph structure, without the presence of any special node. We are not aware of any prior work that directly investigates the statistical performance of decentralised methods on the Test Error. Most of the literature on decentralised methods seem to have focused on bounding the Optimisation Error on the training data, as we explain in Remark \ref{rem:consensus}.
\end{remark}

\begin{remark}[Consensus optimisation]
\label{rem:consensus}
From the optimisation point of view, the literature on multi-agent learning has largely focused on the investigation of the Optimisation Error via consensus methods in the presence of explicit regularisation, typically in the form of a convex constraint set $\mathcal{R}$ (see literature review in the introduction). Statistically, this approach is justified, for instance, by the distributed version of the classical error decomposition given in \cite{bousquet2008tradeoffs}:
\begin{align*}
	\underbrace{ \E\, r(X_v^t) - r(x^\star)}_{
	\text{Test Error}
	} &\le 
	2\underbrace{\E \sup_{x\in\mathcal{R}}|r(x) - R(x)|}_{
	\text{Uniform Generalisation Error}
	}
	+
	\underbrace{\E[R(X_v^t) - R(X^\star_\mathcal{R}) ]}_{
	\text{Regularised Optimisation Error}
	}
	+
	\underbrace{r(x^\star_\mathcal{R}) - r(x^{\star})}_{
	\text{Approximation Error}
	},
\end{align*}
with $x^\star_\mathcal{R}\in\argmin_{x\in\mathcal{R}}r(x)$ and $X^\star_\mathcal{R}\in\argmin_{x\in\mathcal{R}}R(x)$. In this setting, consensus optimisation deals with algorithms that minimise the quantity $R(X_v^t) - R(X^\star_\mathcal{R})$, where $R(x)=\frac{1}{n} \sum_{v \in V} R_v(x)$. Bounds on the Regularised Optimisation Error can then be combined with bounds on the Uniform Generalisation Error using notions of complexity for the constraint set $\mathcal{R}$ (e.g., VC dimension, Rademacher complexity, etc.).
As highlighted in \cite{shamir2014distributed}, and as we mentioned in the introduction, however, distributed learning problems have more structure than general consensus problems, as the local functions $R_v$ are random and have a specific design. In this work, we analyse a stochastic algorithm that is tailor-made for distributed learning problems (not for general consensus problems), and use the error decomposition in Proposition \ref{prop:disterrorsplit} to develop implicit regularisation strategies for the unregularised empirical risk minimisation problem.
\end{remark}


\section{Distributed Stochastic Subgradient Descent}
\label{sec:DistSGD}
The algorithm that we consider to generate the model estimates $X_v^t$'s assumes that each node $v \in V$ can query subgradients $\partial \ell$ of the loss function $\ell$ with respect to the first parameter, evaluated at points in the local dataset $\mathcal{D}_v$. We consider the stochastic setting where at each time step agent $v$ does not evaluate the full subgradient of the local empirical risk $R_v$, but instead only a subgradient $\partial\ell$ at a single randomly chosen sample in the locally-owned dataset $\mathcal{D}_v$. This is well tailored to situations where $m$ is large, as this reduces the per-iteration complexity to a constant factor.

The algorithm is defined as follows. 
Let $\partial \ell(x,Z_{v,k})$ represent an element of the subgradient of $\ell(\,\cdot\,,Z_{v,k})$ at $x$, with $k \in \{1,\ldots,m\} \equiv [m]$. Let $P \in \mathbb{R}^{n \times n}$ be a doubly stochastic matrix supported on the graph $(V,E)$, that is, $P_{ij} \not= 0$ only if $\{i,j\} \in E$. Distributed stochastic subgradient descent (Distributed SGD) generates a collections of vectors $\{X^s_v\}_{v \in V, s \geq 1}$ in $\mathcal{X}$ as follows. Given initial vectors $\{X^1_v\}_{v\in V}$, possibly random, for $s \geq 1$,
\begin{align}
	X^{s+1}_v = \sum_{w\in V} P_{vw} (X^{s}_w - \eta \partial \ell(X^{s}_w,Z_{w,K^{s+1}_w})),
	\label{alg:ditributedSGD}
\end{align}
where for each $v\in V$, $\{K^2_v,K^3_v,\ldots\}$ is a collection of i.i.d.\ random variables uniform in $[m]$, and $\eta >0$ is the step size. The above algorithm can be described as performing two steps: a stochastic gradient update $Y^{s+1}_w = X^{s}_w - \eta \partial \ell(X^{s}_w,Z_{w,K^{s+1}_w})$, and a consensus step $\sum_{w\in V} P_{vw} Y^{s+1}_w$. This framework for decentralised optimisation (albeit for a slightly different protocol, see remark \ref{rem:AltProtocol}) has been largely explored with the early works of \cite{nedic2009distributed,ram2009distributed,lobel2011distributed,DAW12}.
The fact that we consider implicit regularisation strategies allows us to focus on the unconstrained risk minimisation problem. In turn, this allows us to consider an algorithm that is much simpler to analyse than the ones previously considered in the literature, avoiding projections or dual approaches (see introduction for the relevant literature review). We also highlight the randomised sampling mechanism in algorithm \eqref{alg:ditributedSGD}, which is tailor-made for the machine learning problem at hand and not for generic consensus problems.

\begin{remark}
\label{rem:AltProtocol}
In the stochastic setting, the protocol put forward by \cite{nedic2009distributed} updates the iterates as $X^{s+1}_v = \sum_{w \in V}P_{vw} X^s_w - \eta \partial \ell(X^{s}_v,Z_{v,K^{s+1}_v})$, which is slightly different from the protocol that we consider where also the gradients are averaged across neighbours. The two main motivations for the original protocol are that it is fully decentralised, in that nodes are only required to communicate locally, and that it reduces to a consensus protocol to solve network averaging problems when $\ell = 0$. The protocol \eqref{alg:ditributedSGD} that we consider preserves these properties and it makes the error analyses simpler. The difference between these two protocols in a general setting has been investigated in the literature, see \cite{sayed2014adaptive} for instance.
\end{remark}
In the next section we present results on the performance of Distributed SGD under various assumptions on the loss function $\ell$.

\section{Results}
\label{sec:Results}
This section presents the main results of this work: Test Error bounds for Distributed SGD with smooth and non-smooth losses, Section \ref{subsec:smoothlosses} and Section \ref{subsec:nonsmoothlosses}, respectively.

Henceforth, let $\|\,\cdot\,\|$ be the $\ell_2$ norm. A function $f:\R^d\rightarrow\R$ is said to be $L$-Lipschitz, with $L> 0$, if $|f(x) - f(y)| \leq L \| x - y \|$ for all $x,y\in\R^d$, and $\beta$-smooth, with $\beta> 0$, if $\| \nabla f(x) - \nabla f(y)\| \leq \beta \| x - y\|$ for all $x,y\in\R^d$.
Let $\sigma_2(P)$ be the second largest eigenvalue in absolute value for the matrix $P$. Unless stated otherwise, we use the big-O notation $O(\,\cdot\,)$ to denote order of magnitudes up to constants in $n$ and $m$, and the notation $\widetilde{O}(\,\cdot\,)$ to denote order of magnitudes up to both constants and logarithmic terms in $n$ and $m$.
Equality modulo constants and  logarithmic terms is denoted by $\simeq$.

\subsection{Smooth Losses}
\label{subsec:smoothlosses}
We analyse the statistical rates for smooth losses. First, we present the Test Error bound in its full form. Then, we present a corollary that summarises the order of magnitudes of the bounds obtained under different choices of implicit regularisation, tuning the step size and the stopping time as a function of the graph topology. Full details are given in Appendix \ref{sec:AppGenProofs}.

For smooth losses, we present a bound that depends on both the variance of the gradient estimates and the statistical deviations between the local empirical losses $\{R_v\}_{v \in V}$. Let $\sigma,\kappa > 0$ be such that the following holds for any $v \in V$ and $s \geq 1$,
\begin{align}
\label{assumption:gradientvariance}
\E \big[ \|\nabla \ell(X^{s}_v,Z_{v,K^{s+1}_v}) - \nabla R_v(X^{s}_v)\|^2  \big] 
& \leq \sigma^2,\\
\label{assumption:gradientvariation}
\E \big[ \|\nabla \ell(X^s_v,Z_{v,K^{s+1}_v}) - \frac{1}{n} \sum_{w \in V} \nabla R_{w}(X^{s}_w)\|^2 \big] 
& \leq \kappa^2.
\end{align}
The quantity $\sigma^2$ in \eqref{assumption:gradientvariance} yields a uniform control on the variance of the stochastic gradients, while the quantity $\kappa^2$ in \eqref{assumption:gradientvariation} yields a uniform control on both the variance of the gradients and the deviation between local objectives. 
Note that if $\ell(\,\cdot\,,z)$ is $L$-Lipschitz for any $z\in\mathcal{Z}$, then both $\sigma^2$ and $\kappa^2$ are bounded by $4 L^2$ by the triangle inequality. A detailed discussion of these assumptions is given in Appendix \ref{sec:distOptimBounds} in the more general context of stochastic optimisation.
\begin{theorem}[Test Error bounds for convex, Lipschitz, and smooth losses]
\label{thm:convexrisknmin}
Assume that for any $z\in\mathcal{Z}$ the function $\ell(\,\cdot\,,z)$ is convex, $L$-Lipschitz, $\beta$-smooth and satisfies 
\eqref{assumption:gradientvariance} and \eqref{assumption:gradientvariation}. Let $X^1_v = 0$ for all $v \in V$, $\| X^\star \| \leq G$. 
Then, Distributed SGD with $\eta = 1/(\beta + 1/\rho)$, $\rho>0$, and $ \eta  \beta \leq 2 $, yields, for any $v\in V$ and $t \geq 1$,
\begin{align*}
	& \E\,r\Big( \frac{1}{t} \sum_{s=1}^{t} X^{s+1}_v \Big)  -  r(x^\star) \le 
	\underbrace{\frac{L^2}{nm ( \beta + 1/\rho) } (t+1)}_{\text{Generalisation Error bound}}\\
	 & \quad \quad + 
	 \underbrace{
	\frac{\rho}{2} \sigma^2 + \frac{(\beta + 1/\rho) G^2}{2t} + 
	\frac{3 \kappa}{\beta + 1/\rho} \frac{\log((t+1)\sqrt{n})}{1-\sigma_2(P)} 
	\Big( 
	L + 
	\frac{3}{2} \frac{ \beta(3+ \beta \rho)\kappa }{\beta + 1/\rho} \frac{\log((t+1)\sqrt{n})}{1-\sigma_2(P)} \Big) }_{\text{Optimisation Error bound}}.
\end{align*}
\end{theorem}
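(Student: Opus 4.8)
The plan is to bound the two terms of Proposition \ref{prop:disterrorsplit} separately by completely different techniques: algorithmic stability (which will turn out graph-\emph{independent}) for the Generalisation Error, and a comparison with the network average (which will be graph-\emph{dependent}) for the Optimisation Error. Throughout I would write $\bar{X}^s := \frac{1}{n}\sum_{v\in V}X^s_v$ and exploit that, since $P$ is doubly stochastic, averaging \eqref{alg:ditributedSGD} over $v$ collapses the consensus step into the clean recursion $\bar{X}^{s+1} = \bar{X}^s - \frac{\eta}{n}\sum_{w\in V}\partial\ell(X^s_w,Z_{w,K^{s+1}_w})$, i.e.\ an ordinary stochastic gradient step driven by the average gradient.

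For the \textbf{Generalisation Error} I would set up a distributed notion of uniform stability: run the algorithm on $\mathcal{D}$ and on a copy $\mathcal{D}'$ in which a single point owned by some node $w^\star$ has been resampled, coupling the two runs through the same sampling indices and initialisation. With $\delta^s_v := \|X^s_v - X'^s_v\|$, the gradient step $x\mapsto x-\eta\nabla\ell(x,z)$ is non-expansive whenever $\eta\beta\le 2$ (convexity plus $\beta$-smoothness), so a node's update contracts distances unless it samples the resampled point, in which case Lipschitzness injects at most $2\eta L$. The decisive point is that the consensus step does not amplify the \emph{total} perturbation: because $\sum_v P_{vw}=1$, one has $\sum_v\delta^{s+1}_v \le \sum_w\|Y^{s+1}_w - Y'^{s+1}_w\|$, so perturbation mass is only redistributed across the graph, and the injection occurs at $w^\star$ with probability $1/m$ per step. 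To convert this into generalisation I would use the resampling identity $\E[r(X^t_v) - R(X^t_v)] = \frac{1}{nm}\sum_{w,i}\E[\ell(X^{t,(w,i)}_v,Z_{w,i}) - \ell(X^t_v,Z_{w,i})]$, bound each term by $L\,\E\|X^{t,(w,i)}_v - X^t_v\|$, and express the per-node perturbation through how the injected mass propagates via powers of $P$. The sum over the resampled node $w$ then produces a \emph{row sum} $\sum_w[P^{\,k}]_{vw}=1$, which cancels all graph dependence and leaves the centralised-type bound of order $L^2\eta(t+1)/(nm) = L^2(t+1)/(nm(\beta+1/\rho))$; applying the same argument to the Cesàro average $\frac1t\sum_s X^{s+1}_v$ costs only constants.

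For the \textbf{Optimisation Error} I would use convexity and Jensen to pass to $\frac1t\sum_{s=1}^t[R(X^{s+1}_v) - R(X^\star)]$ and split each summand as $[R(X^{s+1}_v) - R(\bar{X}^{s+1})] + [R(\bar{X}^{s+1}) - R(X^\star)]$. The first piece is controlled by Lipschitzness, $R(X^{s+1}_v) - R(\bar{X}^{s+1}) \le L\|X^{s+1}_v - \bar{X}^{s+1}\|$, and feeds on the consensus error. The second is an ordinary smooth convex SGD analysis of the network-average recursion: from $\|\bar{X}^{s+1} - X^\star\|^2 = \|\bar{X}^s - X^\star\|^2 - 2\eta\langle\bar{g}^s,\bar{X}^s - X^\star\rangle + \eta^2\|\bar{g}^s\|^2$, taking conditional expectations, using convexity to turn the inner product into $R(\bar{X}^s) - R(X^\star)$, co-coercivity to control $\|\bar g^s\|^2$, \eqref{assumption:gradientvariance} for the noise, and telescoping. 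The initialisation $X^1_v=0$ with $\|X^\star\|\le G$ yields $\frac{(\beta+1/\rho)G^2}{2t} = \frac{G^2}{2\eta t}$; the noise yields $\frac\rho2\sigma^2$, where the choice $\eta = 1/(\beta+1/\rho)$ is engineered precisely so that the smooth-SGD variance constant $\frac{\eta}{1-\eta\beta}$ equals $\rho$. The systematic error from evaluating gradients at the $X^s_w$ rather than at $\bar{X}^s$ is bounded by $\beta$-smoothness through $\frac\beta n\sum_w\|X^s_w - \bar{X}^s\|$, producing the correction that is quadratic in $\kappa$.

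The main obstacle, and the only genuinely graph-dependent estimate, is bounding the \textbf{consensus error} $\E\|X^s_v - \bar{X}^s\|$. Unrolling \eqref{alg:ditributedSGD} with $X^1_v=0$ gives $X^{s+1}_v - \bar{X}^{s+1} = -\eta\sum_{r=1}^s\big[(P^{\,s-r+1} - \tfrac1n\mathbf{1}\mathbf{1}^\top)G^r\big]_v$, where $G^r$ stacks the sampled gradients. Since $P^{\,k} - \tfrac1n\mathbf{1}\mathbf{1}^\top$ annihilates constant vectors, I may centre the gradients, replacing $g^r_w$ by $g^r_w - \frac1n\sum_u\nabla R_u(X^r_u)$, whose second moment is controlled by $\kappa^2$ via \eqref{assumption:gradientvariation} — this is exactly why $\kappa$, and not $L$, governs the leading network term. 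The naive operator-norm bound $\sigma_2(P)^k\sqrt{n}\,\kappa$ carries an unwanted $\sqrt n$; to remove it I would split the geometric sum at a cutoff $\tau\simeq\log((t+1)\sqrt n)/(1-\sigma_2(P))$ chosen so that $\sigma_2(P)^\tau\sqrt n\lesssim 1$. For the recent terms $k\le\tau$ I bound each summand by $2\kappa$ using only $\sum_w|[P^k]_{vw}-1/n|\le 2$ together with $\E\|g^r_w - \tfrac1n\sum_u\nabla R_u(X^r_u)\|\le\kappa$, contributing $O(\kappa\tau)$; for the old terms $k>\tau$ the spectral decay beats the $\sqrt n$ and sums to $O(\kappa/(1-\sigma_2(P)))$. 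This gives $\E\|X^s_v - \bar{X}^s\| = \widetilde{O}\big(\eta\kappa/(1-\sigma_2(P))\big)$ with the factor $\log((t+1)\sqrt n)$, and substituting back into the two pieces above produces the linear-in-$\kappa$ and quadratic-in-$\kappa$ network contributions. Assembling the Generalisation and Optimisation bounds through Proposition \ref{prop:disterrorsplit} completes the argument.
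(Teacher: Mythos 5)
Your proposal follows essentially the same route as the paper: the same decomposition via Proposition \ref{prop:disterrorsplit}; the same stability argument for the Generalisation Error (non-expansivity of the gradient step under $\eta\beta\le 2$, a $2\eta L$ injection at the resampled node, propagation through powers of $P$ whose row sums over $w$ cancel all graph dependence, exactly as in Propositions \ref{prop:genfromstab} and \ref{prop:distributedstability}); and the same network-average comparison for the Optimisation Error, including the identical centring-plus-spectral-cutoff bound on the consensus error that the paper proves in Proposition \ref{prop:network}. The only cosmetic difference is that you run the smooth SGD analysis on $\overline{X}^s$ via the distance recursion with co-coercivity, whereas the paper (Theorem \ref{thm:diststochopt_smooth}, following Bubeck/Dekel et al.) uses a function-value recursion, and you bound the term $R(X^{s+1}_v)-R(\overline{X}^{s+1})$ by Lipschitzness alone rather than by convexity plus smoothness; both executions yield the $\frac{\rho}{2}\sigma^2 + \frac{G^2}{2\eta t}$ terms and the quadratic-in-$\kappa$ network correction.
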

\begin{proof}
See Appendix \ref{app:proof:TestError}.
\end{proof}
We highlight that the Generalisation Error bound is independent of the graph topology, while the Optimisation Error bound naturally depends upon inverse of the spectral gap of the communication matrix: $1/(1-\sigma_2(P))$.  The following corollary gives the order of magnitudes for the Test Error bounds obtained with three different choices of step size and corresponding early stopping. The different choices for the parameter $\rho>0$ correspond to the following (modulo the simplifications used to perform the minimisations, as explained in detail in Section \ref{App:Calculations:Smooth}):
\begin{itemize}
\item $\rho^\star$ is the choice for centralised SGD, see for instance \cite{dekel2012optimal,bubeck2015convex};
\item $\rho^\star_{\mathrm{Opt}}$ is the choice that minimises the Optimisation Error bound in Theorem \ref{thm:convexrisknmin};
\item $\rho^\star_{\mathrm{Test}}$ is the choice that minimises the Test Error bound in Theorem \ref{thm:convexrisknmin}.
\end{itemize}
\begin{corollary}[Implicit regularisation for convex, Lipschitz, and smooth losses]
\label{Cor:smoothrates}
In the setting of Theorem \ref{thm:convexrisknmin}, the following holds for different choices of $\rho$, function of the time horizon $t$:
\begin{table}[!h]
\centering
\begin{tabular}{l|l|l|l}
\ $\rho$ \!\!\! & Size  &  Test Error at $\rho,t$ & Test Error at $\rho,t^\star(\rho)$\\
\hline
 \rule{0pt}{16pt} $\rho^{\star}$ & $ O \Big(\frac{1}{\sqrt{t}} \Big) $ & 
$\widetilde O\Big( \frac{1}{(1-\sigma_2(P))\sqrt{t}} + \frac{\sqrt{t}}{nm} \Big)$ 
& $\widetilde O\Big(\frac{1}{\sqrt{nm (1-\sigma_2(P))}}\Big)$ \\
\hline
 \rule{0pt}{16pt} $\rho^{\star}_{\mathrm{Opt}}$ 
 & $ \widetilde{O} \Big(\sqrt{ \frac{1-\sigma_2(P) }{t}} \Big) $ 
 &  $\widetilde{O} \Big( \frac{1}{\sqrt{t(1-\sigma_2(P)}} 
 + \frac{\sqrt{t(1-\sigma_2(P))}}{n m} \Big)$ 
 & $\widetilde{O} \Big(\frac{1}{\sqrt{nm}}\Big)$ \\
\hline
 \rule{0pt}{22pt} $\rho^{\star}_{\mathrm{Test}}$ & 
$ \widetilde{O} 
\Big( \frac{1}{\sqrt{t}} \frac{1}{\sqrt{ \frac{9}{1-\sigma_2(P)} + \frac{t}{ nm}}} \Big) $ 
& $\widetilde{O}\Big( \frac{1}{\sqrt{t(1-\sigma_2(P) }} + \frac{1}{\sqrt{nm}} \Big)$  
&  $\widetilde{O} \Big(\frac{1}{\sqrt{nm}} \Big)$ \\
\end{tabular}
\caption{$t^\star(\rho^\star) \simeq t^\star(\rho^{\star}_{\mathrm{Opt}}) \simeq
t^\star(\rho^{\star}_{\mathrm{Test}}) \simeq   nm/(1-\sigma_2(P))$.}
\label{table:StepSizeChoices}
\end{table}
\FloatBarrier
\end{corollary}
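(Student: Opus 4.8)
The plan is to treat this as a direct corollary of Theorem \ref{thm:convexrisknmin}: for each of the three prescriptions of $\rho$ I substitute into the explicit Test Error bound, collect the dominant terms to read off the ``Test Error at $\rho,t$'' column, and then minimise over the time horizon $t$ to obtain $t^\star(\rho)$ and the last column. Throughout I write $\gamma := 1-\sigma_2(P)$ for the spectral gap and treat $L,\sigma,\kappa,\beta,G$ as constants in $n,m$, so that every occurrence of $\log((t+1)\sqrt n)$ gets absorbed into $\widetilde O(\cdot)$.

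First I would record the single reduction that drives everything. Since each optimal $\rho$ tends to $0$ as $t$ grows, the step size obeys $\eta = 1/(\beta+1/\rho) \simeq \rho$ and the factor $(\beta+1/\rho)\simeq 1/\rho$ wherever it appears (the constraint $\eta\beta\le 2$ holds automatically, as $\eta\beta = \beta\rho/(1+\beta\rho)<1$). Under this reduction the bound of Theorem \ref{thm:convexrisknmin} collapses, up to constants and logs, to
$$\frac{\rho t}{nm} \;+\; \rho \;+\; \frac{1}{\rho t} \;+\; \frac{\rho}{\gamma} \;+\; \frac{\rho^2}{\gamma^2},$$
where the first summand is the Generalisation term, $\rho$ and $1/(\rho t)$ are the classical variance and initial-condition optimisation terms, and the last two (linear and quadratic in $\rho/\gamma$) are the network terms coming from the third summand of the theorem. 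Since $\gamma\le 1$ the term $\rho/\gamma$ dominates the bare $\rho$, so the effective working bound is $\frac{\rho t}{nm}+\frac{1}{\rho t}+\frac{\rho}{\gamma}+\frac{\rho^2}{\gamma^2}$.

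Then I would carry out the three substitutions and minimisations in turn. (i) For $\rho^\star$ I balance the two classical terms $\rho\sim 1/(\rho t)$ to get $\rho^\star\simeq 1/\sqrt t$; substituting leaves the dominant pair $\frac{1}{\gamma\sqrt t}+\frac{\sqrt t}{nm}$, and balancing these over $t$ gives $t^\star\simeq nm/\gamma$ with value $1/\sqrt{nm\gamma}$. (ii) For $\rho^\star_{\mathrm{Opt}}$ I minimise only the optimisation part, balancing the network term against the initial-condition term, $\rho/\gamma\sim 1/(\rho t)$, which yields $\rho^\star_{\mathrm{Opt}}\simeq\sqrt{\gamma/t}$; this leaves $\frac{1}{\sqrt{\gamma t}}+\frac{\sqrt{\gamma t}}{nm}$, and optimising over $t$ again gives $t^\star\simeq nm/\gamma$ but now value $1/\sqrt{nm}$. (iii) For $\rho^\star_{\mathrm{Test}}$ I minimise the full test error, balancing $1/(\rho t)$ against the combined increasing terms $\rho\big(\frac{t}{nm}+\frac1\gamma\big)$, which gives $\rho^\star_{\mathrm{Test}}\simeq t^{-1/2}\big(\frac{t}{nm}+\frac1\gamma\big)^{-1/2}$ (the constant $9$ in the table arising from the explicit numerical constants in the third summand of the theorem); the balanced value is $\sqrt{\frac{1}{nm}+\frac{1}{\gamma t}}\simeq\frac{1}{\sqrt{nm}}+\frac{1}{\sqrt{\gamma t}}$, and taking $t\ge nm/\gamma$ drives the $t$-dependent part below $1/\sqrt{nm}$, giving $t^\star\simeq nm/\gamma$ and final value $1/\sqrt{nm}$.

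The main obstacle — and the only place where care is needed rather than routine algebra — is the bookkeeping of the network terms and the logarithms. I must check that the quadratic network term $\rho^2/\gamma^2$ (which carries a squared logarithm) stays strictly below the terms retained in each balance at the relevant scale $t\simeq nm/\gamma$; for instance, for $\rho^\star$ it contributes order $\frac{1}{t\gamma^2}$, which is subdominant to the retained $\frac{1}{\gamma\sqrt t}$ precisely when $\gamma\sqrt t\ge 1$ up to constants, a condition met at $t^\star$ whenever $nm\gamma\ge 1$ (and the analogous checks for rows (ii)--(iii) give subdominant contributions of order $1/(nm)$ against a main term $1/\sqrt{nm}$). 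Verifying these subdominance conditions across all three rows, and confirming that the approximations $\beta+1/\rho\simeq 1/\rho$ and $\sqrt{a+b}\simeq\sqrt a+\sqrt b$ only affect constants and logs, is the substance of the computation deferred to Section \ref{App:Calculations:Smooth}; everything else is direct substitution into Theorem \ref{thm:convexrisknmin}.
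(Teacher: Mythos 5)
Your proposal is correct and follows essentially the same route as the paper's own calculations in Appendix~\ref{App:Calculations:Smooth}: both use the simplification $1/(\beta+1/\rho)\simeq\rho$, perform the minimisation in $\rho$ keeping only first-order terms (treating the squared-logarithm quadratic network term as a remainder to be checked for subdominance), and then minimise over $t$ to obtain the early-stopping rules $t^\star\simeq nm/(1-\sigma_2(P))$. Your unified working bound $\frac{\rho t}{nm}+\frac{1}{\rho t}+\frac{\rho}{\gamma}+\frac{\rho^2}{\gamma^2}$ and the explicit subdominance conditions (e.g.\ $nm\gamma\ge 1$) are just a cleaner packaging of the same balancing arguments the paper carries out with explicit constants.
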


\begin{proof}
See Appendix \ref{App:Calculations:Smooth}.
\end{proof}

We note that the Test Error bound given by the choice $\rho^{\star}_{\mathrm{Test}}$ is the only one that is guaranteed to converge as the number of iterations $t$ goes to infinity. With this choice, $t^\star(\rho^\star_{\mathrm{Test}}) \simeq nm/(1-\sigma_2(P))$ iterations are guaranteed to reach the rate $\widetilde{O}(1/\sqrt{nm})$. Minimising (approximately) with respect to time the Test Error bounds that are obtained with the choices $\rho^{\star}$ and $\rho^{\star}_{\mathrm{Opt}}$ gives early stopping rules with the same order of iterations, i.e., $t^\star(\rho^\star) \simeq t^\star(\rho^\star_{\mathrm{Opt}}) \simeq nm/(1-\sigma_2(P))$. The choices $\rho^{\star}_{\mathrm{Test}}$ and $\rho^{\star}_{\mathrm{Opt}}$ with early stopping yield, up to logarithmic terms, the optimal rate ${O}(1/\sqrt{nm})$ for single-pass constrained centralised SGD \citep{Lan2012,xiao2010dual}. On the other hand, the choice $\rho^{\star}$ that aligns with centralised SGD, with no dependence on the graph topology, yields a suboptimal statistical guarantee with a rate $\widetilde{O}(1/\sqrt{nm (1-\sigma_2(P))})$. 

\subsection{Non-Smooth Losses}
\label{subsec:nonsmoothlosses}
We now analyse the statistical rates for non-smooth losses. Before presenting the results, we introduce and motivate the technical assumptions that we need. 
\begin{assumptions}
\label{assumptions}
$ $
\begin{enumerate}[(a)]
\item There exist constants $C\le B$ such that for any $z \in \mathcal{Z}$  the loss function $\ell(\,\cdot\,,z)$ is bounded from above at zero, i.e., $\ell(0,z) \leq B$, and is uniformly bounded from below, i.e., $C \le \ell(x,z)$ for any $x \in \mathbb{R}^d$;
\item There exists a constant $D \geq 0$ such that for any $z_1,\ldots,z_{nm}\in\mathcal{Z}$ and any $\mathcal{\widetilde X}\subseteq\mathcal{X}$ we have
\begin{align*}
\E \sup_{x \in \mathcal{\widetilde X}} \frac{1}{nm} \sum_{i=1}^{nm} \sigma_i \ell(x,z_i) \leq
D\frac{\sup_{x\in\mathcal{\widetilde X}} \| x \|}{\sqrt{nm}},
\end{align*}
where $\{\sigma_i\}_{i \in [nm]}$ is a collection of independent Rademacher random variables, namely, $\mathbf{P}(\sigma_i = 1) = \mathbf{P}(\sigma_i = -1) = 1/2$.
\end{enumerate}
\end{assumptions}
Assumption \textit{(a)} is a common boundedness assumption for controlling the norm of the iterates of gradient descent algorithms through a centring argument.
Assumption \textit{(b)} represents a control on the Rademacher complexity of the function class $\{\ell(x,\,\cdot\,): x\in\mathcal{X}\}$ with respect to the $\ell_2$ norm. These assumptions are satisfied, for instance, in the setting of supervised learning with linear predictors, bounded data, and hinge loss (with is convex, Lipschitz, and non-smooth). See Remark \ref{remark:NonsmoothAssumption} below.

First, we present the Test Error bound for non-smooth losses under Assumptions \ref{assumptions}. Then, we present a corollary that summarises the order of magnitudes of the bounds obtained under different choices of implicit regularisation, tuning the step size and the stopping time as a function of the graph topology. Full details are given in Appendix \ref{sec:AppGenProofs}.

\begin{theorem}[Test Error bounds for convex and Lipschitz losses]
\label{thm:convexrisknmin_nonsmooth} 
Assume that for any $z \in\mathcal{Z}$ the loss function $\ell(\,\cdot\,,z)$ is convex and $L$-Lipschitz. Consider Assumptions \ref{assumptions}. Let $X^1_v = 0$ for all $v \in V$, $\| X^\star \| \leq G$. Then, Distributed SGD with $\eta > 0$ yields, for any $v\in V$ and $t \geq 1$,
\begin{align*}
	\E\,r \Big( \frac{1}{t} \sum_{s=1}^t X^s_v \Big) - r(x^\star) 
	& \leq 
	\underbrace{ 
	2 D \sqrt{ \frac{ (t-1) (\eta^2L^2 + 2 \eta (B-C)) }{nm}} 
	}_{\text{Generalisation Error bound}}+ 
	\underbrace{ 
	\frac{19 }{2} \frac{\eta L^2 \log(t\sqrt{n}) }{1-\sigma_2(P) }	
	+ \frac{G^2}{2\eta t} 
	}_{\text{Optimisation Error bound}}.
\end{align*}
\end{theorem}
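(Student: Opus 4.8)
The plan is to bound the Test Error of the time-averaged iterate $\bar X_v^t := \frac{1}{t}\sum_{s=1}^t X_v^s$ by applying the error decomposition of Proposition \ref{prop:disterrorsplit} with $\bar X_v^t$ in place of $X_v^t$ — the decomposition holds verbatim for any algorithmic output, since it only uses $\E\,R(X^\star) \le r(x^\star)$ — and then bounding the Generalisation and Optimisation Errors separately. The Generalisation Error will be handled by a graph-independent Rademacher-complexity argument, whereas the Optimisation Error will be controlled by comparing Distributed SGD to its network average, with the graph topology entering through the spectral gap $\sigma_2(P)$.

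For the Generalisation Error, the first step is a \emph{deterministic} bound on the norm of the iterates via the centring argument anticipated after Assumption \ref{assumptions}(a). Writing $Y^{s+1}_w = X^s_w - \eta\,\partial\ell(X^s_w,Z_{w,K^{s+1}_w})$ and expanding $\|Y^{s+1}_w\|^2$, convexity together with $\ell(X^s_w,z)\ge C$ and $\ell(0,z)\le B$ gives $-\langle X^s_w,\partial\ell(X^s_w,z)\rangle \le B-C$, while $L$-Lipschitzness gives $\|\partial\ell\|\le L$; hence each stochastic step inflates the squared norm by at most $\eta^2 L^2 + 2\eta(B-C)$. Since the consensus step $X^{s+1}_v = \sum_w P_{vw}Y^{s+1}_w$ is an averaging, Jensen's inequality shows it cannot increase $\max_v\|X^{s+1}_v\|^2$, so $\max_v\|X^s_v\|^2 \le (s-1)(\eta^2 L^2 + 2\eta(B-C))$ surely, starting from $X^1_v=0$. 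Consequently $\bar X_v^t$, a convex combination of $X^1_v,\dots,X^t_v$, lies in the deterministic ball of radius $\Gamma = \sqrt{(t-1)(\eta^2 L^2 + 2\eta(B-C))}$. The Generalisation Error is then at most $\E\sup_{\|x\|\le\Gamma}(r(x)-R(x))$, and a standard symmetrisation step followed by Assumption \ref{assumptions}(b) applied to $\widetilde{\mathcal X} = \{x:\|x\|\le\Gamma\}$ yields $2D\Gamma/\sqrt{nm}$, which is exactly the claimed graph-independent term.

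For the Optimisation Error, I would follow the Nedic-style programme of comparing the iterates to their network average $\bar X^s := \frac1n\sum_v X^s_v$. Double stochasticity of $P$ gives the clean recursion $\bar X^{s+1} = \bar X^s - \frac{\eta}{n}\sum_w \partial\ell(X^s_w,Z_{w,K^{s+1}_w})$, which behaves like centralised SGD with a spatially-spread stochastic gradient. Expanding $\|\bar X^{s+1}-X^\star\|^2$, using convexity of each $R_w$ to lower bound the inner products by $R_w(X^s_w)-R_w(X^\star)$ in conditional expectation, bounding the stochastic term by $\eta^2 L^2$, and replacing $R_w(X^s_w)$ and $R(\bar X^s)$ by $R(X^s_v)$ up to Lipschitz consensus-error terms $L\|X^s_w-\bar X^s\|$, produces, after telescoping and dividing by $t$, a bound of the form $\frac{G^2}{2\eta t} + \frac{\eta L^2}{2} + (\text{consensus error})$, where I use $\bar X^1 = 0$, $\|X^\star\|\le G$, and convexity to pass from $\frac1t\sum_s R(X^s_v)$ down to $R(\bar X_v^t)$.

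The main obstacle, and the step that fixes the final graph-dependence, is bounding the consensus error $\E\|X^s_v-\bar X^s\|$. In matrix form the deviation satisfies a recursion driven by $P-J$ with $J=\frac1n\mathbf 1\mathbf 1^\top$; using $(P-J)^k = P^k - J$ and the fact that the relevant operator norm on $\mathbf 1^\perp$ is governed by $\sigma_2(P)$, unrolling from the zero initialisation expresses the deviation as $\eta$ times a convolution of past gradients (each of norm $\le L$) against $P^k-J$. Bounding a single row of $P^k-J$ in $\ell_1$ by $\min\{2,\ \sqrt n\,\sigma_2(P)^k\}$ and splitting the sum at the crossover index $k^\star \asymp \log(t\sqrt n)/(1-\sigma_2(P))$ — pushing the crossover out to where the geometric tail stays controlled when accumulated over all $t$ iterations — yields $\E\|X^s_v-\bar X^s\| = O\!\big(\eta L\,\log(t\sqrt n)/(1-\sigma_2(P))\big)$ uniformly for $s\le t$; this is precisely where the $\log(t\sqrt n)$ factor and the inverse spectral gap originate. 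Substituting this uniform consensus bound into the telescoped inequality and absorbing the constants, together with the residual $\eta L^2/2$ term, into the coefficient $19/2$ gives the stated Optimisation Error bound, completing the decomposition.
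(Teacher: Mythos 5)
Your proposal is correct and follows essentially the same route as the paper: the same deterministic iterate-norm bound (centring argument with $B-C$ and $L$) followed by symmetrisation and Assumption \ref{assumptions}\,\textit{(b)} for the Generalisation Error, and the same network-average comparison with the $\ell_1$/spectral-gap bound on $\E\|X^s_v - \overline X^s\|$ (splitting the sum at $k^\star \asymp \log(t\sqrt{n})/(1-\sigma_2(P))$) for the Optimisation Error, with constants absorbed into $19/2$ exactly as in the paper. The only immaterial difference is that you apply the error decomposition of Proposition \ref{prop:disterrorsplit} directly to the averaged iterate $\frac{1}{t}\sum_{s=1}^t X^s_v$ (using that a convex combination stays in the ball of radius $\sqrt{(t-1)(\eta^2L^2+2\eta(B-C))}$), whereas the paper first invokes convexity of $r$ and then applies the per-iterate Generalisation and Optimisation lemmas before averaging.
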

\begin{proof}
See Appendix \ref{app:proof:TestError}.
\end{proof}
The following corollary gives the order of magnitudes for the Test Error bound obtained with three different choices of step size and corresponding early stopping. The different choices for the step size $\eta  > 0$ correspond to the following (modulo the simplifications used to perform the minimisations, as explained in detail in Section \ref{App:Calculations:Lipschitz}):
\begin{itemize}
\item  $\eta^\star$ is the choice for centralised SGD, see for instance \cite{bubeck2015convex};
\item $\eta^\star_{\mathrm{Opt}}$ is the choice that minimises the Optimisation Error bound in Theorem \ref{thm:convexrisknmin_nonsmooth};
\item $\eta^\star_{\mathrm{Test}}$ is the choice that minimises the Test Error bound in Theorem \ref{thm:convexrisknmin_nonsmooth}.
\end{itemize}
\begin{corollary}[Implicit regularisation for convex and Lipschitz losses]
\label{Cor:lipschitzrates}
In the setting of Theorem \ref{thm:convexrisknmin_nonsmooth}, the following holds for different choices of $\eta$, function of the time horizon $t$:
\begin{table}[!h]
\begin{tabular}{l|l|l|l}
\ $\eta$ \!\!\! & Size  &  Test Error at $\eta,t$ & Test Error at $\eta,t^\star(\eta)$\\
\hline
 \rule{0pt}{16pt} $\eta^{\star}$ & $ O \Big(\frac{1}{\sqrt{t}} \Big) $ & 
$\widetilde O\Big( \frac{1}{(1-\sigma_2(P))\sqrt{t}} + \sqrt{ \frac{\sqrt{t}}{nm} } \Big)$ 
& $\widetilde O\Big(\frac{1}{ (nm (1-\sigma_2(P))^{1/3} }\Big)$  
\\
\hline
 \rule{0pt}{16pt} $\eta^{\star}_{\mathrm{Opt}}$  \!\!\!
 & $ \widetilde{O} \Big(\sqrt{ \frac{1-\sigma_2(P) }{t}} \Big) $ 
 &  $\widetilde{O} \Big( \frac{1}{\sqrt{t(1-\sigma_2(P)}} 
 + \sqrt{\frac{\sqrt{t(1-\sigma_2(P))} }{nm} }  \Big)$ 
 & $\widetilde{O} \Big(\frac{1}{ (nm)^{1/3} }\Big)$ \\
\hline
 \rule{0pt}{22pt} $\eta^{\star}_{\mathrm{Test}}$ \!\!\! & 
$ \widetilde{O} 
\Big( \frac{1}{\sqrt{t}} \frac{1}{\sqrt{ \frac{9}{1-\sigma_2(P)} + \frac{t}{ (nm)^{2/3} }}} \Big) $   
& $\widetilde{O}\Big( \frac{1}{\sqrt{t(1-\sigma_2(P) }} + \frac{1}{ (nm)^{1/3} } \Big)$  
&  $\widetilde{O} \Big(\frac{1}{ (nm)^{1/3} } \Big)$ \\
\end{tabular}
\caption{$t^\star(\eta^\star) \simeq (nm)^{2/3}/((1-\sigma_2(P))^{4/3}$ and 
$ t^\star(\eta^{\star}_{\mathrm{Opt}}) \simeq 
t^\star(\eta^{\star}_{\mathrm{Test}}) \simeq  (nm)^{2/3} /(1-\sigma_2(P))$.}
\label{table:StepSizeChoices_nonsmooth}
\end{table}
\FloatBarrier
\end{corollary}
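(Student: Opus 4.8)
The plan is to prove each row of Table~\ref{table:StepSizeChoices_nonsmooth} by substituting the prescribed step size into the bound of Theorem~\ref{thm:convexrisknmin_nonsmooth} and then minimising the resulting expression over the horizon $t$. Throughout I work modulo constants and logarithmic factors (the $\widetilde O$ convention) and write $\gamma := 1-\sigma_2(P)$ for the spectral gap. Two simplifications reduce the bound to a tractable shape: using $t-1\le t$, and restricting to the regime of small $\eta$ in which $\eta^2 L^2 \le 2\eta(B-C)$, the Generalisation Error bound becomes $\widetilde O(\sqrt{t\eta/(nm)})$. Hence the full Test Error bound reads, up to constants and logs, $T(\eta,t) \simeq \sqrt{t\eta/(nm)} + \eta/\gamma + 1/(\eta t)$, a sum of two $\eta$-increasing terms (scaling as $\eta^{1/2}$ and $\eta^{1}$) and one $\eta$-decreasing term.

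For the two closed-form choices the computation is routine. The centralised choice $\eta^\star$ balances the two optimisation terms while ignoring the graph, i.e.\ $\eta L^2 \simeq G^2/(\eta t)$, giving $\eta^\star \simeq 1/\sqrt t$; substituting yields $T(\eta^\star,t)\simeq 1/(\gamma\sqrt t) + \sqrt{\sqrt t/(nm)}$, the second column. The optimisation-optimal choice $\eta^\star_{\mathrm{Opt}}$ instead balances $\eta/\gamma \simeq 1/(\eta t)$, giving $\eta^\star_{\mathrm{Opt}}\simeq \sqrt{\gamma/t}$ and $T(\eta^\star_{\mathrm{Opt}},t)\simeq 1/\sqrt{\gamma t} + \sqrt{\sqrt{\gamma t}/(nm)}$. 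In each case the last column follows by minimising over $t$: the bound is a sum of a $t$-decreasing power and a $t$-increasing power, so I equate the two, solve the resulting power-law equation for $t^\star$, and read off the value. This produces $t^\star(\eta^\star)\simeq (nm)^{2/3}/\gamma^{4/3}$ with rate $\widetilde O(1/(nm\gamma)^{1/3})$, and $t^\star(\eta^\star_{\mathrm{Opt}})\simeq (nm)^{2/3}/\gamma$ with the graph-independent rate $\widetilde O(1/(nm)^{1/3})$.

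The genuinely delicate case, and the main obstacle, is $\eta^\star_{\mathrm{Test}}$, because minimising $T(\eta,t)$ over $\eta$ now means balancing the decreasing term $1/(\eta t)$ against a \emph{sum} of two increasing terms of different order in $\eta$; the exact stationarity condition is an awkward algebraic equation with no clean root. Rather than solve it, I would proceed by guess-and-verify. The two single-term balance points are $\eta_1\simeq\sqrt{\gamma/t}$ (balancing $\eta/\gamma$ against $1/(\eta t)$) and $\eta_2\simeq(nm)^{1/3}/t$ (balancing $\sqrt{t\eta/(nm)}$ against $1/(\eta t)$). The key observation is that $\eta\le\eta_1$ forces $\eta/\gamma\le 1/(\eta t)$, and $\eta\le\eta_2$ forces $\sqrt{t\eta/(nm)}\le 1/(\eta t)$; hence for $\eta\le\min(\eta_1,\eta_2)$ both increasing terms are dominated by the decreasing one. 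I would therefore take $\eta^\star_{\mathrm{Test}}$ to be the smooth square-combination defined by $1/(\eta^\star_{\mathrm{Test}})^2 = 9t/\gamma + t^2/(nm)^{2/3}$, a surrogate for $\min(\eta_1,\eta_2)$ that matches the displayed Size. Verification is then immediate: $1/(\eta^\star_{\mathrm{Test}}\, t) = \sqrt{9/(\gamma t)+1/(nm)^{2/3}} \le 3/\sqrt{\gamma t}+1/(nm)^{1/3}$ by subadditivity of the square root, while $\eta^\star_{\mathrm{Test}}\le\sqrt{\gamma/(9t)}$ and $\eta^\star_{\mathrm{Test}}\le(nm)^{1/3}/t$ bound the two increasing terms by the same quantity, yielding $T(\eta^\star_{\mathrm{Test}},t)\simeq 1/\sqrt{\gamma t}+1/(nm)^{1/3}$.

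It remains to minimise this common form over $t$. For both $\eta^\star_{\mathrm{Test}}$ and $\eta^\star_{\mathrm{Opt}}$ the second term is $t$-independent, so early stopping sets $1/\sqrt{\gamma t}\simeq 1/(nm)^{1/3}$, giving $t^\star\simeq (nm)^{2/3}/\gamma$ and the optimal rate $\widetilde O(1/(nm)^{1/3})$; for $\eta^\star$ both terms depend on $t$ and the power-law balance of the previous paragraph gives instead $t^\star\simeq(nm)^{2/3}/\gamma^{4/3}$ and the suboptimal $\widetilde O(1/(nm\gamma)^{1/3})$. I expect the only real work to lie in the $\eta^\star_{\mathrm{Test}}$ step, specifically in pinning down the constant $9$ from the $\tfrac{19}{2}$ coefficient of the optimisation term and in confirming that the small-$\eta$ simplification of the Generalisation Error bound is legitimate at the chosen step sizes; the remaining rows reduce to elementary power-law minimisations.
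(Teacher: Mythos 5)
Your proposal is correct and follows essentially the same route as the paper's own calculations: substitute each step size into the bound of Theorem \ref{thm:convexrisknmin_nonsmooth}, verify the graph-dependent step sizes via the same square-combination form of $\eta^\star_{\mathrm{Test}}$ and subadditivity of the square root, and balance power laws in $t$ for the early-stopping column. The only cosmetic difference is in handling the Generalisation Error: the paper splits $\sqrt{\eta^2L^2+2\eta(B-C)}\le \eta L+\sqrt{2\eta(B-C)}$ and checks that the $\eta L$ part is $O(1/\sqrt{nm})$ at the chosen step sizes, whereas you discard the $\eta^2L^2$ term by a small-$\eta$ regime restriction — both yield the same $\widetilde O(\sqrt{t\eta/(nm)})$ reduction and the same table entries.
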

\begin{proof}
See Appendix \ref{App:Calculations:Lipschitz}.
\end{proof}
Corollary \ref{Cor:lipschitzrates} shows asymptotic behaviours for the Test Error bounds (as a function of time $t$ upon different choices of the step size) that are analogous to the ones established in Corollary \ref{Cor:smoothrates} in the case of smooth losses. In particular, as in Corollary \ref{Cor:smoothrates}, also in this case the step sizes accounting for the graph topology, i.e., $\eta^\star_{\mathrm{Test}}$ and $\eta^\star_{\mathrm{Opt}}$, give improved statistical rates over the step size independent of the graph topology $\eta^\star$. 

The statistical rate obtained by both $\eta^\star_{\mathrm{Test}}$ and $\eta^\star_{\mathrm{Opt}}$, upon performing early stopping, matches, up to logarithmic terms, the best-known rate of $O(1/(nm)^{1/3})$ obtained by centralised SGD with implicit regularisation
\citep{lin2016generalization}. 
Differing from the smooth case, additional iterations with respect to the graph topology are required for the step size independent of the graph topology  $\eta^\star$ to achieve its best statistical rates (as prescribed by our upper bounds), when compared to  step sizes accounting for the topology $\eta^\star_{\mathrm{Test}}$ and $\eta^\star_{\mathrm{Opt}}$.
As highlighted in
\citep{lin2016generalization}, we note that these rates are not sharp, leaving it to future work to obtain better bounds. 
\begin{remark}
\label{remark:NonsmoothAssumption}
Assumptions \ref{assumptions} is satisfied in the setting of supervised learning with bounded data, linear predictors, and hinge loss, for instance. In this setting, each observation $z\in\mathcal{Z}$ decomposes into a  $d$-dimensional feature vector and a real-valued response, i.e., $z = \{w,y\}$ with $w \in \mathcal{W}\subset\mathbb{R}^{d}$ and $y \in \mathcal{Y}\subset\mathbb{R}$ such that $\|w\| \le D_\mathcal{W} < \infty$, and $|y|  \le D_\mathcal{Y} < \infty$. The linear predictors are parametrised by $x\in\mathcal{\widetilde X}\subseteq \mathcal{X}=\mathbb{R}^d$, i.e., $w \rightarrow w^{\top} x$, and the loss function is of the form $\ell(x,z)  = \tilde{\ell}(w^{\top}x,y)$ with the function $\tilde{\ell}:\mathcal{Y}\times\mathcal{Y}\rightarrow\mathbb{R}_+$ measuring the discrepancy between the predicted response $w^{\top}x$ and the observed response $y$. For the hinge loss, $\tilde{\ell}(\tilde{y},y) = \max(0,1 - \tilde{y} y)$. Assumption \ref{assumptions} \textit{(a)} is satisfied with $B=1$ and $C=0$. By Talagrand's contraction lemma and standard results on the Rademacher complexity of linear predictors, assumption \textit{(b)} is satisfied with $D = D_\mathcal{Y} D_\mathcal{W}$,
as the hinge loss $\tilde{\ell}(\,\cdot\,,y)$ is $|y|$-Lipschitz. Also the Lipschitz constant in Theorem \ref{thm:convexrisknmin_nonsmooth} reads $L=D$, as $|\ell(x_1,z)-\ell(x_2,z)| \le D_{\mathcal{Y}} |(x_1-x_2)^\top w| \le D_{\mathcal{Y}} D_{\mathcal{W}} \| x_1-x_2 \|$ by the Cauchy-Schwarz's inequality.
\end{remark}

\section{Generalisation and Optimisation Error Bounds}
\label{sec:ErrorBounds}
In this section we present the Generalisation and Optimisation Error bounds that yield the Test Error bounds presented within Section \ref{sec:Results}. Section \ref{sec:Generalisation Error} begins with the stability analysis used to derive the Generalisation Error  bounds for smooth losses. This is followed by the Generalisation Error bound for non-smooth losses in Section \ref{sec:Generalisation Error nonsmooth}. Finally, Section \ref{sec:OptError} presents Optimisation Error bounds for both classes of losses.


\subsection{Generalisation Error Bound for Smooth Losses through Stability}
\label{sec:Generalisation Error}
To bound the Generalisation Error for smooth losses we utilise its link with stability. This has previously been investigated in \cite{rogers1978finite,kearns1999algorithmic,Bousquet:2002,mukherjee2006learning,shalev2010learnability}, with \cite{Bousquet:2002} and \cite{Hardt:2016:TFG:3045390.3045520} providing the work upon which we rely. Specifically, \cite{Hardt:2016:TFG:3045390.3045520} investigated the Generalisation Error of centralised SGD in the multi-pass setting, giving, in the case of convex, Lipschitz, and smooth losses, upper bounds that grow linearly with the number of iterations and step size. The method used is algorithmic stability (or sensitivity) as introduced in \cite{Bousquet:2002}.  This method investigates the deviation of an algorithm when a single data point in the dataset $\mathcal{D}$ is resampled. By iterating through all of the observations, accounting for the deviation in each case, the Generalisation Error is then equal to the average deviation, as we see next. In our case the observations are spread throughout a graph, so the deviations of the algorithm depends on the location of the observation that is resampled.
 
For each $w \in V$ and $k \in [m]$, let $\widetilde{Z}_{w,k}$ be a resampled (independent) observation coming from the same  data distribution. 
Let  $\widetilde X(w,k)^t_{v}$ denote the output of Distributed SGD at node $v$ after $t$ iterations when the algorithm is run on the perturbed dataset $\{\mathcal{D} \backslash Z_{w,k} \} \cup \widetilde{Z}_{w,k}$ in which the $k$-th observation for node $w$, i.e., $Z_{w,k}$, is replaced by $\widetilde{Z}_{w,k}$. The Generalisation Error is then equal to the average mean deviance of the loss function evaluated at the perturbed outputs.
\begin{proposition}
\label{prop:genfromstab}
For any $v\in V$ and $t \geq 1$,
\begin{align*}
	\E[r(X^t_v) - R(X^t_v)]
	&= 
	\frac{1}{nm} \sum_{w \in V} \sum_{k=1}^m  
	\E
	[ 
	\ell( X^t_v,\widetilde Z_{w,k}) - \ell(\widetilde X(w,k)^t_v,\widetilde Z_{w,k})
	].
\end{align*}
\end{proposition}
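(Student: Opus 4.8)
The plan is to establish this identity via the classical ``ghost sample'' plus renaming (exchangeability) argument, treating the internal randomness of the algorithm (the sampling indices $\{K^s_w\}$) as fixed and coupled across the original and perturbed runs. Throughout I write $\mathcal{D}=\{Z_{w,k}\}_{w\in V,\,k\in[m]}$ for the $nm$ training points, and I use that $X^t_v$ is a measurable function of $\mathcal{D}$ together with the algorithm's internal randomness, and in particular is independent of the fresh copies $\{\widetilde{Z}_{w,k}\}$.

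First I would rewrite the two terms on the left-hand side. For the empirical risk, linearity of expectation immediately gives
\begin{align*}
\E[R(X^t_v)] = \frac{1}{nm} \sum_{w\in V}\sum_{k=1}^m \E[\ell(X^t_v, Z_{w,k})].
\end{align*}
For the test risk, I use that each $\widetilde{Z}_{w,k}$ is an independent copy of a generic test point and is independent of $X^t_v$; hence, conditioning on $X^t_v$, the definition $r(x)=\E\,\ell(x,Z)$ yields $\E[\ell(X^t_v,\widetilde{Z}_{w,k})\mid X^t_v]=r(X^t_v)$ for every $(w,k)$. Averaging this over all $nm$ pairs and taking expectations gives
\begin{align*}
\E[r(X^t_v)] = \frac{1}{nm} \sum_{w\in V}\sum_{k=1}^m \E[\ell(X^t_v, \widetilde{Z}_{w,k})].
\end{align*}

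The crux of the argument --- and the step I expect to be the main obstacle to make fully rigorous --- is a renaming identity for each fixed pair $(w,k)$:
\begin{align*}
\E[\ell(X^t_v, Z_{w,k})] = \E[\ell(\widetilde{X}(w,k)^t_v, \widetilde{Z}_{w,k})].
\end{align*}
This holds because $Z_{w,k}$ and $\widetilde{Z}_{w,k}$ are i.i.d., so the joint law of the whole collection is invariant under the transposition swapping these two coordinates. Under this swap the dataset $\mathcal{D}$ is mapped to the perturbed dataset $\{\mathcal{D}\setminus Z_{w,k}\}\cup\widetilde{Z}_{w,k}$, so $X^t_v$ is mapped to $\widetilde{X}(w,k)^t_v$ --- here it is essential that the internal randomness is held fixed, so that the only change is in the data fed to the algorithm --- while $Z_{w,k}$ is mapped to $\widetilde{Z}_{w,k}$. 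The delicate point is to set up the probability space so that this swap is a measure-preserving bijection acting simultaneously on the data and on the deterministic map defining the iterate; once this is in place, the identity is just a change of variables under an exchangeable law.

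Finally I would subtract the two displayed expressions for $\E[r(X^t_v)]$ and $\E[R(X^t_v)]$ and apply the renaming identity term-by-term to the empirical-risk sum, replacing $\E[\ell(X^t_v, Z_{w,k})]$ by $\E[\ell(\widetilde{X}(w,k)^t_v, \widetilde{Z}_{w,k})]$. This yields
\begin{align*}
\E[r(X^t_v) - R(X^t_v)] = \frac{1}{nm}\sum_{w\in V}\sum_{k=1}^m \E[\ell(X^t_v, \widetilde{Z}_{w,k}) - \ell(\widetilde{X}(w,k)^t_v, \widetilde{Z}_{w,k})],
\end{align*}
which is precisely the claimed expression.
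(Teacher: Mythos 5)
Your proposal is correct and follows essentially the same route as the paper's own proof: both rewrite the test risk using the independence of the fresh samples $\widetilde Z_{w,k}$ from $X^t_v$, and both rewrite the empirical risk using the distributional identity $(X^t_v, Z_{w,k}) \overset{d}{=} (\widetilde X(w,k)^t_v, \widetilde Z_{w,k})$, which is exactly your renaming/swap step. The only difference is that you spell out the justification of that identity (the measure-preserving transposition with the algorithm's internal randomness held fixed), which the paper simply asserts.
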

\begin{proof}
The proof is given in Appendix \ref{proof:prop:genfromstab}.
\end{proof}
The identity in Proposition \ref{prop:genfromstab} involves a double sum over the mean deviations of the algorithm applied to locally perturbed datasets: one sum relates to the graph location where the perturbation is supported ($w$), and the other sum relates to the index of the perturbed data point at that location ($k$).
Each \emph{individual} deviation depends on the graph topology via the location of the resampled observation $w$ relative to the node of reference $v$. This dependence is captured by the bound that we give in Proposition \ref{prop:distributedstability} in Appendix \ref{sec:GenProof}, where we show that the non-expansive property of the gradient descent update in the smooth case controls the spatial propagation of the deviation across the network via the term $\sum_{s=1}^{t-1} (P^s)_{vw}$. Proposition \ref{prop:genfromstab} involves the \emph{average} across all deviations, and once the summation over ${w \in V}$ is considered, we get a final bound that increases linearly with time but does not depend on the graph topology, as we state next.
\begin{lemma}[Generalisation Error bound for convex, Lipschitz, and smooth losses]
\label{lem:GenSmoothDist}
Assume that for any $z\in\mathcal{Z}$ the function $\ell(\,\cdot\,,z)$ is convex, $L$-Lipschitz, and $\beta$-smooth. Let $X^1_v = 0$ for all $v \in V$. Then, Distributed SGD with $\eta\beta \le 2$ yields, for any $v\in V$ and $t \geq 1$,
\begin{align*}
\E [r(X^t_v)-R(X^{t}_v)] \leq \frac{2 \eta L^2}{nm} ( t - 1).
\end{align*}
\end{lemma}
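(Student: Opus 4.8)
The plan is to combine the stability identity of Proposition~\ref{prop:genfromstab} with a recursive bound on the per-perturbation deviation $\E\|X^t_v - \widetilde X(w,k)^t_v\|$, and then to exploit the row-stochasticity of the powers of $P$ to make the final bound independent of the graph. By Proposition~\ref{prop:genfromstab} and the $L$-Lipschitz assumption applied pointwise in $\widetilde Z_{w,k}$,
\begin{align*}
\E[r(X^t_v) - R(X^t_v)] \le \frac{L}{nm}\sum_{w\in V}\sum_{k=1}^m \E\|X^t_v - \widetilde X(w,k)^t_v\|,
\end{align*}
so everything reduces to bounding the stability term for a single perturbation at location $(w,k)$.

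For the stability term I would set $\delta^s_v := \E\|X^s_v - \widetilde X(w,k)^s_v\|$ and derive a recursion. Writing the update as a gradient step $Y^{s+1}_{w'} = X^s_{w'} - \eta\,\partial\ell(X^s_{w'}, Z_{w',K^{s+1}_{w'}})$ followed by the consensus average $X^{s+1}_v = \sum_{w'} P_{vw'} Y^{s+1}_{w'}$, the triangle inequality together with $\sum_{w'} P_{vw'} = 1$ gives $\delta^{s+1}_v \le \sum_{w'} P_{vw'}\,\E\|Y^{s+1}_{w'} - \widetilde Y^{s+1}_{w'}\|$. The crucial case analysis is at the gradient step: for every node $w'\ne w$, and for node $w$ whenever the sampled index $K^{s+1}_w \ne k$, both processes apply the same loss, so the \textbf{non-expansiveness} of the map $x \mapsto x - \eta\nabla\ell(x,z)$ for convex $\beta$-smooth $\ell$ under $\eta\beta\le 2$ yields $\|Y^{s+1}_{w'}-\widetilde Y^{s+1}_{w'}\| \le \|X^s_{w'}-\widetilde X^s_{w'}\|$. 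Only at node $w$ with the sampled index equal to $k$ (probability $1/m$) are different data points used; there I would bound $\|Y^{s+1}_w - \widetilde Y^{s+1}_w\| \le \|X^s_w - \widetilde X^s_w\| + 2\eta L$ by peeling off the two subgradients, each of norm at most $L$. Taking expectations over $K^{s+1}$ gives the componentwise recursion $\delta^{s+1} \preceq P\,\delta^s + \tfrac{2\eta L}{m}\,P e_w$, with $e_w$ the indicator of $w$ and $\delta^1 = 0$ (equal initialisation), which unrolls to
\begin{align*}
\E\|X^t_v - \widetilde X(w,k)^t_v\| \le \frac{2\eta L}{m}\sum_{s=1}^{t-1}(P^s)_{vw};
\end{align*}
this is the content of Proposition~\ref{prop:distributedstability}.

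Finally I would substitute this per-perturbation bound and sum. The $k$-sum contributes a factor $m$, cancelling the $1/m$, and the decisive step is that each power $P^s$ is row-stochastic, so $\sum_{w\in V}(P^s)_{vw} = 1$ for every $s$. Hence the entire graph-dependent quantity collapses:
\begin{align*}
\E[r(X^t_v) - R(X^t_v)] \le \frac{2\eta L^2}{nm}\sum_{w\in V}\sum_{s=1}^{t-1}(P^s)_{vw} = \frac{2\eta L^2}{nm}\sum_{s=1}^{t-1}1 = \frac{2\eta L^2}{nm}(t-1),
\end{align*}
which is the claimed bound. The main obstacle I expect is the non-expansiveness step: correctly invoking that for convex $\beta$-smooth losses the gradient map is $1$-Lipschitz precisely when $\eta\beta \le 2$. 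This is the sole place the smoothness assumption enters, and it is what prevents the deviation from amplifying across iterations. Everything else—the $1/m$ sampling weight, stacking the recursion through the doubly stochastic $P$, and the row-stochastic cancellation that erases all topology dependence—is bookkeeping once this contraction is in place.
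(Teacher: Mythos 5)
Your proposal is correct and follows essentially the same route as the paper: the stability identity of Proposition~\ref{prop:genfromstab}, the case analysis at the perturbed node (non-expansiveness of the gradient step via Lemma~\ref{lem:gradientcontract} when the same data point is used, a $2\eta L$ Lipschitz penalty with sampling weight $1/m$ when the resampled index is hit), the vector recursion $\delta^{s+1} \preceq P\,\delta^s + \tfrac{2\eta L}{m} P e_w$ unrolled from zero initialisation, and the final row-stochastic cancellation $\sum_{w}(P^s)_{vw}=1$. The only cosmetic difference is that the paper phrases the expectation over $K^{s+1}$ as a conditional expectation with respect to the filtration $\mathcal{F}_{t-1}$, which is exactly the bookkeeping your "taking expectations over $K^{s+1}$" step performs.
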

\begin{proof}
See Appendix \ref{sec:prop:proofs}.
\end{proof}
For completeness, and to fully establish in the decentralised case the results derived in \cite{Hardt:2016:TFG:3045390.3045520} in the centralised case, we include in Appendix \ref{sec:prop:proofs} also the time-uniform Generalisation Error bound for the constrained and strongly-convex case. In this case, the \emph{contraction} property of the gradient descent update controls the spatial propagation of the deviation across the network via the term $\sum_{s=1}^{t-1} \iota^s (P^s)_{vw}$, for a given $\iota < 1$. Once the summation over ${w \in V}$ in Proposition \ref{prop:genfromstab} is taken, we get a final bound that does not depend on time, nor on the graph topology.
The bounds that we give are identical to the ones in \cite{Hardt:2016:TFG:3045390.3045520} for a single agent with $nm$ observations.


\subsection{Generalisation Error Bound for Non-Smooth Losses through Rademacher complexity}
\label{sec:Generalisation Error nonsmooth}
In the case of non-smooth losses we follow the approach used in \cite{lin2016generalization} for the centralised case that involves controlling the Generalisation Error by using standard Rademacher complexity's arguments through Assumption \ref{assumptions} \textit{(b)} and bounding the norm of the iterates $\|X^t_v\|$ as a function of the parameters of the algorithm.
\begin{lemma}[Generalisation Error for convex and Lipschitz losses]
\label{lem:Lipschitz:GenError}
Assume that for any $z \in\mathcal{Z}$ the loss function $\ell(\,\cdot\,,z)$ is convex and $L$-Lipschitz. Consider Assumptions \ref{assumptions}. Let $X^1_v = 0$ for all $v \in V$. Then, Distributed SGD yields, for any $v\in V$ and $t \geq 1$,
\begin{align*}
\E[r(X^t_v) - R(X^t_v)] \leq 2D \sqrt{\frac{ (t-1)( \eta^2 L^2 + 2 \eta (B-C)) }{nm}}.
\end{align*}
\end{lemma}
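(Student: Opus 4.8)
The plan is to reduce the Generalisation Error to a Rademacher complexity controlled by Assumption \ref{assumptions} \textit{(b)}, after first establishing a \emph{deterministic} bound on the norm of the iterates. These are exactly the two ingredients flagged in the text preceding the statement, and they combine multiplicatively to produce the claimed rate.

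The crux is the norm bound: I would show $\|X^t_v\|^2 \le (t-1)(\eta^2 L^2 + 2\eta(B-C))$ for every $v\in V$. The key per-step estimate concerns a single stochastic gradient step $Y = x - \eta\,\partial\ell(x,z)$, for which
\[
\|Y\|^2 = \|x\|^2 - 2\eta \langle \partial\ell(x,z), x\rangle + \eta^2 \|\partial\ell(x,z)\|^2 .
\]
Convexity gives the subgradient inequality $\langle \partial\ell(x,z), x\rangle \ge \ell(x,z) - \ell(0,z)$, so by Assumption \ref{assumptions} \textit{(a)} (i.e.\ $\ell(0,z)\le B$ and $\ell(x,z)\ge C$) the middle term is at most $2\eta(B-C)$, while the last term is at most $\eta^2 L^2$ since $L$-Lipschitzness bounds subgradients by $L$. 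Hence $\|Y\|^2 \le \|x\|^2 + \eta^2 L^2 + 2\eta(B-C)$. For the consensus step, Jensen's inequality together with the fact that $P$ is doubly stochastic (each row sums to one) gives $\|X^{s+1}_v\|^2 = \|\sum_w P_{vw} Y^{s+1}_w\|^2 \le \sum_w P_{vw}\|Y^{s+1}_w\|^2$. Writing $a^s_v := \|X^s_v\|^2$ and $c := \eta^2 L^2 + 2\eta(B-C)$, I would combine the two steps into the componentwise recursion $\mathbf{a}^{s+1} \le P\mathbf{a}^s + c\mathbf{1}$, iterate it, and use $P\mathbf{1} = \mathbf{1}$ together with the initialisation $\mathbf{a}^1 = 0$ to conclude $a^t_v \le c(t-1)$ for all $v$.

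For the symmetrisation step, since the radius $R := \sqrt{(t-1)(\eta^2 L^2 + 2\eta(B-C))}$ is a deterministic constant (it does not depend on the realised data), the iterate $X^t_v$ lies almost surely in the ball $\widetilde{\mathcal X} := \{x : \|x\|\le R\}$ whatever the data realisation. Therefore $r(X^t_v) - R(X^t_v) \le \sup_{x\in\widetilde{\mathcal X}}[r(x)-R(x)]$ pointwise, and taking expectations reduces the problem to the uniform Generalisation Error over $\widetilde{\mathcal X}$. A standard ghost-sample argument then yields $\E\sup_{x\in\widetilde{\mathcal X}}[r(x)-R(x)] \le 2\,\E\sup_{x\in\widetilde{\mathcal X}} \frac{1}{nm}\sum_{i=1}^{nm}\sigma_i\,\ell(x,z_i)$, after relabelling the $nm$ data points as $z_1,\dots,z_{nm}$. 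Applying Assumption \ref{assumptions} \textit{(b)} with $\widetilde{\mathcal X}$ bounds the right-hand side by $2D\,R/\sqrt{nm}$, which is precisely the stated bound.

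The main obstacle is the norm bound: one must control the growth of $\|X^t_v\|$ across both the gradient and consensus steps simultaneously, and for every node at once. The subgradient-plus-boundedness estimate handles the gradient step, but the essential structural observation is that double stochasticity of $P$ lets Jensen's inequality collapse the consensus step into convex averaging that preserves the uniform per-step increment $c$; via $P\mathbf{1} = \mathbf{1}$ the graph coupling contributes nothing to the norm. The symmetrisation is routine once $R$ is known to be deterministic, which is exactly what legitimises the pointwise domination of $r(X^t_v) - R(X^t_v)$ by the uniform bound over $\widetilde{\mathcal X}$.
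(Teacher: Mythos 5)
Your proposal is correct and follows essentially the same route as the paper: a deterministic iterate-norm bound (the paper's Lemma \ref{lem:Lipschitz:IterBound}, proved with the same convexity-plus-boundedness per-step estimate and the stochasticity of $P$ for the consensus step, phrased there as a max-over-nodes recursion rather than your componentwise vector recursion $\mathbf{a}^{s+1}\le P\mathbf{a}^s + c\mathbf{1}$), followed by symmetrisation and Assumption \ref{assumptions} \textit{(b)} applied to the resulting deterministic ball. No gaps.
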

\begin{proof}
See Appendix \ref{app:Lipschitz:Gen}.
\end{proof}
We now go on to give Optimisation Error bounds which, once combined the Generalisation Error bounds in Section \ref{sec:Generalisation Error} and \ref{sec:Generalisation Error nonsmooth}, give the Test Error bounds presented within Section \ref{sec:Results}.

\subsection{Optimisation Error Bounds}
\label{sec:OptError}
In this section we present Optimisation Error bounds for Distributed SGD with convex, Lipschitz, and either smooth or non-smooth losses. These results follow from theorems proved within Appendix \ref{sec:distOptimBounds} under the more general setting of the first-order stochastic oracle model. We note that constants within these bounds have not been optimised.

The bounds that we derive are proved using the techniques developed in \cite{Nedic2009} and, in particular, in \cite{DAW12}, where the evolution of the algorithm $X_v^s$ is compared against the evolution of its network average $\overline X^s:=\frac{1}{n}\sum_{v\in V} X^s_v$ to derive graph-dependent error bounds. Appendix \ref{sec:distOptimBounds} contains the full scheme of the proof, along with the error decomposition into a network term, an optimisation term, and a gradient noise term (only in the smooth case). As previously emphasised, the fact that we investigate implicit regularisation strategies allows us to deal with unconstrained problems, and in this case the evolution of the network-averaged process $\overline X^s$ admits a simple form that facilitates the analysis. This approach avoids the difficulties with the nonlinearity of projection that have been previously challenging in distributed learning models, and that motivated the investigation of dual methods such as in \cite{DAW12}. 


We start with the case of Lipschitz and smooth losses. The proof for this case is inspired from the proof for centralised SGD applied to smooth objectives, specifically, Theorem 6.3 in \cite{bubeck2015convex}, itself extracted from \cite{dekel2012optimal}.
The bound that we present depends upon both the quantity $\sigma$ and the quantity $\kappa$ defined, respectively, in \eqref{assumption:gradientvariance} and \eqref{assumption:gradientvariation}.

\begin{lemma}[Optimisation Error bound for convex, Lipschitz, and smooth losses]
\label{lem:DistSGD:ConvexSmooth:OPT}  
Assume that\\for any $z\in\mathcal{Z}$ the function $\ell(\,\cdot\,,z)$ is convex, $L$-Lipschitz, $\beta$-smooth 
and satisfies \eqref{assumption:gradientvariance} and \eqref{assumption:gradientvariation}. Let $X^1_v = 0$ for all $v \in V$, $\| X^\star \| \leq G$. Then, Distributed SGD with $\eta = 1/(\beta + 1/\rho)$ and $\rho > 0$, yields, for any $v \in V$ and $t\ge 1$,
\begin{align*}
	& \E\Big[  R\Big( \frac{1}{t} \sum_{s=1}^{t} X^{s+1}_v \Big) 
	-  R(X^\star) \Big]\\
	& \quad \quad \quad \le 
	\frac{\rho}{2} \sigma^2 + \frac{(\beta + 1/\rho) G^2}{2t} + 
	\frac{3 \kappa}{\beta + 1/\rho} \frac{\log((t+1)\sqrt{n})}{1-\sigma_2(P)} 
	\Big( 
	L + 
	\frac{3}{2} \frac{ \beta(3+ \beta \rho)\kappa }{\beta + 1/\rho} \frac{\log((t+1)\sqrt{n})}{1-\sigma_2(P)} \Big).
\end{align*}

\end{lemma}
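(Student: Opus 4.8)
The plan is to follow the ``compare-to-the-network-average'' strategy of \cite{nedic2009distributed,DAW12}, which reduces the decentralised dynamics to a centralised smooth-SGD analysis plus a consensus correction. First I would exploit the double stochasticity of $P$: summing the update \eqref{alg:ditributedSGD} over $v$ and using $\sum_{v} P_{vw} = 1$ shows that the network average $\overline X^s := \frac1n\sum_{v\in V}X^s_v$ obeys the simple recursion
\begin{align*}
\overline X^{s+1} = \overline X^s - \frac{\eta}{n}\sum_{w\in V}\partial\ell(X^s_w,Z_{w,K^{s+1}_w}),
\end{align*}
that is, a single centralised SGD step driven by the averaged stochastic gradient $\overline g^s := \frac1n\sum_{w\in V} g^s_w$, where $g^s_w := \partial\ell(X^s_w,Z_{w,K^{s+1}_w})$. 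Writing $\hat X^t_v := \frac1t\sum_{s=1}^t X^{s+1}_v$, convexity of $R$ (Jensen over the time-average) together with the $L$-Lipschitzness of $R$ then yields, after taking expectations,
\begin{align*}
\E\big[R(\hat X^t_v) - R(X^\star)\big]
\le \frac1t\sum_{s=1}^t\E\big[R(\overline X^{s+1}) - R(X^\star)\big]
+ \frac{L}{t}\sum_{s=1}^t\E\,\|X^{s+1}_v - \overline X^{s+1}\|,
\end{align*}
which isolates an optimisation term for the averaged sequence and a network (consensus) term.

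For the optimisation term I would run the standard descent-lemma argument for $\beta$-smooth convex objectives with step $\eta = 1/(\beta+1/\rho)$, exactly as in Theorem 6.3 of \cite{bubeck2015convex} (following \cite{dekel2012optimal}), telescoping to produce the $\frac\rho2\sigma^2 + \frac{(\beta+1/\rho)G^2}{2t}$ contribution; the variance of $\overline g^s$ about its conditional mean is controlled by $\sigma^2$ via \eqref{assumption:gradientvariance}. The one subtlety is that $\overline g^s$ is an unbiased estimate of $\frac1n\sum_w \nabla R_w(X^s_w)$ rather than of $\nabla R(\overline X^s)$; the gap between these gradients is bounded through $\beta$-smoothness by $\frac\beta n\sum_w\|X^s_w-\overline X^s\|$, producing an additional bias term that is again governed by the consensus error and, after completing the square, accounts for the quadratic-in-log contribution carrying the coefficient $\beta(3+\beta\rho)$.

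The heart of the proof is the consensus term. Unrolling \eqref{alg:ditributedSGD} from $X^1_v=0$ gives the closed form
\begin{align*}
X^{s+1}_v - \overline X^{s+1}
= -\eta\sum_{r=1}^{s}\sum_{w\in V}(P^{\,s-r+1})_{vw}\big(g^r_w - \overline g^r\big),
\end{align*}
where I used $\sum_w (P^k)_{vw}=1$ to subtract off the average. Taking norms and expectations, I would control each centred gradient deviation in mean by $\kappa$ through \eqref{assumption:gradientvariation}, and bound the mixing via the spectral estimate $\sum_{w}|(P^k)_{vw}-1/n| \le \min\{2,\sqrt n\,\sigma_2(P)^k\}$, which is valid because both $(P^k)_{v\cdot}$ and $\tfrac1n\mathbf 1^\top$ are probability vectors while $\|P^k-\tfrac1n\mathbf 1\mathbf 1^\top\|\le\sigma_2(P)^k$. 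Summing the resulting series by splitting at the mixing time $k^\star\simeq \log(\sqrt n)/(1-\sigma_2(P))$, where $\sqrt n\,\sigma_2(P)^k$ crosses a constant, gives the factor $\log((t+1)\sqrt n)/(1-\sigma_2(P))$ while avoiding a spurious $\sqrt n$. Feeding this consensus bound back into both the Lipschitz network term and the smoothness-induced bias term then reproduces the two $\kappa$-terms of the stated bound.

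I expect the main obstacle to be the coupling between the two error sources: the optimisation analysis of $\overline X^s$ does not close on its own, since its driving gradient is evaluated at the scattered iterates $X^s_w$ rather than at $\overline X^s$, so one must carry the consensus error into the smooth descent recursion while simultaneously bounding that consensus error in terms of $\kappa$. Keeping the two estimates mutually consistent, and extracting the logarithmic mixing factor cleanly from the truncated geometric series, is the delicate bookkeeping; the remaining steps are the routine smooth-SGD telescoping and triangle-inequality manipulations, which I would carry out for the general first-order stochastic oracle in Appendix \ref{sec:distOptimBounds} and then specialise.
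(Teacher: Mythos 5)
Your proposal follows essentially the same route as the paper's own proof: the paper also compares the iterates to the network average $\overline X^s$, runs the smooth-SGD telescoping argument of \cite{dekel2012optimal,bubeck2015convex} on $\overline X^s$ with $\eta = 1/(\beta+1/\rho)$ while handling the bias between $\frac{1}{n}\sum_w \nabla R_w(X^s_w)$ and $\nabla R(\overline X^s)$ through smoothness (which is exactly where the $\beta(3+\beta\rho)$ coefficient arises), bounds the consensus error by unrolling the recursion and splitting the resulting $\ell_1$ mixing sum $\sum_k \|e_v^\top P^k - \tfrac{1}{n}\mathbf{1}^\top\|_1$ at the mixing time (Proposition \ref{prop:network}), and carries all of this out for a general first-order stochastic oracle in Appendix \ref{sec:distOptimBounds} before specialising to \eqref{alg:ditributedSGD}. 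Your steps, including the identification of the coupling between the optimisation and consensus errors as the main bookkeeping burden, are correct and match the paper's argument.
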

\begin{proof}
The result follows from Corollary \ref{cor:diststochopt_smooth} in Appendix \ref{sec:distOptimBounds} and from Section \ref{proof:theorem:ConvexOPT}.
\end{proof}

Next is the Optimisation Error bound for non-smooth losses, inspired from \cite{DAW12}.
\begin{lemma}[Optimisation Error bound for convex and Lipschitz losses]
\label{lem:DistSGD:Convex:OPT}
Assume that for any $z\in\mathcal{Z}$ the function $\ell(\cdot,z)$ is convex and $L$-Lipschitz. 
Let $X^1_v = 0$ for all $v \in V$, $\| X^\star \| \leq G$. Then, Distributed SGD yields, for any $v\in V$ and $t\ge 1$,
$$
	\E\Big[ R \Big( \frac{1}{t} \sum_{s=1}^t X^s_v \Big) - R(X^\star) \Big]
	\le 
	\frac{\eta L^2}{2} \Big( 19  
	\frac{\log (t\sqrt{n})}{1-\sigma_2(P)} \bigg)
	+ \frac{G^2}{2\eta t}.
$$
\end{lemma}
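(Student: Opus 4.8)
The plan is to follow the approach pioneered in \cite{nedic2009distributed,DAW12}: compare each local iterate $X^s_v$ with the network average $\overline X^s := \frac1n \sum_{w \in V} X^s_w$, show that $\overline X^s$ obeys a clean centralised subgradient recursion, and control every discrepancy $X^s_w - \overline X^s$ through the mixing of $P$. Throughout, write $g^s_w := \partial\ell(X^s_w, Z_{w,K^{s+1}_w})$ for the sampled subgradient (so $\|g^s_w\| \le L$ by $L$-Lipschitzness), $\overline g^s := \frac1n\sum_{w\in V} g^s_w$, and $\mathcal F_s$ for the $\sigma$-algebra generated by the first $s$ iterations. By convexity of $R$ and Jensen's inequality, $R(\frac1t\sum_{s=1}^t X^s_v) - R(X^\star) \le \frac1t\sum_{s=1}^t [R(X^s_v) - R(X^\star)]$, so it suffices to bound the time-averaged instantaneous error, and I would split each term as
\[
R(X^s_v) - R(X^\star) = \big[R(X^s_v) - R(\overline X^s)\big] + \big[R(\overline X^s) - R(X^\star)\big],
\]
bounding the first (consensus) bracket by $L\|X^s_v - \overline X^s\|$ using that $R = \frac1n\sum_v R_v$ is $L$-Lipschitz.

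For the optimisation bracket, note that since $P$ is doubly stochastic, averaging \eqref{alg:ditributedSGD} over $v$ collapses the consensus step and gives the recursion $\overline X^{s+1} = \overline X^s - \eta\,\overline g^s$. Expanding $\|\overline X^{s+1} - X^\star\|^2$, taking $\E[\,\cdot\mid\mathcal F_s]$ (using $\E[g^s_w\mid\mathcal F_s] = \nabla R_w(X^s_w)$ and $\|\overline g^s\| \le L$), and telescoping from $\overline X^1 = 0$, $\|X^\star\| \le G$, yields the standard $\frac{G^2}{2\eta t} + \frac{\eta L^2}{2}$ contribution. The one non-centralised subtlety is that $\overline g^s$ aggregates subgradients evaluated at the distinct points $X^s_w$ rather than at $\overline X^s$; writing $\langle g^s_w, \overline X^s - X^\star\rangle = \langle g^s_w, \overline X^s - X^s_w\rangle + \langle g^s_w, X^s_w - X^\star\rangle$ and applying convexity to the second inner product and Cauchy--Schwarz with $\|g^s_w\|\le L$ to the first converts this mismatch into additional consensus corrections. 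Collecting terms should give
\[
\tfrac1t\sum_{s=1}^t \E[R(\overline X^s) - R(X^\star)] \le \frac{G^2}{2\eta t} + \frac{\eta L^2}{2} + \frac{2L}{nt}\sum_{s=1}^t\sum_{w\in V}\E\|X^s_w - \overline X^s\|.
\]

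The remaining ingredient, and the main obstacle, is the network term. Unrolling \eqref{alg:ditributedSGD} in matrix form from $X^1 = 0$ gives $X^s_v - \overline X^s = -\eta\sum_{r=1}^{s-1}\sum_{w\in V} \big[(P^{s-r})_{vw} - \frac1n\big] g^r_w$, whence $\|X^s_v - \overline X^s\| \le \eta L \sum_{k=1}^{s-1}\|(P^k)_{v\cdot} - \frac1n\mathbf 1^\top\|_1$. A spectral decomposition of $P$ bounds the $\ell_2$ deviation of a row of $P^k$ from uniform by $\sigma_2(P)^k$, hence the $\ell_1$ deviation by $\sqrt n\,\sigma_2(P)^k$. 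After summing over $s$ and exchanging the order of summation, the running-average consensus error is controlled by $\eta L\sum_{k\ge1}\min(2, \sqrt n\,\sigma_2(P)^k)$. The hard part is squeezing the spurious $\sqrt n$ down to a logarithm: I would split this sum at the mixing threshold where $\sqrt n\,\sigma_2(P)^k$ crosses $\approx 1/t$, bounding the $O(\log(t\sqrt n)/(1-\sigma_2(P)))$ head terms trivially by $2$ and summing the geometric tail to a constant, producing the factor $\log(t\sqrt n)/(1-\sigma_2(P))$ and the numerical constant in the statement; this is where the graph topology enters.

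Finally, I would substitute this consensus bound both into the node-$v$ gap of the first step and into the averaged corrections of the optimisation step, and absorb the $\frac{\eta L^2}{2}$ contribution into the resulting $\eta L^2 \log(t\sqrt n)/(1-\sigma_2(P))$ block. The care needed is mainly bookkeeping: keeping the two distinct families of consensus terms (the single node $v$ versus the average over all $w$) aligned so that their coefficients combine into the stated $\frac{19}{2}$, and verifying that the truncation constants are uniform in $n$ and $t$. Combining everything and invoking Jensen's inequality once more yields the claimed inequality.
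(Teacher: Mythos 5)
Your proposal is correct and follows essentially the same route as the paper: the paper's proof (Theorem \ref{thm:diststochopt}, Proposition \ref{prop:network}, and Corollary \ref{cor:diststochopt} in Appendix \ref{sec:distOptimBounds}) likewise splits the error into a network term with coefficient $3L$ and a telescoping optimisation term via the averaged recursion $\overline X^{s+1} = \overline X^s - \eta \overline g^s$, and controls the consensus deviation by unrolling against $P^k - \frac{1}{n}11^\top$, using the $\ell_1$--$\ell_2$ bound $\sqrt{n}\,\sigma_2(P)^k$ and the same split of the sum at the mixing time to obtain $2\log(t\sqrt{n})/(1-\sigma_2(P)) + O(1)$. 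The only cosmetic differences are that the paper decomposes through $F_w(X^s_w)$ rather than through $R(\overline X^s)$, and handles the post-threshold tail by counting at most $s$ terms of size $1/s$ rather than by geometric summation; both give the same coefficients ($6 + 7/2 = 19/2$ after absorbing constants into the logarithmic factor).
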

\begin{proof}
The result follows from Corollary \ref{cor:diststochopt} in Appendix \ref{sec:distOptimBounds} and from Section \ref{proof:theorem:ConvexOPT}. 
\end{proof}
When optimising either of these bounds with respect to $\rho$ or $\eta$, a rate no better than $O(1/\sqrt{t})$ can be obtained, matching the rate of stochastic gradient descent in the centralised case. From the bound in Lemma \ref{lem:DistSGD:ConvexSmooth:OPT}, however, we note that if $\sigma=\kappa=0$ then the accelerated rate of $O(1/t)$ can be obtained, matching the rate of full-gradient descent in the centralised case. For a general discussion on these lines, we refer to Appendix \ref{sec:distOptimBounds} and to Remark \ref{rem:comparisonToCentralised} in particular.

\section{Numerical Experiments}
\label{sec:Simulations}

In this section we provide a simulation study to investigate if the previously proven bounds can be representative of real behaviours. Specifically, we investigate the Test Error bounds given in Corollary \ref{Cor:smoothrates} for convex, Lipschitz, and smooth losses.
We start by introducing the notation and quantities of interest in Section \ref{sec:simulations:setup}, then we present the results of the experiments in Section \ref{sec:simulations:smooth}. 
\subsection{Setup}
\label{sec:simulations:setup}
As we want to minimise the expected risk $r(x) = \E\, \ell(x,Z) $ but a closed form expression is typically not available, we use a Monte Carlo approximation constructed from an independent out of sample dataset $\{Z_{j}\}_{j\in[\widehat N]}$, namely, $\hat r(x):=\frac{1}{\widehat N} \sum_{j=1}^{\widehat N} \ell(x,Z_j)$. Given $t$ iterations of the Distributed SGD algorithm, we denote the ergodic average of the iterates by $\widehat{X}^t_v:= \frac{1}{t} \sum_{s=1}^t X^s_v$, for $v \in V$. We investigate the Out of Sample Risk defined as $\max_{v \in V} \hat r(\widehat{X}^t_v )$, which is set to be a proxy for the Test Risk for Distributed SGD, as defined in Section \ref{sec:StabDistAlgo}. We recall that the Test Error is defined as the expectation of the Test Risk minus the minimum expected risk $r(x^\star)$, which is a constant. Therefore, modulo a constant shift, Out of Sample Risk is also a proxy for the Test Error.

Given a graph $(V,E)$ with $n=|V|$ nodes, let $A\in \mathbb{R}^{n \times n}$ be its adjacency matrix defined as $A_{vw} := 1$ if $\{v,w\} \in E$ and $A_{vw} := 0$ otherwise. For each $v \in V$, let $d_v = \sum_{w\in V} A_{vw}$ denote the degree of node $v$, $d_{\text{max}}=\max_{v\in V} d_v$ the maximum degree, and $D = \text{diag}(d_1,\dots,d_n)$ the diagonal degree matrix. 
We consider the doubly stochastic matrix $P = I - \frac{1}{d_{\text{max}} + 1}(D - A)$. This choice is standard in distributed optimisation (see \cite{NET-014}, for instance).
In this case, the spectral gap is known to be of the following orders
(see \cite{DAW12}, for instance):
$$
O\left(\frac{1}{\sqrt{1-\sigma_2(P)}}\right) = 
\begin{cases}
	n & \text{Cycle}\\
	\sqrt{n} & \text{Grid} \\
	1 & \text{Complete Graph}
\end{cases}
$$
We adopt the following parametrisation: $O(1/\sqrt{1-\sigma_2(P)}) = O(n^{\alpha})$, for $\alpha \ge 0$. 

We consider an instance of logistic regression in supervised learning, where for a given positive integer $d$, we have $Z = \{W,Y\}$ with the feature vector $W \in \mathbb{R}^{d}$ and the label $Y \in \{-1,1\}$, and the parameter of interest is $X \in \mathbb{R}^d$. The loss function in this case is given by
$$
	\ell(X,Z) = \log(1 + e^{-Y \times \langle X, W \rangle}),
$$
where $\langle X, W \rangle = X^\top W = \sum_{i=1}^d X_iW_i$.
Given the node count $n$ and $m=2$ locally-owned data points, a simulated dataset with a total of $N=mn = 2n$ observations $\{Z_i\}_{i\in [N]}$ are sampled following the experiment within \cite{DAW12}. Specifically, a true parameter $X^{\star\star}$ is sampled from a standard $d$-dimensional Gaussian $\mathcal{N}(0,I)$, the feature vectors $W_i$'s are sampled uniformly from the unit sphere $\{w \in \mathbb{R}^d: \|w\| \leq 1\}$, and the responses are set as $Y_{i} = \text{sign}( \langle W_i, X^{\star\star} \rangle)$ where $\text{sign}(a) =1 $ if $a \geq 0$ and $-1$ if $a < 0$.
The dataset is then randomly spread across the graph with each node getting $m=2$ samples. It can easily be seen that the Lipschitz parameter is $L=1$ and the smoothness parameter is $\beta=1/4$. Parameters depending upon the gradient noise were upper bounded by distribution-independent quantities and set to $\sigma^2 \rightarrow 4 L^2$ and $\kappa \rightarrow L$ (see Proposition \ref{prop:network} in Appendix \ref{sec:distOptimBounds} for the interplay between $L$ and $\kappa$ as far as bounding the network term is concerned). A solution $X^\star$ to the empirical risk minimisation rule is calculated with tolerance $10^{-15}$ using the \texttt{lbfgs} solver within the \texttt{LogisticRegression} function of the python library \texttt{scikit} \citep{scikit-learn}. We set $G = \|X^\star\|$. Dimension and Monte Carlo estimate size are set to $d=100$ and $\widehat N = 1000$, and Distributed SGD is run for 15 different time horizons $t$, between $10^2$ and  either $10^{7}$ or $10^{6.5}$ for graph sizes $n=3^2 $ or $n=10^2$, respectively.
All runs are initialised from $X_{v}^1 = 0$ for all $v \in V$. Comparisons are made for three choices of the step size, as prescribed in Corollary \ref{Cor:smoothrates}, and for three choices of the graph topology: complete graph $(\alpha =0)$, grid $(\alpha = 1/2)$, and cycle $(\alpha = 1)$. 

\subsection{Experimental Results}
\label{sec:simulations:smooth}
Referring to the \emph{upper} bounds in Corollary \ref{Cor:smoothrates}, we outline what we expect to see plotting the Test Error against the time horizon $t$, with $\log-\log$ scales, across the three different step sizes:
\begin{itemize}
\item  $\rho^\star$ - For small $t$, linear decrease with graph-dependent intercept; for large $t$, linear increase with intercept independent of the graph topology. Minimum attained is graph-dependent;
\item $\rho^{\star}_{\mathrm{Opt}}$  - For small and large $t$, respectively, linear decrease and increase with graph-dependent intercept. Minimum attained is independent of graph topology;
\item $\rho^{\star}_{\mathrm{Test}}$ - Linear decrease with graph-dependent intercept up to a threshold independent of the graph topology. 
\end{itemize}
Figure \ref{fig:simulationResults_N3} presents $\log - \log$ plots of the Out of Sample Risk against the time horizon $t$, using the step sizes stated in Corollary \ref{Cor:smoothrates}. All of the behaviours described above, as suggested by our upper bounds, are observed.
In particular, recall that our bounds suggest the sub-optimality of the sample rate achieved by the step size aligned with centralised SGD ($\rho^\star$), as opposed to the other two choices ($\rho^{\star}_{\mathrm{Opt}}$ and $\rho^{\star}_{\mathrm{Test}}$) that depend on the graph topology. Corollary \ref{Cor:smoothrates} states that the Test Error for $\rho^\star$ yields the rate $\widetilde{O}(n^\alpha/\sqrt{nm})$, as opposed to the rate $\widetilde{O}(1/\sqrt{nm})$ achieved by the other two choices. The former rate is worse (i.e., larger) than the latter for the cycle ($\alpha=1$) and the grid ($\alpha=1/2$), while it is of the same order for the complete graph ($\alpha=0$). 
Evidence of this behaviour is observed in Figure \ref{fig:simulationResults_N3} for $n=100$, where the Out of Sample Risk related to the cycle and grid is seen to achieve a lower minimum when the step sizes that account for the graph topology are used. 

\begin{figure*}[h]
    \centering
	\includegraphics[width = 6 true in]{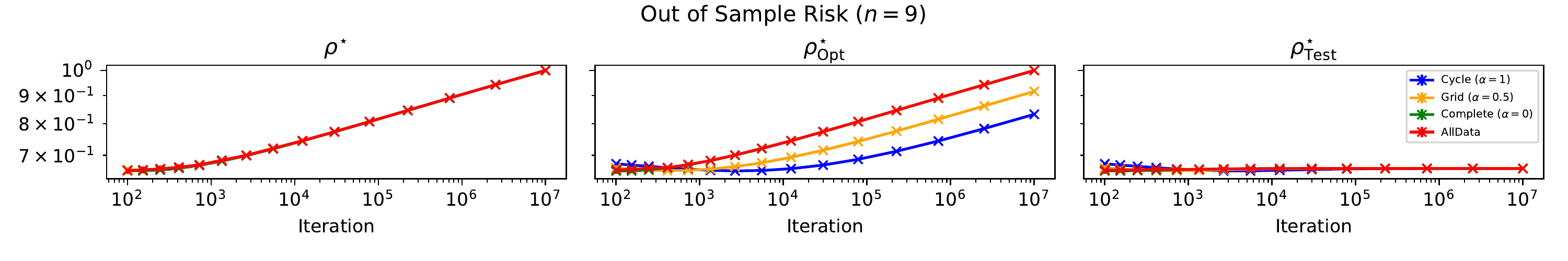}
	\includegraphics[width = 6 true in]{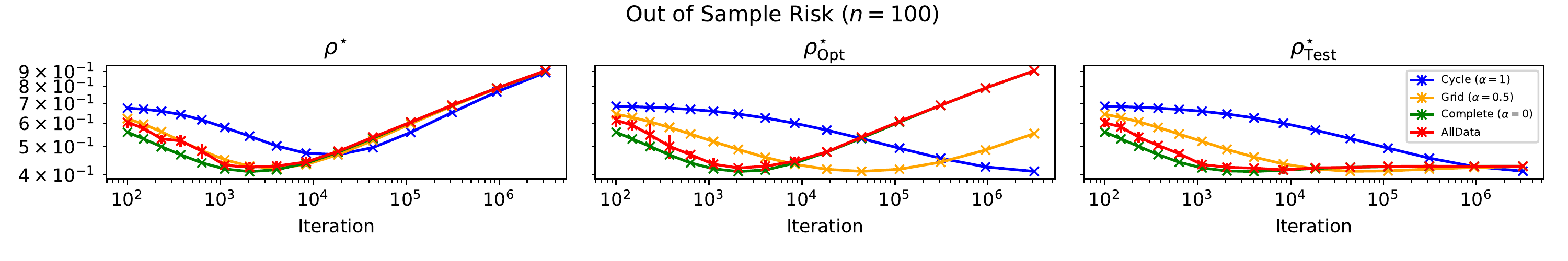}
	\caption{Out of Sample Risk against time horizon for different choices of step size: $\rho^\star$, $\rho^{\star}_{\mathrm{Opt}}$, and $\rho^{\star}_{\mathrm{Test}}$. Scales are $\log - \log$. Graph size  $n=9$ (\textit{top}), $100$ (\textit{bottom}). Simulations run for 15 values of $t$ from $10^2$ to $10^{7}$ (\textit{top}) or $10^{6.5}$ (\textit{bottom}).
	Each point  is an average over $10$ (\textit{top}) or  $4$ (\textit{bottom}) replications with error bars representing $2$ standard deviations before taking the log scale (error bars are not visible for large $t$ due to the small variance between repeated runs). \textit{AllData}: Centralised SGD run on the full dataset of $18$ (\textit{top}) or $200$ (\textit{bottom}) samples with 
$\rho^\star = \rho^\star_{\mathrm{Opt}} = O(1/\sqrt{t})$ and $\rho^\star_{\mathrm{Test}} = O( 1/(\sqrt{t} \sqrt{ 1 + t/m})$. The behaviour of Centralised SGD is seen to correspond to the behaviour of Distributed SGD on the complete graph, as expected.}
    \label{fig:simulationResults_N3}
\end{figure*}

\section{Conclusion}
\label{sec:Conclusion}
We have proposed and investigated graph-dependent implicit regularisation strategies for Distributed SGD for convex problems in multi-agent learning. Specifically, we have shown how Distributed SGD can retain centralised statistical guarantees by proper tuning of the algorithmic parameters as a function of the graph topology.
For convex, Lipschitz, and smooth losses, we showed that Distributed SGD recovers, up to logarithmic terms, the optimal rate of $O(1/\sqrt{nm})$ for single-pass constrained centralised SGD \citep{Lan2012,xiao2010dual}. 
For convex and Lipschitz losses, we showed that Distributed SGD recovers, up to logarithmic terms, the best-known rate of $O(1/(nm)^{1/3})$ for centralised SGD with implicit regularisation \citep{LRX16}.
To obtain these results we: proved Generalisation Error bounds that do not depend on the graph topology and match the bounds in the centralised setting; and derived Optimisation Error bounds that depend on the graph topology. We provided numerical simulations showing that our bounds can be representative of real behaviours. 

Our work motivates further investigation of graph-dependent implicit regularisation strategies for decentralised protocols.
Since communication is often a dominant bottleneck in distributed computations, further research is needed to investigate the improvement on the communicational and computational complexity that can be obtained by exploiting the interplay between the statistical regularities of the local objective functions and schemes involving mini-batching, acceleration, and graph sparsification. The latter relates to Gossip protocols where only a random subset of nodes communicate at each iteration \citep{5545370}. Another direction for future investigation lies in the analysis of adaptive schemes that can contemplate time-dependent step sizes and that can automatically infer the algorithmic parameters of interests, in primis the spectral gap of the communication matrix.

\appendix

\section{Proofs of Generalisation and Test Error Bounds}
\label{sec:AppGenProofs}
This appendix provides the proofs for the Generalisation and Test Error bounds presented within the main body of this paper. First, for completeness, we include the proofs of Proposition \ref{prop:disterrorsplit} and Proposition \ref{prop:genfromstab} in Section \ref{proof:prop:disterrorsplit} and Section \ref{proof:prop:genfromstab}, respectively. These results generalise to the distributed setting the Test Error decomposition and the Generalisation Error decomposition via stability used in the centralised setting, and the proofs follow the exact same arguments as in the centralised case. Second, we present the proofs of the Generalisation Error bounds for smooth and non-smooth losses in Section \ref{sec:prop:proofs} and Section \ref{app:Lipschitz:Gen}, respectively. For completeness, Section \ref{sec:prop:proofs} also includes the proof of stability for the strongly convex case with constraints, which is not covered in the main body but is here presented as it fully generalises the results in \cite{Hardt:2016:TFG:3045390.3045520} for Distributed SGD. Third, in Section \ref{app:proof:TestError} we present the proofs of Test Error bounds for smooth and non-smooth losses, referring to Theorem \ref{thm:convexrisknmin} and Theorem \ref{thm:convexrisknmin_nonsmooth} within the main body of the work. Finally, in Section \ref{App:Calculations:Smooth} and Section \ref{App:Calculations:Lipschitz} we give the calculations deriving the rates presented in Corollary \ref{Cor:smoothrates} and Corollary \ref{Cor:lipschitzrates}  for smooth and non-smooth losses, respectively.
Throughout, we use the notations $\lesssim, \simeq,\gtrsim, $ to indicate $\le,=,\ge$ modulo constants and $\log$ terms.

\subsection{Proof of Proposition \ref{prop:disterrorsplit}}
The proof is analogous to the one given in \cite{Hardt:2016:TFG:3045390.3045520} for the centralised case.\\

\label{proof:prop:disterrorsplit}
\begin{proof}[Proposition \ref{prop:disterrorsplit}]
We have
$
	r(X^t_v) - r(x^\star) =
	r(X^t_v) - R(X^t_v) +
	R(X^t_v) - R(X^\star)
	+ R(X^\star) - r(x^\star).
$
Note that $\E R(X^\star) \le r(x^\star)$, as for any $x$ we have $R(X^\star) \le R(x)$ so that
$
	\E  R(X^\star) \le \E R(x) =  r(x),
$
which holds for $x=x^\star$. Thus,
$
	\E\,r(X^t_v) - r(x^\star)
	\le \E[r(X^t_v) - R(X^t_v)] +
	\E[R(X^t_v) - R(X^\star)].
$
\end{proof}

\subsection{Proof of Proposition \ref{prop:genfromstab}}
\label{proof:prop:genfromstab}
The proof follows the ideas in \cite{Bousquet:2002} and \cite{Hardt:2016:TFG:3045390.3045520} for the centralised case.\\
\begin{proof}[Proposition \ref{prop:genfromstab}]
As the resampled observation $\widetilde Z_{w,k}$ has the same distribution than $Z$ and is independent of both $X^t_{v}$ and $
\mathcal{D}$, we have 
$
	\E\,r(X^t_v) = \frac{1}{nm} \sum_{w \in V} \sum_{k=1}^{m} 
	\E\, \ell ( X^t_v,\widetilde Z_{w,k} ).
$
As the pair $(X^t_v, Z_{w,k})$ has the same distribution as the pair $(\widetilde X(w,k)^t_v, \widetilde Z _{w,k})$, the expectation of the empirical risk can be written as
$
	\E R(X^t_v)
	= \frac{1}{nm} \sum_{w \in V}\sum_{k=1}^m 
	\E\, \ell (\widetilde X(w,k)^t_v, \widetilde Z _{w,k}).
$
Thus,
$
	\E[r(X^t_v) - R(X^t_v)]
	= 
	\frac{1}{nm} \sum_{w \in V}  \sum_{k=1}^m 
	\E
	[ 
	\ell( X^t_v,\widetilde Z_{w,k}) - \ell(\widetilde X(w,k)^t_v,\widetilde Z_{w,k})
	].
$
\end{proof}

\subsection{Proof of Generalisation Error Bounds for Smooth Losses}
\label{sec:prop:proofs}
In this section we prove the Generalisation Error bound presented in Lemma \ref{lem:GenSmoothDist} for smooth losses, and we establish a Generalisation Error bound for strongly convex functions. The proof that we present follows the spirit of the proof in \cite{Hardt:2016:TFG:3045390.3045520} for the centralised setting, using algorithmic stability. Specifically, deviations of the algorithm are studied when a single data point in the entire dataset is resampled. In the distributed setting that we consider, the training data is spread throughout the communication graph, and we need to consider stability not only with respect to time (i.e., the iteration time of the algorithm), but also with respect to space (i.e., the communication graph). As established in Proposition \ref{prop:genfromstab}, the Generalisation Error is the average of these deviations. Intermediate steps show that the individual deviations have a dependence on the graph topology, as encoded by the communication matrix $P$. However, once the average over all deviations is taken, we get results that do not depend on the graph topology.

First, in Section \ref{sec:setupstability} we describe the setup for the stability analysis. Then, in Section \ref{sec:GenProof} we present the proof for the case of convex, Lipschitz, and smooth losses. Finally, in Section \ref{sec:GenProofStrongly} we present the case of Lipschitz, smooth, and strongly-convex losses with constraints.

\subsubsection{Setup}
\label{sec:setupstability}
For any $w\in V$ and $k\in [m]$, let $\mathcal{\widetilde D}(w,k) := \{\mathcal{D} \backslash Z_{w,k} \} \cup \widetilde{Z}_{w,k}$ be the dataset in which node $w$ has the $k$-th observation resampled.  Recall that $\widetilde X(w,k)^t_v$ denotes the output at node $v$ and time step $t$ of Distributed SGD \eqref{alg:ditributedSGD} run with respect to the dataset $\mathcal{\widetilde D}(w,k)$.
From Proposition \ref{prop:genfromstab}, the link between the Generalisation Error and the $\ell_2$ deviation
$$
	\delta(w,k)^t_v := \|  \widetilde X(w,k)^t_v - X^t_v\|
$$
can be made explicit when the loss function $\ell$ is $L$-Lipschitz in the first coordinate (uniformly in the second). Specifically, each term in the double sum $\sum_{k=1}^m \sum_{w \in V}$ in Proposition \ref{prop:genfromstab} is bounded by
$$
	\ell( X^t_v,\widetilde Z_{w,k}) - \ell(\widetilde X(w,k)^t_v,\widetilde Z_{w,k}) \leq 
	L \delta(w,k)^t_v.
$$
The results that we derive directly bound the deviation $\delta(w,k)^t_v$. 
Henceforth, for a given matrix $M \in \mathbb{R}^{n \times n}$ we use the notation $M^{s}_{vw}$ to represent the quantity $(M^{s})_{vw}$, where $M^s$ is the $s$-th power of $M$, and the notation $M_v$ to represent the $v$-th row of $M$. Hence, for a given vector $x$, we write $M_v x$ to indicate $\sum_{w\in V} M_{vw}x_w$. 
For any $x,y\in\mathbb{R}^d$, we let $\langle x, y \rangle = x^\top y = \sum_{i=1}^d x_iy_i$.

Before proceeding to the main proofs we require some standard results relating to the expansive properties of gradient descent updates with smooth  and either convex or strongly convex functions. Specifically, for a sufficiently small step size, a result showing that gradient descent updates with smooth and convex function are non-expansive. Meanwhile, for additionally strongly-convex functions, a result showing that gradient descent updates are contractive. The proof
can be found in Appendix A of \cite{Hardt:2016:TFG:3045390.3045520} and it utilises the co-coercivity of gradients for smooth and convex functions \citep{nesterov2013introductory}. 
\begin{lemma}
\label{lem:gradientcontract}
Let $f$ be a $\beta$-smooth function, convex, and $\eta\beta \leq 2$ with $\eta>0$. Then, for any $x,y \in \mathbb{R}$,
\begin{align*}
\|x - y - \eta(\nabla f(x) - \nabla f(y))\| \leq \|x-y\|.
\end{align*}
Let $f$ be a $\beta$-smooth function, $\gamma$-strongly convex, and $\eta \leq 2/(\beta+\gamma)$. Then, for any $x,y \in \mathbb{R}$,
\begin{align*}
\|x - y - \eta(\nabla f(x) - \nabla f(y))\| \leq 
\Big(1 - \frac{\eta \beta \gamma}{\beta +\gamma} \Big) \|x-y\|.
\end{align*}
\end{lemma}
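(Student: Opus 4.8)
The plan is to prove both claims by expanding the squared Euclidean norm of the perturbed gradient step and reducing everything to a lower bound on a single inner product, exactly in the spirit of Appendix A of \cite{Hardt:2016:TFG:3045390.3045520}. Writing $\Delta := \nabla f(x) - \nabla f(y)$, the elementary identity
$$
\|x - y - \eta \Delta\|^2 = \|x-y\|^2 - 2\eta \langle x-y, \Delta\rangle + \eta^2 \|\Delta\|^2
$$
shows that the whole argument hinges on how strongly we can lower-bound the cross term $\langle x-y, \Delta\rangle$, with the step-size condition ensuring that the leftover $\|\Delta\|^2$ contribution is non-positive.

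For the convex case I would invoke the co-coercivity of the gradient of a $\beta$-smooth convex function \citep{nesterov2013introductory}, namely $\langle x-y, \Delta\rangle \geq \frac{1}{\beta}\|\Delta\|^2$. Substituting this into the expansion gives
$$
\|x - y - \eta \Delta\|^2 \leq \|x-y\|^2 + \eta\Big(\eta - \tfrac{2}{\beta}\Big)\|\Delta\|^2,
$$
and since $\eta\beta \leq 2$ forces $\eta - 2/\beta \leq 0$ while $\eta > 0$, the second term is non-positive; taking square roots yields the non-expansiveness claim.

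For the strongly convex case the sharper ingredient is the interpolation inequality valid for $\beta$-smooth, $\gamma$-strongly convex $f$, namely $\langle x-y, \Delta\rangle \geq \frac{\beta\gamma}{\beta+\gamma}\|x-y\|^2 + \frac{1}{\beta+\gamma}\|\Delta\|^2$. This can be derived by applying the convex co-coercivity bound to the convex, $(\beta-\gamma)$-smooth function $g(\cdot) = f(\cdot) - \frac{\gamma}{2}\|\cdot\|^2$ and rearranging. Writing $c := \eta\beta\gamma/(\beta+\gamma)$ and substituting into the expansion, I would obtain
$$
\|x - y - \eta \Delta\|^2 \leq (1 - 2c)\|x-y\|^2 + \eta\Big(\eta - \tfrac{2}{\beta+\gamma}\Big)\|\Delta\|^2,
$$
where $\eta \leq 2/(\beta+\gamma)$ again kills the $\|\Delta\|^2$ term. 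To match the stated factor $(1-c)$, I would note that $\eta \leq 2/(\beta+\gamma)$ together with the AM–GM bound $(\beta+\gamma)^2 \geq 4\beta\gamma$ gives $c \leq 1/2$, whence $1-c \geq 0$ and $1-2c \leq (1-c)^2$; therefore $\|x - y - \eta\Delta\| \leq \sqrt{1-2c}\,\|x-y\| \leq (1-c)\|x-y\|$, as required.

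The only non-routine point is establishing the two coercivity inequalities. The convex one is the classical co-coercivity (Baillon–Haddad) property cited from \cite{nesterov2013introductory}, and the strongly convex one follows from it by the standard centring trick $g = f - \frac{\gamma}{2}\|\cdot\|^2$. Once these are in hand, both parts are purely algebraic, so I expect no genuine obstacle; the main care is simply in tracking the step-size conditions so that each gradient-norm term is non-positive, and in the final elementary estimate $\sqrt{1-2c}\leq 1-c$ that converts the squared bound into the advertised contraction factor.
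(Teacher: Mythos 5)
Your proof is correct and follows essentially the same route as the one the paper relies on: the paper defers to Appendix A of \cite{Hardt:2016:TFG:3045390.3045520}, which likewise expands the squared norm of the gradient step, lower-bounds the cross term via co-coercivity (and, in the strongly convex case, via the interpolation inequality $\langle \nabla f(x)-\nabla f(y), x-y\rangle \geq \frac{\beta\gamma}{\beta+\gamma}\|x-y\|^2 + \frac{1}{\beta+\gamma}\|\nabla f(x)-\nabla f(y)\|^2$ from \cite{nesterov2013introductory}), and finishes with the same elementary bound $1-2c \leq (1-c)^2$. Your handling of the step-size conditions and the verification that $c \leq 1/2$ (hence $1-c>0$) are sound, so there is no gap.
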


\subsubsection{Convex, Lipschitz, and Smooth Losses}
\label{sec:GenProof}
We start by stating Proposition \ref{prop:distributedstability} that establishes a bound on the deviation $\delta(w,k)^{t}_v$ that explicitly depends on the graph topology. This is followed by the proof of Lemma \ref{lem:GenSmoothDist}.
\begin{proposition}[Stability for convex, Lipschitz, and smooth losses]
\label{prop:distributedstability}
Assume the setting of Lemma \ref{lem:GenSmoothDist}.
Then, for any $v,w\in V,k\in [m]$ and $t \geq 1$,
$$
	\E\, \delta(w,k)^t_v
	= \E \| \widetilde X(w,k)^t_v - X^t_v \|
	\le  \frac{2 \eta L}{m} \sum_{s=1}^{t-1}  P^s_{vw}.
$$
\end{proposition}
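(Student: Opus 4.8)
The plan is to run two coupled copies of Distributed SGD---one on $\mathcal{D}$ producing $X^s_v$ and one on the resampled dataset $\mathcal{\widetilde D}(w,k)$ producing $\widetilde X(w,k)^s_v$---on the \emph{same} realisation of the sampling indices $\{K^{s+1}_u\}_{u\in V}$, so that the only source of discrepancy is the single point $Z_{w,k}$ versus $\widetilde Z_{w,k}$. Writing the intermediate gradient-update iterates as $Y^{s+1}_u = X^s_u - \eta\,\partial\ell(X^s_u,Z_{u,K^{s+1}_u})$ and $\widetilde Y^{s+1}_u$ analogously, the consensus step gives $\widetilde X(w,k)^{s+1}_v - X^{s+1}_v = \sum_{u\in V} P_{vu}(\widetilde Y^{s+1}_u - Y^{s+1}_u)$, and since the entries of $P$ are nonnegative, the triangle inequality yields $\delta(w,k)^{s+1}_v \le \sum_{u\in V} P_{vu}\,\|\widetilde Y^{s+1}_u - Y^{s+1}_u\|$. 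The whole argument then reduces to controlling the per-node gradient-update gaps $\|\widetilde Y^{s+1}_u - Y^{s+1}_u\|$ and propagating them through $P$.

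First I would bound these gaps via a dichotomy on the node $u$ and its sampled index. For every $u\neq w$, and for $u=w$ whenever $K^{s+1}_w\neq k$, both runs query the identical data point, so $\widetilde Y^{s+1}_u - Y^{s+1}_u$ is the difference of two gradient-descent steps for one and the same convex $\beta$-smooth loss; by the non-expansiveness property in Lemma \ref{lem:gradientcontract} (which uses $\eta\beta\le 2$) this is bounded by $\delta(w,k)^s_u$. The only exceptional event is $\{u=w,\,K^{s+1}_w=k\}$, which has probability $1/m$ since $K^{s+1}_w$ is uniform on $[m]$; there the two runs use different losses, and I would simply invoke the $L$-Lipschitz gradient bound $\|\partial\ell\|\le L$ together with the triangle inequality to obtain $\|\widetilde Y^{s+1}_w - Y^{s+1}_w\| \le \delta(w,k)^s_w + 2\eta L$. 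Conditioning on the history $\mathcal{F}_s$ up to time $s$ and averaging over the fresh index $K^{s+1}_w$ then gives $\E[\|\widetilde Y^{s+1}_w - Y^{s+1}_w\|\mid\mathcal{F}_s]\le \delta(w,k)^s_w + 2\eta L/m$, while every other node contributes at most $\delta(w,k)^s_u$.

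Combining these bounds with the consensus inequality and taking full expectations, I obtain the scalar recursion $\E\,\delta(w,k)^{s+1}_v \le \sum_{u\in V} P_{vu}\,\E\,\delta(w,k)^s_u + (2\eta L/m)\,P_{vw}$ for every $v$. Since $X^1_v = \widetilde X(w,k)^1_v = 0$ for all $v$, the initial deviation vanishes, so unrolling this recursion---using the nonnegativity of the entries of $P$ to preserve the inequalities under repeated multiplication, together with the identity $\sum_{u\in V} P_{vu}\,P^{s-1}_{uw} = P^s_{vw}$---yields $\E\,\delta(w,k)^t_v \le (2\eta L/m)\sum_{s=1}^{t-1} P^s_{vw}$, which is exactly the claim.

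I expect the only delicate point to be the careful handling of the coupling and the conditioning: one must verify that $\delta(w,k)^s_u$ is $\mathcal{F}_s$-measurable so that the conditional expectation over the fresh index $K^{s+1}_w$ treats it as a constant, and that the perturbation genuinely enters only at node $w$ (all other nodes remain non-expansive regardless of their sampled indices, because the two datasets agree away from $(w,k)$). The nonnegativity of the entries of $P$ is what allows the consensus averaging to be turned into valid inequalities both in the one-step bound and in the unrolling.
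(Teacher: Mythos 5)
Your proof is correct and takes essentially the same approach as the paper: the same coupling of the two runs on shared sampling indices, the same dichotomy (non-expansiveness from Lemma \ref{lem:gradientcontract} for every node and index except the probability-$1/m$ event $\{u=w,\,K^{s+1}_w=k\}$, where the Lipschitz bound contributes the extra $2\eta L$), yielding the identical recursion $\E\,\delta(w,k)^{t}_v \le \sum_{l\in V} P_{vl}\,\E\,\delta(w,k)^{t-1}_l + \tfrac{2\eta L}{m}P_{vw}$ and the same unrolling from zero initial deviation. The paper's three-term decomposition is precisely the explicit $\tfrac{1}{m}\sum_{i=1}^{m}$ expansion of your conditional expectation over the fresh index at node $w$.
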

\begin{proof}[Proposition \ref{prop:distributedstability}]
Let $\mathcal{F}_1$ be the $\sigma$-algebra generated by $\mathcal{D}$ and $\mathcal{\widetilde D}:=\{\mathcal{\widetilde D}(w,k)\}_{w\in V, k\in[m]}$. For any $t \geq 2$, let $\mathcal{F}_{t}$ be the $\sigma$-algebra generated by the datasets $\mathcal{D}$ and $\mathcal{\widetilde D}$, and by the collection of uniform random variables $\{ K^{2}_v ,\dots,K^{t}_v\}_{v\in V}$.
Plugging the algorithm updates \eqref{alg:ditributedSGD} into $\delta(w,k)^{t}_v$, applying the triangle inequality and using the fact that $\{X^{t-1}_v\}_{v\in V}, \{\widetilde X(w,k)^{t-1}_v\}_{v\in V}$, $\mathcal{D}$, and $\mathcal{\widetilde D}$ are measurable with respect to $\mathcal{F}_{t-1}$, we get
\begin{align}
& \E[\delta(w,k)^{t}_v|\mathcal{F}_{t-1}] \nonumber \\
& \leq
\sum_{l \not= w} P_{vl}
\E \Big[ 
\Big\|
\widetilde X(w,k)^{t-1}_l \!-\! X^{t-1}_l 
\!-\! \eta \Big(
\nabla \ell(\widetilde X(w,k)^{t-1}_l,Z_{l,K^{t}_l}) \!-\!  
\nabla \ell(X^{t-1}_l,Z_{l,K^{t}_l})
\Big)
\Big \| \Big| \mathcal{F}_{t-1} \Big] \label{equ:proof:smoothstab:equ1} \\
&\quad +  
\frac{P_{vw}}{m} \sum_{i \not= k}\Big\|
\widetilde X(w,k)^{t-1}_w - X^{t-1}_w 
- \eta \Big(
\nabla \ell(\widetilde X(w,k)^{t-1}_w,Z_{w,i}) -  
\nabla \ell(X^{t-1}_w,Z_{w,i})
\Big)
\Big\| \label{equ:proof:smoothstab:equ2} \\
&\quad  + \frac{P_{vw}}{m} \Big\|
\widetilde X(w,k)^{t-1}_w - X^{t-1}_w 
- \eta \Big(
\nabla \ell(\widetilde X(w,k)^{t-1}_w,\widetilde Z_{w,k}) -  
\nabla \ell(X^{t-1}_w,Z_{w,k})
\Big)
\Big\|.  \label{equ:proof:smthstability:equ3}
\end{align}
This decomposition isolates \eqref{equ:proof:smthstability:equ3}, the term relative to node $w$ and data index $k$ that is the only one to involve the difference of two gradients evaluated at different data points ($\widetilde Z_{w,k}$ and $Z_{w,k}$).
To bound this term, we use the Lipschitz property, $\|\nabla \ell(\,\cdot\,,z)\| \leq L$ for all $z \in \mathcal{Z}$, and get
\begin{align*}
	\eqref{equ:proof:smthstability:equ3} 
\le \Big( \delta(w,k)^{t-1}_w + 2 \eta L \Big)\frac{P_{vw}}{m}.
\end{align*}
To bound terms \eqref{equ:proof:smoothstab:equ1} and \eqref{equ:proof:smoothstab:equ2}, we use the non-expansive property of the gradient updates arising from the convexity and smoothness of $\ell(\,\cdot\,,z)$, specifically, the inequality  $\|x - y 
- \eta(\nabla \ell(x,z) - \nabla \ell(y,z))\| \leq \|x - y\|$ for $x,y \in \mathbb{R}^d, z \in \mathcal{Z}$  in Lemma \ref{lem:gradientcontract}. This yields
\begin{align*}
 \E[\delta(w,k)^{t}_v|\mathcal{F}_{t-1}] & \leq 
\sum_{l \not= w } P_{vl} \delta(w,k)^{t-1}_l + 
\Big( 1 - \frac{1}{m}\Big) P_{vw}  \delta(w,k)^{t-1}_w + 
\Big( \delta(w,k)^{t-1}_w + 2 \eta L \Big) \frac{P_{vw}}{m} \\
& = \sum_{l \in V} P_{vl} \delta(w,k)^{t-1}_l + 
\frac{2 \eta L}{m} P_{vw}.
\end{align*}
Let $e_v\in\R^n$ be the vector with $1$ in the coordinate aligning with node $v$ and $0$ everywhere else. In vector form, with $\delta(w,k)^t = \{ \delta(w,k)^t_v\}_{v \in V} \in\mathbb{R}^n$, the bound above takes the following form (the inequality is meant coordinate-wise)
\begin{align*}
\E\, \delta(w,k)^t  = \E [ \E[\delta(w,k)^t  |\mathcal{F}_{t-1}] ] 
& \leq P\, \E\, \delta(w,k)^{t-1} +  \frac{2 \eta L}{m} P e_w  
\leq \frac{2 \eta L}{m}\sum_{s=1}^{t-1}  P^s e_w,
\end{align*}
where we used $\delta(w,k)^{1}_l =\|\widetilde{X}(w,k)^1_l - X^1_l\| = 0$ for all $l \in V$. Recall $(P^s e_w)_v = P^s_{vw}$.
\end{proof}

The bound in Proposition \ref{prop:distributedstability} shows that the expected deviation between the algorithms remains zero until the number of iterations exceeds the natural distance in the graph between node $v$ and node $w$. This bound naturally reflects the graph topology and captures the propagation of the deviation due to resampling a data point in a specific location of the graph. When combined with the summation over $w \in V$ in Proposition \ref{prop:genfromstab}, this bound yields a Generalisation Error bound that does not depend on the graph topology: Lemma \ref{lem:GenSmoothDist}.\\

\begin{proof}[Lemma \ref{lem:GenSmoothDist}]
Plugging the bound from Proposition \ref{prop:distributedstability} into the identity from Proposition \ref{prop:genfromstab}, using that the rows of the matrix $P$ sum to $1$, we get
\begin{align*}
	\E [ r(X^t_v)-R(X^{t}_v)]
	\le \frac{L}{nm} \sum_{w\in V}\sum_{k=1}^m \E\, \delta(w,k)^t_v
	\le \frac{2 \eta L^2}{nm} 
	 \sum_{s=1}^{t-1}
	 \sum_{w \in V}  P^s_{vw} = \frac{2 \eta L^2}{nm} (t-1).
\end{align*}
\end{proof}

\subsubsection{Strongly Convex, Lipschitz, and Smooth Losses}
\label{sec:GenProofStrongly}
This section presents a Generalisation Error bound for Distributed SGD when the loss function is strongly convex, smooth, and Lipschitz continuous, generalising the results in \cite{Hardt:2016:TFG:3045390.3045520} to the distributed setting. Recall that a differentiable function $f:\R^d\rightarrow\R$ is $\gamma$-strongly convex, with $\gamma >0$, if $f(x) - f(y) \ge \nabla f(y)^\top(x - y) + \gamma \| x - y \|^2 /2$ for all $x,y\in\R^d$. As strongly convex functions have unbounded gradients on $\mathbb{R}^d$, we consider the setting where parameters are constrained to be on a compact convex set $\mathcal{X} \subset \mathbb{R}^d$. Let $x\rightarrow\Pi(x) = \arg \min_{y \in \mathcal{X}}\|x - y\|$ be the Euclidean projection on $\mathcal{X}$. Then, iteration \eqref{alg:ditributedSGD} becomes, for $s \geq 1$,
\begin{align}
\label{alg:DistributedProjSGD}
	X^{s+1}_v = \Pi
	\Big(
	\sum_{w\in V} P_{vw} ( X^{s}_w - \eta \nabla \ell(X^{s}_w,Z_{w,K^{s+1}_w}) \Big).
\end{align}
We refer to this variant as Distributed Projected SGD.

To motivate these assumptions, consider the specific case of Tikhonov regularisation, as done in \cite{Hardt:2016:TFG:3045390.3045520}. If the loss function $\ell$ is convex, $\beta$-smooth, and $L$-Lipschitz, then the penalised loss function $x\rightarrow \ell(x,z) + \frac{\gamma}{2} \|x\|^2$ is $\gamma$-strongly convex, ($\beta + \gamma$)-smooth, and $(L+\gamma r)$-Lipschitz when the constraint set is contained in a ball of radius $r$, i.e., $\mathcal{X}  \subseteq \{x \in \mathbb{R}^d: \|x\| \leq r \}$.
The next result is the analogue of Lemma \ref{lem:GenSmoothDist} with the additional assumption of strong convexity.

\begin{lemma}[Generalisation Error bound for strongly-convex, Lipschitz, and smooth losses]
\label{lem:GenStronglySmoothDist}$ $\\
Assume that for any $z\in\mathcal{Z}$ the function $\ell(\,\cdot\,,z)$ is $\gamma$-strongly convex, $L$-Lipschitz, and $\beta$-smooth. Let $X_v^1 = 0$ for all $v \in V$. Then, Distributed Projected SGD run on a compact, convex set $\mathcal{X}$ with  $\eta \le 2 / (\beta + \gamma)$ yields, for any $v\in V$ and $t \geq 1$,
\begin{align*}
\E [r(X_v^t)-R(X_v^{t})]
	\le 
	\frac{2 L^2 }{mn } \frac{\beta + \gamma}{\beta \gamma} .
\end{align*}
\end{lemma}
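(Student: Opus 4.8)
The plan is to follow exactly the same structure as the proof of Lemma \ref{lem:GenSmoothDist} for the convex case, replacing the non-expansiveness of the gradient update by the stronger contraction property available under strong convexity. The key observation is that Lemma \ref{lem:gradientcontract} provides, for a $\beta$-smooth, $\gamma$-strongly convex function with $\eta \le 2/(\beta+\gamma)$, the contraction factor $\iota := 1 - \eta\beta\gamma/(\beta+\gamma) < 1$ in place of the factor $1$ used in the convex analysis. I would therefore first establish the strongly-convex analogue of Proposition \ref{prop:distributedstability}, bounding the expected per-location deviation $\E\,\delta(w,k)^t_v$. The only new ingredient is that the projection $\Pi$ in \eqref{alg:DistributedProjSGD} is non-expansive, so the projected update contracts deviations at least as well as the unprojected one; this lets the recursion go through unchanged.

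Concretely, running the same decomposition as in the proof of Proposition \ref{prop:distributedstability}, the terms \eqref{equ:proof:smoothstab:equ1} and \eqref{equ:proof:smoothstab:equ2} now each pick up the factor $\iota$ from Lemma \ref{lem:gradientcontract}, while the isolated term \eqref{equ:proof:smthstability:equ3} relative to $(w,k)$ contributes $(\iota\,\delta(w,k)^{t-1}_w + 2\eta L)P_{vw}/m$ after using the Lipschitz bound on the two gradients. Collecting terms, the conditional expectation satisfies, coordinate-wise in vector form,
\begin{align*}
\E\,\delta(w,k)^t \le \iota\, P\,\E\,\delta(w,k)^{t-1} + \frac{2\eta L}{m} P e_w,
\end{align*}
with $\delta(w,k)^1 = 0$. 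Unrolling this recursion gives $\E\,\delta(w,k)^t \le \frac{2\eta L}{m}\sum_{s=1}^{t-1}\iota^{s} P^s e_w$, i.e.\ the spatial propagation is now weighted by the geometrically decaying factor $\iota^s$, which is precisely the $\sum_{s=1}^{t-1}\iota^s (P^s)_{vw}$ term mentioned in the main text.

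Finally I would plug this bound into the identity of Proposition \ref{prop:genfromstab}, using $\ell(X^t_v,\widetilde Z_{w,k}) - \ell(\widetilde X(w,k)^t_v,\widetilde Z_{w,k}) \le L\,\delta(w,k)^t_v$ and summing over $w\in V$ and $k\in[m]$. Since the rows of $P$ (and hence of $P^s$) sum to $1$, the sum $\sum_{w\in V}(P^s)_{vw}=1$, so the graph dependence disappears and we are left with
\begin{align*}
\E[r(X_v^t) - R(X_v^t)] \le \frac{2\eta L^2}{m n}\sum_{s=1}^{t-1}\iota^s \le \frac{2\eta L^2}{mn}\frac{\iota}{1-\iota},
\end{align*}
bounding the geometric series by its infinite sum to obtain a time-uniform bound. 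The remaining step is a purely algebraic simplification: with $\iota = 1-\eta\beta\gamma/(\beta+\gamma)$ one has $1-\iota = \eta\beta\gamma/(\beta+\gamma)$, so $\eta\,\iota/(1-\iota) \le \eta/(1-\iota) = (\beta+\gamma)/(\beta\gamma)$, which yields the claimed bound $\frac{2L^2}{mn}\frac{\beta+\gamma}{\beta\gamma}$. The main obstacle, though minor, is to verify carefully that the Euclidean projection does not disrupt the contraction estimate — this follows from the $1$-Lipschitz (non-expansive) property of $\Pi$ applied before taking norms — and to confirm that the strong-convexity contraction from Lemma \ref{lem:gradientcontract} applies to both gradient differences in the decomposition under the stated step-size condition $\eta\le 2/(\beta+\gamma)$.
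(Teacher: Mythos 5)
Your proposal follows essentially the same route as the paper's proof: the same three-term decomposition, the contraction factor $\iota = 1 - \eta\beta\gamma/(\beta+\gamma)$ from Lemma \ref{lem:gradientcontract}, the non-expansiveness of the projection $\Pi$, the add-and-subtract trick so that the contraction applies to the $(w,k)$ term, and the geometric-series summation after plugging into Proposition \ref{prop:genfromstab}. The only slip is an off-by-one when unrolling the recursion: $\E\,\delta(w,k)^t \le \iota P\,\E\,\delta(w,k)^{t-1} + \frac{2\eta L}{m}Pe_w$ with $\delta(w,k)^1=0$ yields $\frac{2\eta L}{m}\sum_{s=1}^{t-1}\iota^{s-1}P^s e_w$ (the fresh additive term carries no factor of $\iota$), not $\sum_{s=1}^{t-1}\iota^{s}P^s e_w$ as you wrote — compare the paper's Proposition \ref{prop:distributedstabilityStrongly} — but this is harmless here because you immediately relax $\iota/(1-\iota)\le 1/(1-\iota)$, so the corrected exponent gives exactly the same final bound $\frac{2L^2}{mn}\frac{\beta+\gamma}{\beta\gamma}$.
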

Observe that, for a sufficiently small step size  $\eta \leq 2/(\beta+\gamma)$, the bound obtained is independent of the step size $\eta$ and number of iterations $t$. As for the convex and smooth case of Lemma \ref{lem:GenSmoothDist}, also this bound aligns with the one given in \cite{Hardt:2016:TFG:3045390.3045520} for a single agent with $nm$ observations.

The next result is the analogue of Proposition \ref{prop:distributedstability}.

\begin{proposition}[Stability for strongly-convex, Lipschitz, and smooth losses]
\label{prop:distributedstabilityStrongly}
Assume the setting of Lemma \ref{lem:GenStronglySmoothDist}.
Then, for any $v,w\in V,k\in [m]$ and $t \geq 1$,
$$
	\E\, \delta(w,k)^t_v
	= \E \, \| \widetilde X(w,k)^t_v - X^t_v \| 
	\le  \frac{2 \eta L }{m} \sum_{s=1}^{t-1} 
	\Big( 1 - \frac{\eta \beta \gamma}{\beta + \gamma}\Big)^{s-1}  P^s_{vw}.
$$
\end{proposition}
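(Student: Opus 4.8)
The plan is to follow the proof of Proposition \ref{prop:distributedstability} almost line for line, making three modifications to accommodate the projection and the strong convexity. First, since both $\widetilde X(w,k)^t_v$ and $X^t_v$ are obtained by applying the Euclidean projection $\Pi$ onto the convex set $\mathcal{X}$ to their respective pre-projection consensus updates in \eqref{alg:DistributedProjSGD}, I would open the argument with the non-expansiveness of $\Pi$ (i.e., $\|\Pi(x)-\Pi(y)\|\le\|x-y\|$) to discard the two projections, bounding $\delta(w,k)^t_v$ by the norm of the difference of the two unprojected consensus-plus-gradient updates. From there the argument proceeds on the unprojected expression exactly as in the convex case.

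Next I would set up the same filtration $\{\mathcal{F}_t\}$, plug the updates into $\delta(w,k)^t_v$, apply the triangle inequality, and use measurability to take the conditional expectation given $\mathcal{F}_{t-1}$, arriving at the same three-term decomposition \eqref{equ:proof:smoothstab:equ1}--\eqref{equ:proof:smthstability:equ3}: the sum over neighbours $l\ne w$, the sum over data indices $i\ne k$ at node $w$, and the isolated $(w,k)$ term where the two gradients are evaluated at the distinct points $\widetilde Z_{w,k}$ and $Z_{w,k}$. The only substantive change is in how each term is bounded. For the first two terms, where both gradients are evaluated at the \emph{same} data point, I would invoke the contraction estimate of Lemma \ref{lem:gradientcontract} for $\gamma$-strongly convex, $\beta$-smooth losses (valid since $\eta\le 2/(\beta+\gamma)$), picking up the factor $\iota := 1-\eta\beta\gamma/(\beta+\gamma)$ in place of the non-expansive factor $1$. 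This turns \eqref{equ:proof:smoothstab:equ1} into $\iota\sum_{l\ne w}P_{vl}\,\delta(w,k)^{t-1}_l$ and \eqref{equ:proof:smoothstab:equ2} into $(1-1/m)\,\iota\, P_{vw}\,\delta(w,k)^{t-1}_w$.

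The one term that requires care is the isolated term \eqref{equ:proof:smthstability:equ3}, since the contraction lemma only applies to a single fixed loss, whereas here the two gradients involve different samples. Here I would add and subtract $\eta\nabla\ell(X^{t-1}_w,\widetilde Z_{w,k})$ and split by the triangle inequality: the first piece now matches both gradients at $\widetilde Z_{w,k}$ and contracts to $\iota\,\delta(w,k)^{t-1}_w$, while the residual $\eta\|\nabla\ell(X^{t-1}_w,\widetilde Z_{w,k})-\nabla\ell(X^{t-1}_w,Z_{w,k})\|$ is bounded by $2\eta L$ using the uniform gradient bound $\|\nabla\ell(\,\cdot\,,z)\|\le L$. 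This gives the estimate $(\iota\,\delta(w,k)^{t-1}_w+2\eta L)\,P_{vw}/m$, with the crucial $\iota$ now present. Summing the three contributions, the coefficients of $\delta(w,k)^{t-1}_w$ combine as $(1-1/m)\iota + \iota/m = \iota$, so in vector form (coordinate-wise) one obtains the recursion $\E\,\delta(w,k)^t\le \iota\, P\,\E\,\delta(w,k)^{t-1}+\tfrac{2\eta L}{m}P e_w$, with initial condition $\delta(w,k)^1=0$.

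Finally I would unroll this linear recursion: iterating yields $\E\,\delta(w,k)^t\le\tfrac{2\eta L}{m}\sum_{s=0}^{t-2}(\iota P)^s P e_w=\tfrac{2\eta L}{m}\sum_{s=1}^{t-1}\iota^{s-1}P^s e_w$, and reading off the $v$-th coordinate via $(P^s e_w)_v=P^s_{vw}$ produces the claimed bound. I expect the main obstacle to be exactly the isolated-term step: in the convex proof the contraction factor is simply absent there (one bounds it crudely by $\delta(w,k)^{t-1}_w+2\eta L$), and one must insert the add-and-subtract device so that the factor $\iota$ appears \emph{uniformly} across all three terms. Without this, the recursion would not factor cleanly as $\iota P\,\E\,\delta^{t-1}+\cdots$, and the geometric weights $\iota^{s-1}$ in the final sum — precisely the feature that makes the time-uniform Generalisation Error bound of Lemma \ref{lem:GenStronglySmoothDist} possible — would fail to emerge.
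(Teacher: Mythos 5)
Your proposal is correct and follows essentially the same route as the paper's proof: non-expansiveness of the projection, the same three-term decomposition, contraction from Lemma \ref{lem:gradientcontract} on the matched-sample terms, the add-and-subtract device on the isolated $(w,k)$ term so the factor $\iota = 1-\eta\beta\gamma/(\beta+\gamma)$ appears uniformly, and unrolling the resulting linear recursion. The only (immaterial) difference is the symmetric choice of intermediate gradient: you add and subtract $\eta\nabla\ell(X^{t-1}_w,\widetilde Z_{w,k})$ so both gradients match at $\widetilde Z_{w,k}$, whereas the paper adds and subtracts $\eta\nabla\ell(\widetilde X(w,k)^{t-1}_w,Z_{w,k})$ so they match at $Z_{w,k}$; either way the residual is bounded by $2\eta L$ and the same recursion follows.
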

\begin{proof}[Proposition \ref{prop:distributedstabilityStrongly}]
The proof follows the same outline for the proof of Proposition \ref{prop:distributedstability}. Consider the same setup and notation there defined. Using the non-expansive property of the Euclidean projection, the triangle inequality, and the fact that $\{X^{t-1}_v\}_{v\in V}, \{\widetilde X(w,k)^{t-1}_v\}_{v\in V}$, $\mathcal{D}$, and $\mathcal{\widetilde D}$ are measurable with respect to $\mathcal{F}_{t-1}$, we get
\begin{align}
& \E[\delta(w,k)^{t}_v|\mathcal{F}_{t-1}] \le \E \|  \widetilde X(w,k)^t_v - X^t_v\| \nonumber \\
& \leq
\sum_{l \not= w} P_{vl}
\E \Big[ 
\Big\|
\widetilde X(w,k)^{t-1}_l \!-\! X^{t-1}_l 
\!-\! \eta \Big(
\nabla \ell(\widetilde X(w,k)^{t-1}_l,Z_{l,K^{t}_l}) \!-\!  
\nabla \ell(X^{t-1}_l,Z_{l,K^{t}_l})
\Big)
\Big \| \Big| \mathcal{F}_{t-1} \Big]\label{equ:proof:strongstability:equ1}   \\
&\quad +
\frac{P_{vw}}{m} \sum_{i \not= k}\Big\|
\widetilde X(w,k)^{t-1}_w - X^{t-1}_w 
- \eta \Big(
\nabla \ell(\widetilde X(w,k)^{t-1}_w,Z_{w,i}) -  
	\nabla \ell(X^{t-1}_w,Z_{w,i})
\Big)
\Big\| \label{equ:proof:strongstability:equ2} \\
&\quad + \frac{P_{vw}}{m} \Big\|
\widetilde X(w,k)^{t-1}_w - X^{t-1}_w 
- \eta \Big(
\nabla \ell(\widetilde X(w,k)^{t-1}_w,\widetilde{Z}_{w,k}) -  
	\nabla \ell(X^{t-1}_w,Z_{w,k})
\Big)
\Big\| \label{equ:proof:strongstability:equ3}.
\end{align}
Term \eqref{equ:proof:strongstability:equ3} is the only one to involve the difference of two gradients evaluated at different data points ($\widetilde{Z}_{w,k}$ and $Z_{w,k}$). To use the contraction property arising from strong convexity, add and subtract the term $\eta \nabla \ell(\widetilde X(w,k)^{t-1}_w,Z_{w,k})$ inside the norm, and use the Lipschitz property
to get 
\begin{align*}
\eqref{equ:proof:strongstability:equ3} 
\leq 
\frac{P_{vw}}{m} \Big\|
\widetilde X(w,k)^{t-1}_w - X^{t-1}_w 
- \eta \Big(
\nabla \ell(\widetilde X(w,k)^{t-1}_w,Z_{w,k}) -  
	\nabla \ell(X^{t-1}_w,Z_{w,k})
\Big)
\Big\| + \frac{2 \eta L }{m}P_{vw}.
\end{align*}
To bound terms \eqref{equ:proof:strongstability:equ1} and \eqref{equ:proof:strongstability:equ2}, as well as the bound above for \eqref{equ:proof:strongstability:equ3}, we use the contraction property of the gradient updates from Lemma \ref{lem:gradientcontract}, specifically, the inequality $
\| x - y - \eta (\nabla \ell(x,z) - \nabla \ell(y,z))\| \leq 
( 1 - \frac{\eta \beta \gamma}{\beta + \gamma}) \|x - y \|
$ for $x,y \in \mathbb{R}^d$, $z \in \mathcal{Z}$, and $\eta \le  \frac{2}{\beta + \gamma}$. We get
\begin{align*}
	& \E [ \delta(w,k)^{t}_v | \mathcal{F}_{t-1} ] \\
	& \leq 
	\Big( 1 - \frac{\eta \beta \gamma}{\beta + \gamma}\Big) \bigg[ 
	\sum_{l \not= w} P_{vl}\delta(w,k)^{t-1}_l 
	+  \Big( 1 - \frac{1}{m}\Big) 
	P_{vw} \delta(w,k)^{t-1}_w + 
	\frac{1}{m} P_{vw} 
	\delta(w,k)^{t-1}_w 
	\bigg] + \frac{2 \eta L}{m} P_{vw} \\
	& = \Big( 1 - \frac{\eta \beta \gamma}{\beta + \gamma}\Big) 
	\sum_{l \in V} P_{vl} \delta(w,k)^{t-1}_l
	+ \frac{2 \eta L}{m} P_{vw}.
\end{align*}
In vector notation, the above reads
\begin{align*}
\E\, \delta(w,k)^{t} & \leq  
\Big( 1 - \frac{\eta \beta \gamma}{\beta + \gamma}\Big) 
P\, \E\,\delta(w,k)^{t-1} + \frac{2 \eta L}{m} P e_w  
 \leq \frac{2 \eta L }{m} \sum_{s=1}^{t-1} 
\Big( 1 - \frac{\eta \beta \gamma}{\beta + \gamma}\Big)^{s-1}  P^s e_w 
\end{align*}
where we used $\delta(w,k)^1_l = \| \widetilde X(w,k)^1_l - X^1_l\| =0$ for all $l \in V$.
\end{proof}
With Proposition \ref{prop:distributedstabilityStrongly} in hand, we prove Lemma \ref{lem:GenStronglySmoothDist}.\\

\begin{proof}[Lemma \ref{lem:GenStronglySmoothDist}]
Plugging the bound from Proposition \ref{prop:distributedstabilityStrongly} into the identity from Proposition \ref{prop:genfromstab}, using that the rows of the matrix $P$ sum to $1$, we get
\begin{align*}
	\E [r(X_v^t)-R(X_v^{t})]
	& \le 
	\frac{L}{nm}\sum_{w \in V}  \sum_{k=1}^m
	\E
	\,\delta(w,k)^t_v
	 \le \frac{2 \eta L^2}{mn} \sum_{s=1}^{t-1}  
	 \Big( 1 - \frac{\eta \beta \gamma}{\beta + \gamma}\Big)^{s-1},
\end{align*}
and the proof is concluded by summing the geometric projection for $t$ going to infinity, using that the assumption $\eta \le \frac{2}{\beta + \gamma}$ implies that $\frac{\eta \beta \gamma}{\beta + \gamma} < 1$.
\end{proof}

\subsection{Proof of Generalisation Error Bound for Non-Smooth Losses}
\label{app:Lipschitz:Gen}
This section presents Generalisation Error bounds for Distributed SGD when losses are assumed to be non-smooth, aligning with Lemma \ref{lem:Lipschitz:GenError} within the main body of the text.
In this case we follow the approach in \cite[Appendix B]{lin2016generalization} that involves controlling the Generalisation Error by using standard Rademacher complexity's arguments through Assumption \ref{assumptions} \textit{(b)} and  bounding the norm of the iterates through Assumption \ref{assumptions} \textit{(a)}.
We start by presenting Lemma \ref{lem:Lipschitz:IterBound} which bounds the iterates produced by the Distributed SGD. This is followed by the proof for the Generalisation Error bound for non-smooth losses Lemma \ref{lem:Lipschitz:GenError}.
\begin{lemma}
\label{lem:Lipschitz:IterBound}
Assume there exist $C \le B$ such that for each $z \in \mathcal{Z}$  the function $\ell(\,\cdot\,,z)$ is convex, $L$-Lipschitz, bounded above at zero $\ell(0,z) \leq B$, and bound uniformly from below $\ell(x,z) \ge C$ for $x \in \mathbb{R}^d$. Let $X^1_v = 0$ for all $v \in V$. Then, Distributed SGD yields, for any $v \in V$ and $t \geq 1$, 
$$
	\|X^t_v\| \leq \sqrt{(t-1)( \eta^2 L^2 + 2 \eta (B-C) )}.
$$ 
\end{lemma}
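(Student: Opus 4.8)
The plan is to control the squared Euclidean norm $\|X^t_v\|^2$ by exploiting the two-step structure of the update \eqref{alg:ditributedSGD}: the local subgradient step $Y^{s+1}_w = X^s_w - \eta\,\partial\ell(X^s_w, Z_{w,K^{s+1}_w})$ followed by the consensus average $X^{s+1}_v = \sum_{w\in V} P_{vw} Y^{s+1}_w$. First I would show that a single gradient step inflates the squared norm by at most the additive constant $c := \eta^2 L^2 + 2\eta(B-C)$, independently of the node and of the data index sampled; then I would show that the consensus step cannot amplify the resulting squared norms, because $P$ is row-stochastic with nonnegative entries. Iterating both facts from the zero initialisation will give a bound that grows only linearly in the number of steps.

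For the gradient step, writing $z = Z_{w,K^{s+1}_w}$ and $g = \partial\ell(X^s_w, z)$, I would expand $\|Y^{s+1}_w\|^2 = \|X^s_w\|^2 - 2\eta\langle g, X^s_w\rangle + \eta^2\|g\|^2$. The last term is at most $\eta^2 L^2$, since $L$-Lipschitzness bounds every subgradient of $\ell(\cdot, z)$ in norm by $L$. For the cross term, the subgradient inequality for the convex function $\ell(\cdot, z)$ evaluated at the point $0$ gives $\langle g, X^s_w\rangle \ge \ell(X^s_w, z) - \ell(0, z)$, so that $-2\eta\langle g, X^s_w\rangle \le 2\eta\big(\ell(0,z) - \ell(X^s_w, z)\big) \le 2\eta(B-C)$, where the last step uses the hypotheses $\ell(0,z)\le B$ and $\ell(x,z)\ge C$. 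Collecting terms yields the deterministic bound $\|Y^{s+1}_w\|^2 \le \|X^s_w\|^2 + c$.

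For the consensus step, convexity of $\|\cdot\|^2$ together with $P_{vw}\ge 0$ and $\sum_{w\in V} P_{vw} = 1$ gives, by Jensen's inequality, $\|X^{s+1}_v\|^2 \le \sum_{w\in V} P_{vw}\|Y^{s+1}_w\|^2 \le \sum_{w\in V} P_{vw}\|X^s_w\|^2 + c$. Collecting the squared norms into the vector $a^s := (\|X^s_v\|^2)_{v\in V}$, this is the coordinate-wise recursion $a^{s+1} \le P a^s + c\,\mathbf{1}$ with $\mathbf{1}$ the all-ones vector. Since $P$ has nonnegative entries it preserves coordinate-wise inequalities and satisfies $P\mathbf{1} = \mathbf{1}$, so starting from $a^1 = 0$ (because $X^1_v = 0$) the recursion telescopes to $a^t \le (t-1)c\,\mathbf{1}$. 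Reading off the $v$-th coordinate and taking square roots gives the claim. The whole argument is deterministic, so no expectation or martingale bookkeeping is required; the only point needing care is the vector recursion, where the nonnegativity of $P$ (for monotonicity) and the identity $P\mathbf{1}=\mathbf{1}$ (which absorbs the averaging) are exactly what keep the per-step inflation from being amplified by the network and make it accumulate merely linearly in $t$.
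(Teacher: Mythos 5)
Your proposal is correct and follows essentially the same argument as the paper: the identical per-step inflation bound $\eta^2L^2 + 2\eta(B-C)$ obtained from the subgradient inequality centred at $0$ together with the bounds $\ell(0,z)\le B$ and $\ell(x,z)\ge C$, followed by a telescoping recursion that uses the stochasticity of $P$ to prevent the consensus step from amplifying the norms. The only cosmetic difference is that the paper applies the triangle inequality and tracks the scalar $\max_{v\in V}\|X^t_v\|^2$, while you apply Jensen's inequality to $\|\cdot\|^2$ and track the coordinate-wise vector recursion $a^{s+1}\le Pa^s + c\,\mathbf{1}$; both are equivalent routes to the same linear-in-$t$ bound.
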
 
\begin{proof}
Let $x \in \mathbb{R}^d$. By the Distributed SGD update \eqref{alg:ditributedSGD} we get
\begin{align}
\label{lem:Lipschitz:equ1}
	\|X^t_v - x \| \leq  
	\sum_{w \in V} P_{vw} 
	\|
	X^{t-1}_w - \eta \partial \ell(X^{t-1}_w,Z_{w,K^t_w}) - x
	\|.
\end{align}
The convexity of $\ell(\,\cdot\,,z)$ yields
$$	
	\langle \partial \ell(X^{t-1}_w,Z_{w,K^{t}_w}),x - X^{t-1}_w\rangle 
	\leq \ell(x,Z_{w,K^{t}_w}) - \ell(X^{t-1}_w,Z_{w,K^{t}_w}),
$$
and the Lipschitz continuity of $\ell(\,\cdot\,,z)$ yields $\|\partial \ell(X^{t-1}_w,Z_{w,K^t_w}) \| \leq L$. Thus,
\begin{align*}
& \|X^{t-1}_w - \eta \partial \ell(X^{t-1}_w,Z_{w,K^t_w})  - x \|^2  \\
& = 
\|X^{t-1}_w -  x\|^2 + \eta^2 \|\partial \ell(X^{t-1}_w,Z_{w,K^t_w}) \|^2  
+ 2 \eta \langle
\partial \ell(X^{t-1}_w,Z_{w,K^t_w}),x - X^{t-1}_w
\rangle \\
& \leq 
\|X^{t-1}_w -  x\|^2 + \eta^2 L^2 
+ 2\eta ( \ell(x,Z_{w,K^{t}_w}) - \ell(X^{t-1}_w,Z_{w,K^{t}_w}) ).
\end{align*}
Setting $x=0$, using that $\ell(X^{t-1}_w,Z_{w,K^{t}_w}) \geq C$ as well as the assumption $\ell(0,Z_{w,K^{t}_w}) \leq B$, we get
\begin{align*}
	\|X^{t-1}_w - \eta \partial \ell(X^{t-1}_w,Z_{w,K^t_w}) \|^2 
	 \leq 
	\|X^{t-1}_w \|^2 + \eta^2 L^2 
	+ 2\eta (B-C).
\end{align*}
Using that the matrix $P$ is doubly stochastic, the bound \eqref{lem:Lipschitz:equ1} yields the recursion
\begin{align*}
\max_{v\in V} \|X^t_v \|^2
\leq \max_{w \in V} \|X^{t-1}_w - \eta \partial \ell(X^{t-1}_w,Z_{w,K^t_w}) \|^2
\leq \max_{v \in V} \|X^{t-1}_v\|^2 + \eta^2 L^2 + 2 \eta (B-C),
\end{align*}
so that
$
	\|X^{t}_v\| \leq \max_{v \in V} \|X^t_v \|  \leq \sqrt{ (t-1) (\eta^2 L^2 + 
	2 \eta (B-C) )}.
$
\end{proof}


\begin{proof}[Lemma \ref{lem:Lipschitz:GenError}] Standard Rademacher complexity's arguments utilising the symmetrisation technique and Assumption \ref{assumptions} \textit{(b)} yield, for any $\widetilde{\mathcal{X}} \subseteq \mathcal{X}$,
$$
	\E \sup_{x \in \mathcal{\widetilde X}}(r(x) - R(x))
	\leq 
	2 
	\E \sup_{x \in \mathcal{\widetilde X}} 
	\frac{1}{nm} \sum_{i=1}^{nm} \sigma_i \ell(x,z_i)
	\leq 2 D \frac{\sup_{x \in \mathcal{\widetilde X}} \|x\| }{\sqrt{nm}}.
$$
By Lemma \ref{lem:Lipschitz:IterBound} we know that the iterates are contained in the ball $\mathcal{\widetilde X  } = \{x \in \mathbb{R}^d: \|x\| \leq \sqrt{(t-1)(\eta^2L^2 + 2\eta (B-C))} \}$, so that
\begin{align*}
	\E[ r(X^t_{v}) - R(X^t_v)] \leq 
	\E \sup_{x \in \mathcal{\widetilde X} }(r(x) - R(x))
	\leq 2 D \sqrt{ \frac{ (t-1)(\eta^2L^2 + 2\eta (B-C)) }{nm}}.
\end{align*}
\end{proof}

\subsection{Proof of Test Error Bounds for Smooth and Non-Smooth Losses}
\label{app:proof:TestError}
This section gives the proofs of the Test Error bounds presented within the main body of the work, namely Theorem \ref{thm:convexrisknmin} for convex, Lipschitz, and smooth losses, and Theorem \ref{thm:convexrisknmin_nonsmooth} for convex and Lipschitz losses. This is achieved by using the error decomposition given in Proposition \ref{prop:disterrorsplit}, and by bringing together the Generalisation Error bounds and the Optimisation Error bounds in Section \ref{sec:ErrorBounds}.\\

\begin{proof}[Theorem \ref{thm:convexrisknmin}]
By the convexity of the Test Risk $r$, using Proposition \ref{prop:disterrorsplit}, we get
\begin{align*}
	\E\,  r\Big( \frac{1}{t} \sum_{s=1}^{t} X_{v}^{s+1}\Big) - r(x^\star)  
	\leq
	\frac{1}{t} \sum_{s=1}^t 
	\Big(
	\underbrace{\E [r(X_{v}^{s+1}) - R(X_{v}^{s+1})]
	}_{\text{Generalisation Error}}
	+ 
	\underbrace{\E[R(X_{v}^{s+1}) - R(X^\star) ]
	}_{\text{Optimisation Error}}
	\Big).
\end{align*}
The proof follows by applying Lemma \ref{lem:GenSmoothDist} for the Generalisation Error, which yields
\begin{align*}
\frac{1}{t} \sum_{s=1}^t \E [r(X_{v}^{s+1}) - R(X_{v}^{s+1})] & \leq \frac{2 \eta L^2}{n m}  
\frac{1}{t} \sum_{s=1}^t s = \frac{ \eta L^2}{n m}(t+1),
\end{align*}
and by the Optimisation Error bound from Lemma \ref{lem:DistSGD:ConvexSmooth:OPT}.
\end{proof}
\begin{proof}[Theorem \ref{thm:convexrisknmin_nonsmooth}] 
By the convexity of the Test Risk $r$, using Proposition \ref{prop:disterrorsplit}, we get
\begin{align*}
& 	\E\,  r\Big( \frac{1}{t} \sum_{s=1}^{t} 
	X_{v}^{s}\Big) - r(x^\star)  
	\leq
	\frac{1}{t} \sum_{s=1}^t 
	\Big(\underbrace{ \E [r(X_{v}^{s}) - R(X_{v}^{{s}})]
	}_{\text{Generalisation Error}}
	+
	\underbrace{\E[R(X_{v}^{s}) - R(X^\star) ]
	}_{\text{Optimisation Error}}
	\Big).
\end{align*}
The proof follows by applying Lemma \ref{lem:Lipschitz:GenError} for the Generalisation Error, which yields
\begin{align*}
\frac{1}{t} \sum_{s=1}^{t} \E [r(X_{v}^{s}) - R(X_{v}^{{s}})]
& \leq 
2 D \sqrt{ \frac{ (t-1)(\eta^2L^2 + 2\eta (B-C)) }{nm}},
\end{align*}
and by the Optimisation Error bound from Lemma \ref{lem:DistSGD:Convex:OPT}.
\end{proof}

\subsection{Calculations for Corollary \ref{Cor:smoothrates} (Convex, Lipschitz, and Smooth)}
\label{App:Calculations:Smooth}
This section presents the calculations needed to populate the table of rates in Corollary \ref{Cor:smoothrates} in the case of convex, Lipschitz, and smooth losses. The simplification $1/(\beta + 1/\rho) \leq \rho$ is used. Additionally, minimisations are performed up to first-order terms in $\rho$, possibly neglecting logarithmic terms. This section is divided into four parts: 
\begin{itemize}
\item \textbf{Optimisation Error} calculates the step size $\rho^\star_{\mathrm{Opt}}$ minimising the Optimisation Error bound; 
\item \textbf{Test Error} calculates the step size $\rho^\star_{\mathrm{Test}}$ that minimises the Test Error bound; 
\item \textbf{Early Stopping Optimisation} calculates the number of iterations that minimises the Test Error bound when the step size 
$\rho^\star_{\mathrm{Opt}}$ is used; 
\item \textbf{Early Stopping Centralised} calculates the number of iterations that minimises the Test Error bound when the step size $\rho^\star=O(1/\sqrt{t})$ is used.
\end{itemize}
\paragraph{Optimisation Error.}
Optimising over first-order terms in  $\rho$ in the Optimisation Error bound of Lemma \ref{lem:DistSGD:ConvexSmooth:OPT} with $ 1/(\beta + 1/\rho)  \leq \rho $ we get
\begin{align*}
\rho_{\mathrm{Opt}}^{\star} = \argmin_{\rho} \Big\{ 
 	\frac{\rho}{2} \sigma^2 + \frac{ G^2}{2t \rho} +
 	3 L \kappa \rho  \frac{\log((t+1)\sqrt{n})}{1-\sigma_2(P)} 
 	\Big\}
= \frac{G}{\sqrt{t} } \frac{1}{\sqrt{ 6L \kappa \frac{ \log((t+1) \sqrt{n})}{ 1 - \sigma_2(P)}  + \sigma^2} },
\end{align*}
which yields the Optimisation Error bound 
\begin{align*}
	\E\Big[  
	R\Big( \frac{1}{t} \sum_{s=1}^{t} X^{s+1}_v \Big)	 
	-  R(X^\star) \Big]
	&  \leq \frac{G}{\sqrt{t}} \sqrt{ 6L \kappa \frac{ \log((t+1) \sqrt{n})}{ 1 - \sigma_2(P)}  
	+ \sigma^2}  \\
	&\quad+
	\frac{\beta G^2}{2 t} \Big[ 1 +
	\frac{9 (3+\beta \rho) \kappa }{6L }
	\frac{ \frac{\log^2((t+1)\sqrt{n})}{ (1-\sigma_2(P))^2 } }{
	\frac{ \log((t+1) \sqrt{n})}{ 1 - \sigma_2(P)}  + \frac{\sigma^2}{6L \kappa }  }
	\Big].
\end{align*}
This bound is $\widetilde{O}(1/\sqrt{ (1-\sigma_2(P))t})$ as the second term is 
$\widetilde{O}(1/ ((1-\sigma_2(P))t))$.
\paragraph{Test Error.} Consider the Test Error bound in Theorem \ref{thm:convexrisknmin} with $  1 /(\beta + 1/\rho)  \leq \rho$. Optimising over first-order terms in $\rho$ we get
\begin{align*}
\rho_{\mathrm{Test}}^\star & = \argmin_{\rho} \Big\{ 
\frac{\rho}{2} \sigma^2 + \frac{ G^2}{2t \rho} +
 	3 L \kappa \rho  \frac{\log((t+1)\sqrt{n})}{1-\sigma_2(P)} + \frac{\rho L^2 }{nm}(t+1) \Big\}\\
& = \frac{G}{\sqrt{t} } \frac{1}{\sqrt{ 6L \kappa \frac{ \log((t+1) \sqrt{n})}{ 1 - \sigma_2(P)}  + \sigma^2 + \frac{2L^2(t+1)}{nm} }},
\end{align*}
which yields the Test Error bound 
\begin{align*}
\E\,  r\Big( \frac{1}{t} \sum_{s=1}^{t} X^{s+1}_v \Big)
	-  r(x^\star) 
	& \leq \frac{G}{\sqrt{t}} 
	\sqrt{ 6L \kappa \frac{ \log((t+1) \sqrt{n})}{ 1 - \sigma_2(P)}  
	+ \sigma^2 + \frac{2 L^2}{nm}(t+1) } \\
	& \quad+ 
	\frac{\beta G^2}{2 t} \Big[ 1 +
	\frac{9 (3+\beta \rho) \kappa }{6L }
	\frac{ \frac{\log^2((t+1)\sqrt{n})}{ (1-\sigma_2(P))^2 } }{
	\frac{ \log((t+1) \sqrt{n})}{ 1 - \sigma_2(P)}  + \frac{1}{6L \kappa } (\sigma^2+ 
	\frac{2 L^2}{nm}(t+1))
	 }
	\Big].
\end{align*}
This bound is  
$\widetilde{O}\Big(\sqrt{\frac{1}{t(1-\sigma_2(P))} + \frac{1}{nm}}\Big)$ as the second term is $\widetilde{O}(1/((1-\sigma_2(P))t))$. 
This is $\widetilde{O}(\frac{1}{\sqrt{nm}})$ when $t \gtrsim n m / ( 1 - \sigma_2(P))$.
Note that $t\rightarrow (t+1)/t$ is decreasing and $(t+1)/t \leq 2$ for $t \geq 1$. 
\paragraph{Early Stopping Optimisation.}
Considering the Test Error bound from Theorem \ref{thm:convexrisknmin} with step size $\rho = \rho^\star_{\mathrm{Opt}}$ and $1/(\beta + 1/\rho) \leq \rho$ we get
\begin{align*}
&  \E\, r\Big( \frac{1}{t} \sum_{s=1}^{t} X^{s+1}_v \Big)-  r(x^\star) 
  \leq\frac{\beta G^2}{2 t} \Big[  1 + 
	\frac{9 (3+\beta \rho) \kappa }{6L }
	\frac{ \frac{\log^2((t+1)\sqrt{n})}{ (1-\sigma_2(P))^2 } }{ 
	\frac{ \log((t+1) \sqrt{n})}{ 1 - \sigma_2(P)}  + \frac{\sigma^2}{6L \kappa }  }
	\Big]\\	
	& \qquad\qquad\qquad +  G \Bigg[ 
	\frac{1}{\sqrt{t}} \sqrt{ 6L \kappa \frac{ \log((t+1) \sqrt{n})}{ 1 - \sigma_2(P)} + \sigma^2} + \frac{2 L^2\sqrt{t} }{nm  } 
\sqrt{ \frac{1}{ 6L \kappa \frac{ \log((t+1) \sqrt{n})}{ 1 - \sigma_2(P)}  + \sigma^2 }}
	\Bigg], 
\end{align*}
where $(t+1)/\sqrt{t} \leq 2 \sqrt{t}$ was used. The first term is $\widetilde{O}(1/(1-\sigma_2(P)t))$, while the second term 
 $O\Big(\sqrt{ \frac{\log(t \sqrt{n})}{ t ( 1 -\sigma_2(P)) }} + 
\frac{1}{nm} \sqrt{ \frac{t (  1- \sigma_2(P) ) }{\log(t \sqrt{n}) }}\Big)$ is dominant. To minimise the second term with respect to $t$, consider the more tractable form
\begin{align*}
& 
	\frac{1}{\sqrt{t}} \sqrt{ 6L \kappa 
	\frac{ \log((t+1) \sqrt{n})}{ 1 - \sigma_2(P)} + \sigma^2} + \frac{2 L^2\sqrt{t} }{nm  } 
\sqrt{ \frac{1}{ 6L \kappa \frac{ \log((t+1) \sqrt{n})}{ 1 - \sigma_2(P)}  + \sigma^2 }}
	\\ 
	&\qquad\qquad\qquad \leq
	\frac{\sigma }{\sqrt{t}} + 
	\sqrt{ 6 L \kappa \frac{ \log((t+1) \sqrt{n})}{ t(1 - \sigma_2(P))}}
	+ \frac{2 L^2  }{nm  } 
	\sqrt{ \frac{ t (1-\sigma_2(P))}{6 L \kappa \log((t+1)\sqrt{n})}}.
\end{align*}
An approximate minimiser in $t$ is given by
\begin{align*}
	 \frac{t}{\log((t+1)\sqrt{n})}  & =  \argmin_{c \geq 0} \Bigg\{
	 \sqrt{ \frac{ 6 L \kappa}{ c(1 - \sigma_2(P))}}
	+ \frac{2 L^2  }{nm  } 
	\sqrt{  \frac{  c(1-\sigma_2(P))}{6 L \kappa }}	
	 \Bigg\}
	  = 3 \frac{\kappa}{L} \frac{ nm  }{ 1 - \sigma_2(P) }.
\end{align*} 
This choice yields the Test Error bound
\begin{align*}
\E\,  r\Big( \frac{1}{t} \sum_{s=1}^{t} X^{s+1}_v \Big) -  r(x^\star) 
& \leq 
\frac{\beta G^2 L (1-\sigma_2(P)) }{6 \kappa nm } \Big[  1 + 
	\frac{9 (3+\beta \rho) \kappa }{6L }
	\frac{ \frac{\log^2((t+1)\sqrt{n})}{ (1-\sigma_2(P))^2 } }{ 
	\frac{ \log((t+1) \sqrt{n})}{ 1 - \sigma_2(P)}  + \frac{\sigma^2}{6L \kappa }  }
	\Big] \\
	&\quad + \frac{G}{\sqrt{nm}} 
	\Big[ \sigma \sqrt{ \frac{ L (1 - \sigma_2(P))  }{ 3 \kappa}}
	+ 
	2\sqrt{2} L \Big],
\end{align*}
which is a $\widetilde{O}(\frac{1}{\sqrt{nm}})$ Test Error bound obtained with $t \simeq nm/(1-\sigma_2(P))$ iterations.  
\paragraph{Early Stopping Centralised.} 
Considering the Test Error bound of Theorem \ref{thm:convexrisknmin} with $1/{\beta + 1/\rho} \leq \rho$  and $\rho = \rho^\star= \frac{G}{Lc \sqrt{ t }}$ we get
\begin{align*}
\E\,  r\Big( \frac{1}{t} \sum_{s=1}^{t} X^{s+1}_v \Big) -  r(x^\star)   
& \leq 
	\frac{G}{c} \Big[ 
	3 \kappa  \frac{\log((t+1)\sqrt{n})}{ (1-\sigma_2(P))\sqrt{t} }   
	+ \frac{2  L }{ nm} \sqrt{t} 
	\Big]  \\
	& \quad  + \frac{G }{2 \sqrt{t} }\Big( \frac{\sigma^2}{cL}  + cL \Big)
	+ \frac{ \beta G^2}{2t} \Big[1 +  
	 \frac{9 (3+ \beta \rho)\kappa^2 }{ c^2 L^2 } \frac{\log^2((t+1)\sqrt{n})}
	{( 1-\sigma_2(P))^2 }  \Big],
\end{align*}
where $(t+1)/\sqrt{t} \leq 2 \sqrt{t}$  for $t \geq 1$ was used on the Generalisation Error bound. The above bound is dominated by the first term which  is 
$\widetilde{O}\Big( \frac{1}{ ( 1 - \sigma_2(P)) \sqrt{t} } + \frac{\sqrt{t}}{nm} \Big)$.
Minimising up to log terms yields
$$
	t  =  \frac{3\kappa}{2L } \frac{nm }{1-\sigma_2(P)}.
$$
This choice yields the Test Error bound
\begin{align*}
& \E\,  r\Big( \frac{1}{t} \sum_{s=1}^{t} X^{s+1}_v \Big) -  r(x^\star)   \leq 
\frac{G}{c} \sqrt{ \frac{6 \kappa L}{nm(1-\sigma_2(P))}} \Big[ \log((t+1)\sqrt{n}) +1\Big] \\
& + \sqrt{ \frac{L (1-\sigma_2(P))}{ 6 \kappa nm} }\Big( \frac{\sigma^2}{cL}  + cL \Big)
 + \frac{ L \beta G^2 (1-\sigma_2(P))}{3 \kappa nm} \Big[1 +  
	 \frac{9  (3+ \beta \rho)\kappa^2 }{ c^2 L^2 } \frac{\log^2((t+1)\sqrt{n})}
	{( 1-\sigma_2(P))^2 }  \Big],
\end{align*}
which is dominated by the first term that is $\widetilde{O}\Big(\frac{1}{\sqrt{nm(1-\sigma_2(P))}} \Big)$, as the third term is $\widetilde{O}\Big( \frac{1}{nm(1-\sigma_2(P))}\Big)$.
\subsection{Calculations for Corollary \ref{Cor:lipschitzrates} (Convex and Lipschitz) }
\label{App:Calculations:Lipschitz}
This section presents the calculations needed to populate the table of rates in Corollary \ref{Cor:lipschitzrates} in the case of convex and Lipschitz losses. This section is divided into four parts: 
\begin{itemize}
\item \textbf{Optimisation Error} calculates the step size $\eta_{\mathrm{Opt}}^\star$ minimising the Optimisation Error bound; 
\item \textbf{Test Error} calculates the step size $\eta_{\mathrm{Test}}^\star$ that minimises the Test Error bound; 
\item \textbf{Early Stopping Optimisation} calculates the number of iterations that minimises the Test Error bound when the step size $\eta_{\mathrm{Opt}}^\star$ is used; 
\item \textbf{Early Stopping Centralised} calculates the number of iterations that minimises the Test Error when the step size $\eta^\star=O(1/\sqrt{t})$ is used.
\end{itemize}
\paragraph{Optimisation Error.} 
Minimising the Optimisation Error bound in Lemma \ref{lem:DistSGD:Convex:OPT}  with respect to the step size yields $\eta = \eta^{\star}_{\mathrm{Opt}} = \frac{G}{L \sqrt{19 t}} \sqrt{\frac{1-\sigma_2(P)}{\log(t \sqrt{n})}} $ and
$$
	\E\Big[ R \Big( \frac{1}{t} \sum_{s=1}^t X^s_v \Big) - R(X^\star) \Big]
	\le \sqrt{19} \frac{GL}{\sqrt{t}} \sqrt{\frac{\log(t \sqrt{n} ) }{ 1-\sigma_2(P) }} .
$$
\paragraph{Test Error.} In this section the step size 
$$\eta = \eta^{\star}_{\mathrm{Test}} = \frac{G}{L\sqrt{t}} 
\frac{1}{  \sqrt{ \frac{19}{2}  \frac{\log( t \sqrt{n})}{1-\sigma_2(P))} 
+ \frac{ t }{ (nm)^{2/3} } } }$$ 
is shown to ensure that the Test Error bound in Theorem \ref{thm:convexrisknmin_nonsmooth} converges in a time uniform manner to a quantity of order
$\widetilde{O}(1/(nm)^{1/3})$. We consider the Optimisation and Generalisation Error separately. The Optimisation Error bound with this step size yields
\begin{align*}
	 \frac{19 }{2} 
	\frac{\eta^{\star}_{\mathrm{Test}}  L^2 \log(t \sqrt{n}) }{1-\sigma_2(P) }
	+ \frac{G^2}{2 \eta^{\star}_{\mathrm{Test}} t} 
	& = 
	\frac{GL}{\sqrt{t}}  
	\sqrt{ \frac{19}{2}  \frac{\log(t \sqrt{n})}{1-\sigma_2(P))} 
	+ \frac{ t }{ (nm)^{2/3} } }
	\Big[	
	\frac{ \frac{19}{2}   \frac{ \log(t \sqrt{n})}{1 - \sigma_2(P)} }{
	\frac{19}{2}  \frac{ \log(t \sqrt{n}) }{ 1 - \sigma_2(P)}
	+ \frac{  t}{ (nm)^{2/3} }   }  
	 + \frac{1}{2}  
	\Big]\\
	& \leq 	\frac{3}{2} \frac{GL}{\sqrt{t}}  
	\sqrt{ \frac{19}{2}  \frac{\log(t \sqrt{n})}{1-\sigma_2(P))} 
	+ \frac{ t }{ (nm)^{2/3} } },
\end{align*}
which is  $\widetilde{O}(\frac{1}{(nm)^{1/3}})$ when the number of iterations satisfies 
$t \geq \frac{19}{2} \log(t \sqrt{n}) (nm)^{2/3}/(1-\sigma_2(P))$.
We split the Generalisation Error bound term into two parts
\begin{align}
\label{equ:cor:nonsmoothcalc:equ1}
	2 D \sqrt{ \frac{ (t-1) (\eta^2L^2 + 2 \eta (B-C) ) }{nm}}  \leq
	2\eta D L \sqrt{\frac{t}{nm}} + 2 \sqrt{2} D \sqrt{\frac{ \eta t (B-C) }{nm}}
\end{align}
and bounded each part separately. The first quantity in \eqref{equ:cor:nonsmoothcalc:equ1} with the step size $\eta = \eta^{\star}_{\mathrm{Test}}$ becomes
\begin{align*}
2 \eta^{\star}_{\mathrm{Test}} D L  \sqrt{  \frac{t}{nm}} 
= \frac{GD}{\sqrt{nm}} 
\frac{1}{  \sqrt{ \frac{19}{2}  \frac{\log(t \sqrt{n})}{1-\sigma_2(P))} 
+ \frac{ t }{ (nm)^{2/3} } } } \leq
\frac{GD}{\sqrt{nm}}  \sqrt{\frac{2}{19} \frac{1-\sigma_2(P)}{\log(t \sqrt{n})}},
\end{align*}
which is $O(1/\sqrt{nm})$, and thus $O(1/(nm)^{1/3})$. For the second quantity in \eqref{equ:cor:nonsmoothcalc:equ1}, its square yields
\begin{align*}
8 D^2 \frac{\eta^{\star}_{\mathrm{Test}} t  (B-C) }{nm} 
& = 8 D^2 \frac{(B-C) G   }{ Lnm } 
\sqrt{ \frac{t }{   \frac{19}{2}  \frac{\log(t \sqrt{n})}{1-\sigma_2(P))} 
+ \frac{t}{(nm)^{2/3} }  } }\\
& = 8 D^2  \frac{(B-C)GL  }{L (nm)^{2/3}} 
\sqrt{ \frac{t }{   \frac{19}{2}  \frac{\log(t \sqrt{n})(nm)^{2/3} }{1-\sigma_2(P))} 
+  t   } } \\
& \leq 8 D^2  \frac{(B-C)G }{L (nm)^{2/3}}.
\end{align*}
Therefore, when using step size $\eta_{\mathrm{Test}}^\star$ with $t \gtrsim (nm)^{2/3}/(1-\sigma_2(P))$ the Test Error is bounded by the sum of three quantities each of which are $\widetilde{O}(1/(nm)^{1/3})$.  

\paragraph{Early Stopping Optimisation.}
Setting $\eta = \eta^\star_{\mathrm{Opt}}$ in the Test Error bound in Theorem \ref{thm:convexrisknmin_nonsmooth} and using \eqref{equ:cor:nonsmoothcalc:equ1} to split the Generalisation Error we get
\begin{align*}
\E\,r \Big( \frac{1}{t} \sum_{s=1}^t X^s_v \Big) - r(x^\star)  
&\leq \sqrt{19} \frac{GL}{\sqrt{t}} \sqrt{\frac{\log(t  \sqrt{n} ) }{ 1-\sigma_2(P) }} \\
&\quad+ \frac{2GD}{\sqrt{19 nm}} 
\sqrt{\frac{ 1-\sigma_2(P)}{\log(t \sqrt{n})}} 
+  2 \sqrt{2} D \sqrt{\frac{G (B-C)}{ L nm} \sqrt{\frac{t (1 - \sigma_2(P))}{19 
\log(t \sqrt{n})}}}.
\end{align*}
This is $O\Big(
\sqrt{\frac{\log(t \sqrt{n})}{t (1-\sigma_2(P))}} + 
\sqrt{\frac{1}{nm} \sqrt{\frac{t (1-\sigma_2(P))}{\log(t \sqrt{n})}}}
\Big)$ as the second term is dominated by the first and third. Approximately minimising the above with respect to the number of iterations by setting 
$t = 19 \log(t \sqrt{n})) L^2 (Gnm)^{2/3} / ( (1-\sigma_2(P)) (2(B-C))^{2/3} D^{4/3} )$ yields 
\begin{align*}
& \E\,r \Big( \frac{1}{t} \sum_{s=1}^t X^s_v \Big) - r(x^\star)  
\leq
 2^{1/3} 3\frac{ G^{2/3} (B-C)^{1/3} D^{2/3}}{(nm)^{1/3}}  + 
 \frac{2GD}{\sqrt{19 nm}} 
\sqrt{\frac{ 1-\sigma_2(P)}{\log(t \sqrt{n})}}.
\end{align*}
This is a $O(1/(nm)^{1/3})$ Test Error bound obtained with $t \simeq (nm)^{2/3}/ (1-\sigma_2(P))$ iterations. 
\paragraph{Early Stopping Centralised.}
Setting $\eta = \eta^\star = \frac{G}{L \sqrt{19 t}}$ in the Test Error bound in Theorem \ref{thm:convexrisknmin_nonsmooth} and using \eqref{equ:cor:nonsmoothcalc:equ1} to split the Generalisation Error gives
\begin{align*}
\E\,r \Big( \frac{1}{t} \sum_{s=1}^t X^s_v \Big) - r(x^\star)  
& \leq 
GL \sqrt{ \frac{19}{ t }}
\frac{ \log(t \sqrt{n})}{1-\sigma_2(P)} +  \frac{2 G D}{\sqrt{19 nm}} + 
2 \sqrt{2} D \sqrt{\frac{ G (B-C)}{L nm } 
\sqrt{\frac{t}{19} }},
\end{align*}
which is $\widetilde{O}\Big(\frac{1}{(1-\sigma_2(P))\sqrt{t}}+ \sqrt{\frac{1}{nm}\sqrt{t}}\Big)$ as the second term is dominated by the first and third. Approximately minimising the above with respect to the number of iterations by setting 
$$t = 19 \log^{4/3}(t \sqrt{n})(Gnm)^{2/3} L^2 /((1-\sigma_2(P)^{4/3} (2(B-C))^{2/3}  D^{4/3} )$$ gives 
\begin{align*}
\E\,r \Big( \frac{1}{t} \sum_{s=1}^t X^s_v \Big) - r(x^\star)  
& \leq 2^{1/3} 3 \Big(\frac{\log(t \sqrt{n})}{1-\sigma_2(P)}\Big)^{1/3} 
\frac{G^{2/3}(B-C)^{1/3}D^{2/3}}{(nm)^{1/3}} + \frac{2 G D}{\sqrt{19 nm}}.
\end{align*}
This is $\widetilde{O}(1/(nm(1-\sigma_{2}(P))^{1/3})$ and is obtained with $t \simeq (nm)^{2/3}/(1-\sigma_2(P))^{4/3}$ iterations.

\section{Proofs of Optimisation Error bounds}
\label{sec:distOptimBounds}
This appendix presents Optimisation Error  bounds for the Distributed Stochastic Subgradient Descent algorithm. Here we consider the general setting of stochastic first-order oracles. The Optimisation Error bounds presented within the main body of this work, specifically Lemma \ref{lem:DistSGD:ConvexSmooth:OPT} and Lemma \ref{lem:DistSGD:Convex:OPT} for smooth and non-smooth losses, follow from Corollary \ref{cor:diststochopt_smooth} and Corollary  \ref{cor:diststochopt}  within this appendix.

\subsection{Setup}
\label{sec:setupOpt}
Let $(V,E)$ be a simple undirected graph with $n$ nodes, and let $P\in\R^{n\times n}$ be a doubly stochastic matrix supported on the graph, i.e., $P_{ij} \not= 0$ only if $\{i,j\} \in E$. For each $v\in V$, let $F_v:\R^d\rightarrow\R$ be a random convex function. We consider the problem of minimizing the function
$
	x\rightarrow \overline F(x):=\frac{1}{n}\sum_{v\in V} F_v(x)
$
over $x\in\R^d$. Let $X^\star$ be a minimum of $\overline F$. Assume that $\E[\| X^\star \|^2] \le G^2$ for a positive constant $G$. Given the initial vectors $\{X^1_v=0\}_{v\in V}$, throughout this appendix, we consider the following update for $s \ge 1$:
\begin{align}
	X^{s+1}_v = \sum_{w\in V} P_{vw} ( X^{s}_w - \eta G_{w}^{s+1}),
	\label{alg:ditributedgrad}
\end{align}
where $\eta>0$ is the step size, and each $G_{v}^{s+1}\in\R^d$ is an estimator of the subgradient of $F_v$ evaluated at $X^{s}_v$. Specifically, for each $s\ge 1$ let $\mathcal{F}_{s}$ be the $\sigma$-algebra generated by the random functions $\{F_v\}_{v\in V}$ and by the estimators $\{G_v^k\}_{k \in \{2,\ldots,s\}}$. 
We have, for any $s\ge 1$, $v\in V$,
\begin{align}
	\E[G_{v}^{s+1}|\mathcal{F}_{s}] &\in \partial F_v(X^{s}_v). \label{AppendixB:Assumptions1}
\end{align}
Note that both $\{X^s_v\}_{v\in V}$ and $X^\star$ are measurable with respect to $\mathcal{F}_s$.
Assume, for any $s\ge 1$, $v\in V$,
\begin{align}
	\E[ \|G_{v}^{s+1}\|^2 |\mathcal{F}_{s}] &\leq L^2. \label{AppendixB:Assumptions2}
\end{align}
%
Section \ref{App:Opt:LipConvex} presents results for the setting just introduced under the additional assumption that the functions $\{F_v\}_{v\in V}$ are $L$-Lipschitz. Section \ref{app:Opt:Smooth} presents results for the case where the functions $\{F_v\}_{v\in V}$ are smooth (Lipschitz continuity is not assumed in this case). Finally, Section \ref{proof:theorem:ConvexOPT} checks that the assumptions of this general setting are satisfied for the specific case of algorithm \eqref{alg:ditributedSGD}.

The bounds that we derive are proved controlling the deviation of the output of the algorithm $X_v^s$ from its network average $\overline X^s:=\frac{1}{n}\sum_{v\in V} X^s_v$ on the one hand (\emph{network term}), and bounding the deviation of $\overline X^s$ from the solution $X^\star$ on the other end (\emph{optimisation term}). This strategy was originally proposed in \cite{Nedic2009} and used in \cite{DAW12} to get bounds that depend on the graph topology for a dual method in constrained optimisation. In the smooth case, we present a bound that also depends on the noise of the gradient (\emph{gradient noise term}).
\begin{remark}
\label{rem:comparisonToCentralised}
The bounds that we derive naturally generalise the bounds in the centralised setting. If no gradient noise is present and all the functions $\{F_v\}_{v\in V}$ are the same, then the network terms vanish as there is no difference between $X_v^s$ and $\overline X^s$ (recall that the initial conditions are the same for each node, i.e., $X^1_v=0$ for all $v\in V$) and optimal tuning of the step sizes recovers the same rates as for Centralised SGD: $O(1/\sqrt{t})$ for the Lipschitz case and $O(1/t)$ for the smooth case.
\end{remark}
As the matrix $P$ is doubly stochastic, the network average $\overline X^{s}$ admits the following simple evolution:
\begin{align}
	\overline X^{s+1}
	= \overline X^{s}
	- \eta \frac{1}{n} \sum_{v\in V} G_{v}^{s+1}.
	\label{alg:ditributedgradaverage}
\end{align}
In particular, note that by rearranging the previous expression we get
\begin{align}
	\frac{1}{n} \sum_{v\in V} G_{v}^{s+1}
	= \frac{1}{\eta}(\overline X^{s} - \overline X^{s+1}),
	\label{alg:ditributedgradaverageOrdered}
\end{align}
which will be used in the proofs in the next sections.


Before moving on to the next sections and presenting the Optimisation Error bounds, we establish bounds on the network terms that hold in the setting introduced so far. The next proposition bounds the deviation of $X^s_v$ from $\overline X^s$ as a function of the second largest eigenvalue in magnitude of the matrix $P$, i.e., $\sigma_2(P)$. We present different bounds, that either depend on the Lipschitz parameter $L$ or on a \emph{Gradient Noise and Function Deviation Term} $\kappa$, as defined in \eqref{def:networkNoise}. If no gradient noise is present and all the functions $\{F_v\}_{v\in V}$ are the same, then $\kappa=0$, reflecting the comment in Remark \ref{rem:comparisonToCentralised}.

\begin{proposition}[Network term]
\label{prop:network}
Consider the assumptions of Section \ref{sec:setupOpt}.
Let $\kappa^2$ be such that, for any $v\in V,s \geq 1$,
\begin{align}
        \underbrace{\E \Big[\Big\| G_{v}^{s+1} - \frac{1}{n}\sum_{\ell=1}^n \nabla F_\ell(X^{s}_\ell) \Big\|^2\Big]}_{\text{Gradient Noise and Function Deviation Term}}
         &\le \kappa^2.
        \label{def:networkNoise}
\end{align}
For any $v\in V,s \geq 1$, we have
\begin{align*}
        \E[\| X^{s}_v - \overline X^{s}\|^2]
        &\le \eta^2 \min\{L^2, \kappa^2\} \bigg( 2 \frac{\log (s\sqrt{n})}{1-\sigma_2(P)} + 1 \bigg)^2.
\end{align*}
\end{proposition}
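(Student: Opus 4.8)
The plan is to unroll the recursion, use the spectral gap of $P$ against the averaging direction, and split a geometric sum at a logarithmic threshold. Stack the iterates as rows of a matrix $X^s\in\R^{n\times d}$ and let $J:=\frac1n\mathbf{1}\mathbf{1}^\top$ be the averaging matrix, so that $JX^s$ has every row equal to $\overline X^s$ and the quantity of interest is the $v$-th row of $(I-J)X^s$. Since $X^1=0$, writing \eqref{alg:ditributedgrad} in matrix form as $X^{s+1}=P(X^s-\eta G^{s+1})$ and unrolling gives $X^s=-\eta\sum_{r=2}^s P^{s-r+1}G^r$. Because $P$ is doubly stochastic we have $PJ=JP=J^2=J$, hence $(I-J)P^k=P^k-J=(P-J)^k$; setting $Q_k:=(P-J)^k=P^k-J$ this yields $X_v^s-\overline X^s=-\eta\sum_{r=2}^s (Q_{s-r+1}G^r)_v$. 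Two facts about $Q_k$ will drive the proof: each row of $Q_k$ sums to zero (so $G^r$ may be replaced by any shift that is constant across nodes), and, using that the construction of $P$ is symmetric with top eigenvector $\mathbf 1$, its operator norm satisfies $\|Q_k\|_{\mathrm{op}}=\sigma_2(P)^k$, whence $\|(Q_k)_{v,\cdot}\|_2\le\sigma_2(P)^k$.

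Next I would pass the $L^2(\Omega)$ norm inside the sum over $r$ by Minkowski's inequality, reducing the task to bounding each term $(\E\|(Q_kG^r)_v\|^2)^{1/2}$ with $k=s-r+1$. I will establish two complementary estimates. For the crude one, note $(Q_kG^r)_v=(P^kG^r)_v-\overline{G^r}$ is a difference of two averages of $\{G_w^r\}_{w\in V}$ (the first because row $v$ of $P^k$ is a probability vector), so convexity of $\|\cdot\|^2$ together with \eqref{AppendixB:Assumptions2} gives $(\E\|(Q_kG^r)_v\|^2)^{1/2}\le 2L$. For the spectral one, the triangle inequality, Cauchy--Schwarz over $w$, the row bound $\sum_w(Q_k)_{vw}^2\le\sigma_2(P)^{2k}$, and $\E\sum_w\|G_w^r\|^2\le nL^2$ yield $(\E\|(Q_kG^r)_v\|^2)^{1/2}\le\sqrt n\,L\,\sigma_2(P)^k$. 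Each term is therefore at most $\min\{2L,\sqrt n\,L\,\sigma_2(P)^k\}$.

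The summation is where the logarithm is produced. Using $\sigma_2(P)^k\le e^{-k(1-\sigma_2(P))}$, I split the sum over $k\in\{1,\dots,s-1\}$ at the threshold $T:=\log(s\sqrt n)/(1-\sigma_2(P))$. For $k\le T$ I bound each term by $2L$; there are at most $T$ such terms, contributing at most $2LT=2L\,\log(s\sqrt n)/(1-\sigma_2(P))$. For $k>T$ I bound each term by $\sqrt n\,L\,e^{-k(1-\sigma_2(P))}<\sqrt n\,L\,e^{-T(1-\sigma_2(P))}=L/s$; there are fewer than $s$ such terms, contributing less than $L$. Hence $\sum_k\min\{\cdots\}\le L\big(2\tfrac{\log(s\sqrt n)}{1-\sigma_2(P)}+1\big)$, and combining with Minkowski and squaring gives the $L$ version of the claim, $\E\|X_v^s-\overline X^s\|^2\le\eta^2L^2(2\tfrac{\log(s\sqrt n)}{1-\sigma_2(P)}+1)^2$.

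Finally, the $\kappa$ version follows by rerunning the same argument with $G_w^r$ replaced by its centred version $G_w^r-\frac1n\sum_{\ell}\nabla F_\ell(X^{r-1}_\ell)$; this replacement is exact because $Q_k$ has zero row sums, and \eqref{def:networkNoise} now plays the role of \eqref{AppendixB:Assumptions2}, giving the identical estimate with $\kappa$ in place of $L$. Taking the better of the two bounds produces the $\min\{L^2,\kappa^2\}$ after squaring. I expect the summation step to be the main obstacle: the naive route of bounding the row $\ell_1$-norm of $Q_k$ by $\sqrt n$ times its $\ell_2$-norm loses a full factor of $n$, so the gain must instead come from trading the crude bound against the spectral bound at precisely the right threshold, and pinning down the exact constant $+1$ and the argument $s\sqrt n$ inside the logarithm is delicate. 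A secondary point to handle carefully is the identity $\|Q_k\|_{\mathrm{op}}=\sigma_2(P)^k$, which requires $P$ to be symmetric (equivalently, normal) with $\mathbf 1$ as its leading eigenvector, as holds for the communication matrices considered in this work.
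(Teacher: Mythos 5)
Your proof is correct and takes essentially the same route as the paper's: unroll the recursion from $X^1=0$, exploit the zero row sums of $P^k-\frac{1}{n}\mathbf{1}\mathbf{1}^\top$ to centre the gradients (which is exactly how the paper obtains the $\kappa$ variant, via an arbitrary shift $\nu^k$), bound each term by the minimum of a crude bound and a $\sqrt{n}\,\sigma_2(P)^k$ spectral bound, and split the geometric sum at the threshold $\log(s\sqrt{n})/(1-\sigma_2(P))$ using $\log(x^{-1})\ge 1-x$. The only differences are presentational (you use operator/row-$\ell_2$ norms and Minkowski's inequality in $L^2(\Omega)$ where the paper expands the square and works with row $\ell_1$ norms via Cauchy-Schwarz's and H\"older's inequalities), and the normality caveat you flag for $\|(P-\frac{1}{n}\mathbf{1}\mathbf{1}^\top)^k\|_{\mathrm{op}}=\sigma_2(P)^k$ is equally implicit in the paper's Perron-Frobenius step.
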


\begin{proof}
Fix $v\in V,s\ge 1$. By unraveling the updates in \eqref{alg:ditributedgrad} and \eqref{alg:ditributedgradaverage}, using that $X^1_v=0$ for all $v\in V$, we get
\begin{align*}
	X^{s}_v =
	- \eta \sum_{k=1}^{s-1} \sum_{w\in V} P_{vw}^{k} G_{w}^{s-k + 1},
	\qquad
	\overline X^{s} = 
	- \eta \sum_{k=1}^{s-1} 
	\sum_{w\in V} ({\textstyle\frac{1}{n}}11^\top)_{vw} G_{w}^{s-k + 1},
\end{align*}
where $1\in\R^n$ is the all-one vector. Using that the rows of the matrix $P$ sum to one, note that for any collection of vectors $\{\nu^{k}\}_{k=1}^{s-1}$ in $\R^d$ we have
\begin{align*}
	X^{s}_v - \overline X^{s}
	&=
	\eta \sum_{k=1}^{s-1} \sum_{w\in V} 
	(P^{k}-{\textstyle\frac{1}{n}}11^\top)_{vw} (G_{w}^{s-k +1} - \nu^{s-k}).
\end{align*}
We have
\begin{align*}
	&\E[\| X^{s}_v - \overline X^{s}\|^2]
	= \E \langle X^{s}_v - \overline X^{s}, X^{s}_v - \overline X^{s} \rangle\\
	&\le
	\eta^2 \sum_{k,k'=1}^{s-1} \sum_{w,w'\in V} 
	|(P^{k}-{\textstyle\frac{1}{n}}11^\top)_{vw}|
	|(P^{k'}-{\textstyle\frac{1}{n}}11^\top)_{vw'}|
	\,\E |\langle G_{w}^{s-k +1} - \nu^{s-k}, G_{w'}^{s-k' +1} - \nu^{s-k'} \rangle|.
\end{align*}
By Cauchy-Schwarz's inequality and H\"older's inequality,
$$
	\E |\langle G_{w}^{s-k +1} - \nu^{s-k}, G_{w'}^{s-k' +1} - \nu^{s-k'} \rangle|
	\le \sqrt{\E[ \| G_{w}^{s-k +1} - \nu^{s-k}\|^2]} 
	\sqrt{\E[ \| G_{w'}^{s-k' +1} - \nu^{s-k'} \|^2]},
$$
and the above yields
\begin{align*}
	\E[\| X^{s}_v - \overline X^{s}\|^2]
	\le \bigg(
	\eta \sum_{k=1}^{s-1} \sum_{w\in V} 
	|(P^{k}-{\textstyle\frac{1}{n}}11^\top)_{vw}|
	\sqrt{\E[ \| G_{w}^{s-k +1} - \nu^{s-k}\|^2]}
	\bigg)^2.
\end{align*}
%
%
%
By choosing $\nu^{k} = 0$ and using \eqref{AppendixB:Assumptions2}, we get
\begin{align*}
	\E[\| X^{s}_v - \overline X^{s}\|^2]
	\le \eta^2L^2\bigg(
	\sum_{k=1}^{s-1} \sum_{w\in V} 
	|(P^{k}-{\textstyle\frac{1}{n}}11^\top)_{vw}|
	\bigg)^2.
\end{align*}
By choosing $\nu^{k}=\frac{1}{n}\sum_{\ell=1}^n \nabla F_\ell(X^{k}_\ell)$ and using the assumption of the proposition, we get
\begin{align*}
	\E[\| X^{s}_v - \overline X^{s}\|^2]
	\le \eta^2\kappa^2\bigg(
	\sum_{k=1}^{s-1} \sum_{w\in V} 
	|(P^{k}-{\textstyle\frac{1}{n}}11^\top)_{vw}|
	\bigg)^2.
\end{align*}
Note that
$
	\sum_{k=1}^{s-1} 
	\sum_{w\in V}
	|(P^{k}-{\textstyle\frac{1}{n}}11^\top)_{vw}|
	=
	\sum_{k=1}^{s-1} 
	\|e_v^\top P^{k}-{\textstyle\frac{1}{n}}1^\top\|_1,
$
where $e_v\in\R^n$ is the vector with $1$ in the coordinate aligning with node $v$ and $0$ everywhere else, and $\|\,\cdot\,\|_1$ denotes the $\ell_1$ norm. Standard results from Perron-Frobenius theory yield, for any $k\ge 1$,
$$
	\|e_v^\top P^{k}-{\textstyle\frac{1}{n}}1^\top\|_1
	\le \sqrt{n} \|e_v^\top P^{k}-{\textstyle\frac{1}{n}}1^\top\|
	\le \sqrt{n} \sigma_2(P)^k.
$$
To bound the quantity $\sum_{k=1}^{s-1} \|e_v^\top P^{k}-{\textstyle\frac{1}{n}}1^\top\|_1$,
we break the sum into two terms and bound them separately as follows:
\begin{align*}
	\sum_{k=1}^{s-1} 
	\|e_v^\top P^{k}-{\textstyle\frac{1}{n}}1^\top\|_1
	&=
	\sum_{k=1}^{\tilde s} 
	\|e_v^\top P^{k}-{\textstyle\frac{1}{n}}1^\top\|_1
	+
	\sum_{k=\tilde s +1}^{s-1} 
	\|e_v^\top P^{k}-{\textstyle\frac{1}{n}}1^\top\|_1
	\le
	2 \tilde s
	+
	\sqrt{n} \sum_{k=\tilde s +1}^{s-1}
	\sigma_2(P)^{k}.
\end{align*}
Requiring $\sigma_2(P)^{k} \le \frac{1}{s\sqrt{n}}$ for $k$ between $\tilde s +1$ and $s-1$, set $\tilde s = \lfloor \frac{\log (s\sqrt{n})}{\log (\sigma_2(P)^{-1})} \rfloor$. As there are no more than $s$ terms in the sum, using that $\log (x^{-1}) \ge 1- x$, we get
\begin{align*}
	\sum_{k=1}^{s-1} 
	\|e_v^\top P^{k}-{\textstyle\frac{1}{n}}1^\top\|_1
	&\le
	2 \tilde s + 1
	\le 2 \frac{\log (s\sqrt{n})}{1-\sigma_2(P)} + 1.
\end{align*}
\end{proof}

\subsection{Convex and Lipschitz} 
\label{App:Opt:LipConvex}

The following result controls the evolution of algorithm \eqref{alg:ditributedgrad} in the setting defined in Section \ref{sec:setupOpt}, under the additional assumption of Lipschitz continuity. The proof is inspired from the analysis in \cite{DAW12}, 

\begin{theorem}[Optimisation bound for convex and Lipschitz objectives]
\label{thm:diststochopt}
Consider the setting of Section \ref{sec:setupOpt}. Let the functions $\{F_v\}_{v\in V}$ be $L$-Lipschitz. Then, Distributed SGD yields, for any $v\in V$ and $t \geq 1$,
\begin{align*}
	\E\Big[
	\overline F\Big( \frac{1}{t}\sum_{s=1}^t X^s_v \Big)  
	- \overline F(X^\star) \Big]
	&\le \frac{1}{t} 
	\sum_{s=1}^t \E[ \overline F(X^s_v) - \overline F(X^\star)]\\
	&\le
	\underbrace{\frac{3 L}{t} \max_{w\in V} \sum_{s=1}^t \E  \| X^s_w \!-\! \overline X^s \|}_{\text{Network Term}}
	+ \underbrace{\frac{1}{t}\sum_{s=1}^t \frac{1}{n}\sum_{w\in V} 
	\E  \langle G_{w}^{s+1}, \overline X^s - X^\star \rangle}_{\text{Optimisation Term}}.
\end{align*}
and the Optimisation Term is upper bounded by 
$
	\frac{G^2}{2\eta t} + \frac{\eta L^2}{2}.
$
\end{theorem}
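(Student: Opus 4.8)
The first inequality is immediate from Jensen's inequality applied to the convex function $\overline F$: since $\frac1t\sum_{s=1}^t X^s_v$ is an average of the iterates, $\overline F\bigl(\frac1t\sum_{s=1}^t X^s_v\bigr)\le \frac1t\sum_{s=1}^t \overline F(X^s_v)$, and taking expectations gives the claim. So the real work is to control $\E[\overline F(X^s_v)-\overline F(X^\star)]$ for each fixed $s$ and then to bound the optimisation term.

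For the main decomposition I would write $\overline F(X^s_v)-\overline F(X^\star)=\frac1n\sum_{w\in V}[F_w(X^s_v)-F_w(X^\star)]$ and introduce the local iterate $X^s_w$ as a pivot inside each summand. Let $\tilde g^s_w:=\E[G^{s+1}_w\mid\mathcal F_s]\in\partial F_w(X^s_w)$, which exists by \eqref{AppendixB:Assumptions1} and satisfies $\|\tilde g^s_w\|\le L$ (by Jensen together with \eqref{AppendixB:Assumptions2}, or directly from $L$-Lipschitzness). Convexity of $F_w$ at $X^s_w$ gives $F_w(X^s_v)-F_w(X^\star)\le [F_w(X^s_v)-F_w(X^s_w)]+\langle \tilde g^s_w, X^s_w-X^\star\rangle$. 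I would bound the first bracket by $L\|X^s_v-X^s_w\|\le L(\|X^s_v-\overline X^s\|+\|X^s_w-\overline X^s\|)$ using $L$-Lipschitzness and the triangle inequality, and split the inner product as $\langle\tilde g^s_w,X^s_w-\overline X^s\rangle+\langle\tilde g^s_w,\overline X^s-X^\star\rangle$, bounding the first piece by $L\|X^s_w-\overline X^s\|$ via Cauchy--Schwarz and $\|\tilde g^s_w\|\le L$. Averaging over $w$ and taking expectations, the conditional unbiasedness \eqref{AppendixB:Assumptions1} together with the tower property converts $\E\langle\tilde g^s_w,\overline X^s-X^\star\rangle$ (note that $\overline X^s$ and $X^\star$ are $\mathcal F_s$-measurable) into $\E\langle G^{s+1}_w,\overline X^s-X^\star\rangle$, producing exactly the Optimisation Term.

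The network contribution collects as $L\,\E\|X^s_v-\overline X^s\|+\frac{2L}{n}\sum_w\E\|X^s_w-\overline X^s\|$. The key bookkeeping step is to sum over $s$ \emph{first} and only then take the maximum over $w$: since $\sum_{s}\E\|X^s_v-\overline X^s\|\le\max_w\sum_s\E\|X^s_w-\overline X^s\|$ and $\frac1n\sum_w\sum_s\E\|X^s_w-\overline X^s\|\le\max_w\sum_s\E\|X^s_w-\overline X^s\|$, the total is bounded by $3L\max_w\sum_{s=1}^t\E\|X^s_w-\overline X^s\|$; dividing by $t$ yields the stated Network Term. Getting the order of the sum and the max right, and isolating the correct constant $3L$, is the main subtlety of this part, and it is really the only non-routine aspect of the whole argument, since the stochastic subgradients live at the local iterates $X^s_w$ while the Optimisation Term is expressed at $\overline X^s$.

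Finally, for the Optimisation Term bound I would use the explicit evolution of the network average: by \eqref{alg:ditributedgradaverageOrdered}, $\frac1n\sum_w G^{s+1}_w=\frac1\eta(\overline X^s-\overline X^{s+1})$, so $\frac1n\sum_w\langle G^{s+1}_w,\overline X^s-X^\star\rangle=\frac1\eta\langle\overline X^s-\overline X^{s+1},\overline X^s-X^\star\rangle$. Applying the polarisation identity $\langle a-b,a-c\rangle=\frac12(\|a-b\|^2+\|a-c\|^2-\|b-c\|^2)$ and summing over $s$, the difference $\|\overline X^s-X^\star\|^2-\|\overline X^{s+1}-X^\star\|^2$ telescopes, leaving $\frac1{2\eta}\E\|\overline X^1-X^\star\|^2\le G^2/(2\eta)$ since $\overline X^1=0$ and $\E\|X^\star\|^2\le G^2$. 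The remaining term $\frac1{2\eta}\sum_s\E\|\overline X^s-\overline X^{s+1}\|^2$ I would bound using $\overline X^s-\overline X^{s+1}=\eta\frac1n\sum_w G^{s+1}_w$ together with Jensen's inequality for $\|\cdot\|^2$ and \eqref{AppendixB:Assumptions2} (via the tower property), giving $\E\|\overline X^s-\overline X^{s+1}\|^2\le\eta^2L^2$ and hence a contribution $\frac{\eta L^2 t}{2}$. Dividing by $t$ produces $G^2/(2\eta t)+\eta L^2/2$, as required.
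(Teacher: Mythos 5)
Your proof is correct and follows essentially the same route as the paper's: the same pivot decomposition through the local iterates $X^s_w$ and the network average $\overline X^s$ (yielding the same constant $3L$), the same conversion between subgradients and stochastic gradients via measurability and the tower property, and the same treatment of the Optimisation Term via the averaged-iterate evolution \eqref{alg:ditributedgradaverageOrdered}, polarisation, and telescoping. The only (immaterial) differences are that you work with the conditional subgradient $\tilde g^s_w$ before passing to $G^{s+1}_w$ rather than after, and you bound $\E\|\frac{1}{n}\sum_w G^{s+1}_w\|^2\le L^2$ by Jensen's inequality instead of the paper's Cauchy--Schwarz/H\"older argument.
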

\begin{proof}
For any $s\ge 1$ and $v\in V$, adding and subtracting the term $\frac{1}{n}\sum_{w\in V} F_w(X^s_w)$, we find
\begin{align*}
	\E [ \overline F(X^s_v) - \overline F(X^\star) ]
	&= 
	\frac{1}{n}\sum_{w\in V} \E [F_w(X^s_v) - F_w(X^s_w)]
	+ \frac{1}{n}\sum_{w\in V} \E [ F_w(X^s_w) - F_w(X^\star)]
	\\
	&\le 
	\frac{1}{n}\sum_{w\in V} 
	L \E 
	\| X^s_v - X^s_w\|
	+ \frac{1}{n}\sum_{w\in V} 
	\E \langle G_{w}^{s+1}, X^s_w - X^\star \rangle,
\end{align*}
where for the first summand we used the Lipschitz property, and for the second summand we used convexity, assumption \eqref{AppendixB:Assumptions1}, and that both $\{X^s_v\}_{v\in V}$ and $X^\star$ are measurable with respect to $\mathcal{F}_s$. In fact, for any $w\in V$, we have
$$
	F_w(X^{s}_w) - F_w(X^\star)
	\!\le\! \langle \partial F_w(X^s_w), X^s_w - X^\star \rangle
	\!=\! \langle \E[G_{w}^{s+1}|\mathcal{F}_{s}], X^s_w - X^\star \rangle
	\!=\! \E[\langle G_{w}^{s+1}, X^s_w - X^\star \rangle|\mathcal{F}_{s}],
$$
so that
$ \E [F_w(X^{s}_w) - F_w(X^\star)]
\le \E\langle G_{w}^{s+1}, X^s_w - X^\star \rangle$
by the tower property of conditional expectations.
By adding and subtracting $\overline X^s$ and applying the Cauchy-Schwarz's inequality, we have  
\begin{align*}
	\E  \langle G_{w}^{s+1}, X^s_w - X^\star \rangle \leq 
	\E [\| G_{w}^{s+1} \| \|X^s_w  - \overline X^s \|] + 
	\E \langle G_{w}^{s+1},\overline X^s - X^\star \rangle,
\end{align*}
and the first term on the right-hand side is further bounded by using Jensen's inequality and the fact that $(X^s_w  - \overline X^s)$ is $\mathcal{F}_{s}$-measurable, along with assumption \eqref{AppendixB:Assumptions2}, giving 
\begin{align*}
	\E[ \| G_{w}^{s+1} \|  \|X^s_w  - \overline X^s \| ]
	\le \E[ (\E [  \| G_{w}^{s+1} \|^2 | \mathcal{F}_{s} ])^{1/2} \|X^s_w  - \overline X^s \| ]
	\leq L \E \|X^s_w  - \overline X^s \|.
\end{align*}
All together we have
\begin{align*}
	\frac{1}{t}\sum_{s=1}^t \E [ \overline F(X^s_v) - \overline F(X^\star)]
	& \le 
	\frac{3 L}{t} \max_{w\in V} \sum_{s=1}^t \E  \| X^s_w \!-\! \overline X^s \|
	+ \frac{1}{t} \sum_{s=1}^t \frac{1}{n}\sum_{w\in V} 
	\E \langle G_{w}^{s+1}, \overline X^s \!-\! X^\star \rangle.
\end{align*}
To bound the Optimisation Term we proceed as follows. Using \eqref{alg:ditributedgradaverageOrdered} and that
$\langle a,b \rangle = (\| a \|^2 + \| b \|^2 - \| a-b \|^2) / 2$ we obtain
\begin{align*}
	&\frac{1}{n}\sum_{w\in V} 
	\E  \langle G_{w}^{s+1}, \overline X^s - X^\star \rangle 
	= \frac{1}{\eta} \E
	\langle \overline X^{s} - \overline X^{s+1}, \overline X^s - X^\star \rangle\\
	&\qquad\qquad\qquad= \frac{1}{2\eta} 
	( \E[\|\overline X^{s+1} - \overline X^{s}\|^2] + \E [\| \overline X^s - X^\star \|^2] -
	 \E [\| \overline X^{s+1} - X^\star\|^2])\\
	&\qquad\qquad\qquad\le \frac{1}{2\eta} 
	\Big( \E [\| \overline X^s - X^\star \|^2] - \E [\| \overline X^{s+1} - X^\star\|^2] +
	\eta^2 \E \Big[\Big\| \frac{1}{n} \sum_{w\in V} G_{w}^{s+1} \Big\|^2\Big]\Big)\\
	&\qquad\qquad\qquad\le \frac{1}{2\eta} 
	( \E [\| \overline X^s - X^\star \|^2] - \E [\| \overline X^{s+1} - X^\star\|^2] +
	\eta^2 L^2),
\end{align*}
where we used Cauchy-Schwarz's and H\"older's inequalities, along with assumption \eqref{AppendixB:Assumptions2}, to get
\begin{align}
	\E \Big[\Big \| \frac{1}{n} \sum_{w\in V} G_{w}^{s+1} \Big\|^2\Big]
	&= \frac{1}{n^2} \sum_{u,w\in V} \E  \langle G_{u}^{s+1},G_{w}^{s+1}\rangle
	\leq 
	\frac{1}{n^2} \sum_{u,w \in V} 
	\E  [ \| G_{u}^{s+1}\| \| G_{w}^{s+1}\| ]\nonumber\\
	&\leq 
	\frac{1}{n^2} \sum_{u,w \in V} 
	\sqrt{\E  [ \| G_{u}^{s+1}\|^2]} \sqrt{\E [\| G_{w}^{s+1}\|^2 ]} \leq L^2.
	\label{equ:NormGradBound}
\end{align}
Summing over $s$, using that $X^1_v=0$ for all $v\in V$ and that $\E[\| X^\star \|^2] \le G^2$, we get the following bound for the Optimisation Term
\begin{align*}
	\frac{1}{t}\sum_{s=1}^t \frac{1}{n}\sum_{w\in V}  
	\E \langle G_{w}^{s+1}, \overline X^s - X^\star \rangle
	&\le \frac{1}{2\eta t} \E [\| \overline X^{1} - X^\star\|^2] + \frac{\eta L^2}{2}
	\le \frac{G^2}{2\eta t} + \frac{\eta L^2}{2}.
\end{align*}
\end{proof}


\begin{corollary}
\label{cor:diststochopt}
Consider the assumptions of Section \ref{sec:setupOpt}. Let the functions $\{F_v\}_{v\in V}$ be $L$-Lipschitz. Then, Distributed SGD yields, for any $v\in V$ and $t \geq 1$,
\begin{align*}
	\E\Big[
	\overline F\Big( \frac{1}{t}\sum_{s=1}^t X^s_v \Big)  
	- \overline F(X^\star) \Big]
	\le \frac{1}{t} 
	\sum_{s=1}^t \E[ \overline F(X^s_v) - \overline F(X^\star)]
	\le
	\frac{\eta L^2}{2} \Big( 19\frac{\log (t\sqrt{n})}{1-\sigma_2(P)} \Big)
	+ \frac{G^2}{2\eta t}.
\end{align*}
\end{corollary}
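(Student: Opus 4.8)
The plan is to combine the decomposition in Theorem \ref{thm:diststochopt} with the network-term estimate in Proposition \ref{prop:network}; since both of these are already available, the corollary reduces to bounding the \emph{Network Term} and tracking constants. Theorem \ref{thm:diststochopt} already supplies the first inequality (via convexity of $\overline F$ together with Jensen's inequality) and shows that the Optimisation Term is at most $\frac{G^2}{2\eta t} + \frac{\eta L^2}{2}$, so the only quantity still to control is $\frac{3L}{t}\max_{w\in V}\sum_{s=1}^t \E\|X^s_w - \overline X^s\|$.

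First I would pass from the first moment to the second moment: by Jensen's inequality, $\E\|X^s_w - \overline X^s\| \le (\E\|X^s_w - \overline X^s\|^2)^{1/2}$, which lets me apply the $L$-branch of Proposition \ref{prop:network} (taking the $L^2$ term in the minimum) to obtain $\E\|X^s_w - \overline X^s\| \le \eta L\big(2\frac{\log(s\sqrt n)}{1-\sigma_2(P)} + 1\big)$, uniformly in $w$ so that the outer maximum is harmless. Since $s\mapsto \log(s\sqrt n)$ is increasing, for $s\le t$ I can replace $\log(s\sqrt n)$ by $\log(t\sqrt n)$, after which the summand no longer depends on $s$; summing the $t$ identical terms and cancelling the prefactor $1/t$ then gives Network Term $\le 3\eta L^2\big(2\frac{\log(t\sqrt n)}{1-\sigma_2(P)} + 1\big)$.

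The final step is constant bookkeeping to reach the stated factor $19$. Writing $x := \frac{\log(t\sqrt n)}{1-\sigma_2(P)}$, I would use the mild regime $x \ge 1$ (which holds whenever $t\sqrt n \ge e$, since $1-\sigma_2(P)\le 1$) so that $2x+1\le 3x$ and $\tfrac12 \le \tfrac{x}{2}$; this bounds the Network Term by $9\eta L^2 x$ and absorbs the residual $\frac{\eta L^2}{2}$ left over from the Optimisation Term into $\frac{\eta L^2}{2}x$. Adding these yields $9\eta L^2 x + \frac{\eta L^2}{2}x = \frac{19}{2}\eta L^2 x$, and together with the surviving $\frac{G^2}{2\eta t}$ this is exactly $\frac{\eta L^2}{2}\big(19\frac{\log(t\sqrt n)}{1-\sigma_2(P)}\big) + \frac{G^2}{2\eta t}$, as required. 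I expect no genuine analytic obstacle here: all the substantive work — the convexity-based split into network and optimisation terms, and the Perron--Frobenius estimate of $\sum_{k}\|e_v^\top P^{k}-\frac1n 1^\top\|_1$ — is packaged into the two results being invoked, so the only points to handle with care are the constant tracking and the harmless assumption $x\ge 1$ used to collapse $2x+1$ and $\tfrac12$ into multiples of $x$.
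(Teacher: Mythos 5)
Your proof is correct and follows essentially the same route as the paper: the paper's own proof is a one-line invocation of Theorem \ref{thm:diststochopt} and Proposition \ref{prop:network} combined via Jensen's inequality $\E\|X^s_w - \overline X^s\| \le (\E[\|X^s_w - \overline X^s\|^2])^{1/2}$, exactly as you do. Your explicit constant bookkeeping --- in particular the mild regime $\log(t\sqrt{n})/(1-\sigma_2(P)) \ge 1$ needed to absorb the additive terms $3\eta L^2$ and $\tfrac{\eta L^2}{2}$ into the stated factor $19$ --- is a detail the paper leaves entirely implicit, and is indeed required for the constant to come out as claimed.
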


\begin{proof}
It follows from Theorem \ref{thm:diststochopt} and Proposition \ref{prop:network}, as $\E\| X^{s}_v - \overline X^{s}\|\le\sqrt{\E[\| X^{s}_v - \overline X^{s}\|^2]}$ by Jensen's inequality.
\end{proof}

\subsection{Convex
 and Smooth} 
\label{app:Opt:Smooth}
The following result controls the evolution of algorithm \eqref{alg:ditributedgrad} in the setting defined in Section \ref{sec:setupOpt}, under the additional assumption of smoothness. The proof is inspired by the one given \cite{dekel2012optimal} for centralised SGD applied to smooth losses, the specific exposition of which more closely follows  \cite{bubeck2015convex}. The bound that we give is made of three components: the Optimisation Term that decays like $1/t$; the Gradient Noise Term that captures the average noise of the gradient across the graph; the Network Term that controls the deviation of the algorithm from its network average.

\begin{theorem}[Optimisation bound for convex and smooth objectives]
\label{thm:diststochopt_smooth}
Consider the Assumptions of Section \ref{sec:setupOpt}.
Let the functions $\{F_v\}_{v\in V}$ be $\beta$-smooth.
Then, Distributed SGD with $\eta = 1/(\beta + 1/\rho)$ and $\rho \geq 0$, yields, for any $v \in V$ and $t \geq 1$,
\begin{align*}
	&\E\Big[\overline F\Big( \frac{1}{t} \sum_{s=1}^{t} X^{s+1}_v \Big) 
	- \overline F(X^\star) \Big] 
	\leq 
	\frac{1}{t} \sum_{s=1}^{t}  \E[\overline F( X^{s+1}_v )
	- \overline F(X^\star) ] \\
	&\le 
	\underbrace{
	\frac{1}{t} \!\sum_{s=1}^{t} \!\Big(L \E\|X^{s+1}_v \!-\! \overline X^{s+1}\|
	\!+\! \beta \max_{w\in V} \E[\| X^{s+1}_w \!-\! \overline X^{s+1} \|^2]
	\!+\! \frac{\beta}{2}\Big(1\!+\!\beta\rho\Big) \max_{w\in V} \E[\| X_w^{s} \!-\! \overline X^{s} \|^2]}_{\text{Network Term}} \Big)\\
	&\quad\!+\underbrace{\frac{\rho}{2}\frac{1}{t} \sum_{s=1}^{t} \E \Big[\Big\| \frac{1}{n} \sum_{w\in V}(G^{s+1}_w\!-\!\nabla F_w(X^{s}_w)) \Big\|^2\Big]}_{\text{Gradient Noise Term}}\\
	&\quad\!+\underbrace{\frac{1}{t}\sum_{s=1}^t \bigg(\frac{1}{n}\sum_{w\in V} \E \langle G^{s+1}_w, \overline X^{s+1} - X^\star \rangle + \frac{1}{2}\Big(\beta + \frac{1}{\rho}\Big) \E[\| \overline X^{s+1} - \overline X^{s} \|^2] \bigg) }_{\text{Optimisation Term}},
\end{align*}
and the Optimisation Term is upper bounded by 
$\frac{1}{2}(\beta+\frac{1}{\rho}) \frac{G^2}{t}$.
\end{theorem}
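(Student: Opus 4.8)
The plan is to establish the two inequalities in turn. The first (Jensen) inequality is immediate from the convexity of $\overline F$: since $\frac1t\sum_{s=1}^t X^{s+1}_v$ is a convex combination of the iterates, $\overline F\big(\frac1t\sum_s X^{s+1}_v\big)\le\frac1t\sum_s\overline F(X^{s+1}_v)$, so it suffices to upper bound each $\E[\overline F(X^{s+1}_v)-\overline F(X^\star)]$ by the per-step summand displayed on the right. For fixed $s$ I would split $\overline F(X^{s+1}_v)-\overline F(X^\star)=[\overline F(X^{s+1}_v)-\overline F(\overline X^{s+1})]+[\overline F(\overline X^{s+1})-\overline F(X^\star)]$. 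The first bracket measures the deviation of the reference node from its network average at time $s+1$: expanding each $F_w$ around $\overline X^{s+1}$ by $\beta$-smoothness and controlling the linear coefficient through $\|\nabla F_w(\overline X^{s+1})\|\le\|\nabla F_w(X^{s+1}_w)\|+\beta\|\overline X^{s+1}-X^{s+1}_w\|\le L+\beta\|\overline X^{s+1}-X^{s+1}_w\|$ (where $\|\nabla F_w(X^{s+1}_w)\|=\|\E[G^{s+2}_w\mid\mathcal F_{s+1}]\|\le L$ by \eqref{AppendixB:Assumptions2} and Jensen) yields the linear term $L\,\E\|X^{s+1}_v-\overline X^{s+1}\|$ together with the quadratic deviation terms at time $s+1$ forming that part of the Network Term.

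The second bracket carries the descent. For each $w$ I would combine the $\beta$-smoothness of $F_w$ expanded around the \emph{local} iterate $X^s_w$ with convexity, $F_w(X^s_w)-F_w(X^\star)\le\langle\nabla F_w(X^s_w),X^s_w-X^\star\rangle$, to get $F_w(\overline X^{s+1})-F_w(X^\star)\le\langle\nabla F_w(X^s_w),\overline X^{s+1}-X^\star\rangle+\frac\beta2\|\overline X^{s+1}-X^s_w\|^2$. Averaging over $w$, the key simplification is that the deviations sum to zero, $\frac1n\sum_w(X^s_w-\overline X^s)=0$, so the cross term cancels and $\frac{\beta}{2n}\sum_w\|\overline X^{s+1}-X^s_w\|^2=\frac\beta2\|\overline X^{s+1}-\overline X^s\|^2+\frac{\beta}{2n}\sum_w\|X^s_w-\overline X^s\|^2$; the first piece feeds the Optimisation Term and the second the time-$s$ part of the Network Term. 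To convert the averaged true gradient $\frac1n\sum_w\nabla F_w(X^s_w)=\E[\frac1n\sum_w G^{s+1}_w\mid\mathcal F_s]$ into the stochastic gradient, I would write $\overline X^{s+1}-X^\star=(\overline X^{s+1}-\overline X^s)+(\overline X^s-X^\star)$: pairing the mismatch $\frac1n\sum_w(\nabla F_w(X^s_w)-G^{s+1}_w)$ with $\overline X^s-X^\star$ gives a term of zero conditional mean (both $\overline X^s$ and $X^\star$ are $\mathcal F_s$-measurable), while pairing it with $\overline X^{s+1}-\overline X^s$ and applying Young's inequality with weight $\rho$ produces exactly the Gradient Noise Term $\frac\rho2\,\E\|\frac1n\sum_w(G^{s+1}_w-\nabla F_w(X^s_w))\|^2$ plus a $\frac1{2\rho}\|\overline X^{s+1}-\overline X^s\|^2$ contribution. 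Collecting, the $\|\overline X^{s+1}-\overline X^s\|^2$ coefficients add to $\frac\beta2+\frac1{2\rho}=\frac12(\beta+\frac1\rho)$, reconstructing the Optimisation Term.

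It then remains to bound the Optimisation Term. Using \eqref{alg:ditributedgradaverageOrdered}, $\frac1n\sum_w G^{s+1}_w=\frac1\eta(\overline X^s-\overline X^{s+1})$, so by the polarisation identity $\frac1n\sum_w\langle G^{s+1}_w,\overline X^{s+1}-X^\star\rangle=\frac1{2\eta}\big(\|\overline X^s-X^\star\|^2-\|\overline X^{s+1}-X^\star\|^2-\|\overline X^s-\overline X^{s+1}\|^2\big)$. Since $\eta=1/(\beta+1/\rho)$, the term $-\frac1{2\eta}\|\overline X^s-\overline X^{s+1}\|^2$ exactly cancels the $+\frac12(\beta+\frac1\rho)\|\overline X^{s+1}-\overline X^s\|^2$ in the Optimisation Term, the remaining sum telescopes, and with $\overline X^1=0$ and $\E\|X^\star\|^2\le G^2$ one gets $\frac1t\sum_s(\text{Optimisation Term})\le\frac{G^2}{2\eta t}=\frac12(\beta+\frac1\rho)\frac{G^2}{t}$.

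The main obstacle is the distributed gradient mismatch: the stochastic gradients $G^{s+1}_w$ are evaluated at the local iterates $X^s_w$, not at the network average, so the descent step cannot be applied at $\overline X^s$ directly. The work is in routing every such discrepancy either into the Gradient Noise Term (through $\mathcal F_s$-conditioning and Young's inequality) or into network-deviation remainders of the form $\|X^\cdot_w-\overline X^\cdot\|$, while keeping the pairing $\langle\frac1n\sum_w G^{s+1}_w,\overline X^{s+1}-X^\star\rangle$ intact so that the telescoping above goes through; the precise constants in the Network Term arise from the particular weighted splittings used and are, as the authors remark, not optimised.
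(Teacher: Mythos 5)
Your proposal is correct in its skeleton and proves a bound of the same form, but it is not the paper's argument for the central ``Term (b)'' step, and the route you take is in one respect cleaner and in another respect costs you a constant. Both you and the paper start from the split $\overline F(X^{s+1}_v)-\overline F(X^\star)=[\overline F(X^{s+1}_v)-\overline F(\overline X^{s+1})]+[\overline F(\overline X^{s+1})-\overline F(X^\star)]$, and your treatment of the Optimisation Term (the identity from \eqref{alg:ditributedgradaverageOrdered}, the polarisation step, the exact cancellation of $\|\overline X^{s+1}-\overline X^{s}\|^2$ forced by $\eta=1/(\beta+1/\rho)$, and the telescoping with $\overline X^1=0$, $\E\|X^\star\|^2\le G^2$) is identical to the paper's. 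The genuine difference is in Term (b): the paper expands smoothness around the \emph{network average} $\overline X^{s}$ (see \eqref{GDsmoothness0int}), which forces it to reconcile $\nabla F_w(\overline X^{s})$ with the conditional mean $\nabla F_w(X^{s}_w)$ of $G^{s+1}_w$; this mismatch is handled by the orthogonal decomposition \eqref{GDsmoothness1a} and the smoothness bound \eqref{GDsmoothness1b}, and is precisely what produces the factor $\frac{\beta}{2}(1+\beta\rho)$ on the time-$s$ network term, as well as requiring the auxiliary inequality \eqref{GDsmoothness1} to cancel the stray $\langle\nabla F_w(X^s_w),X^\star-\overline X^s\rangle$ terms. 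You instead expand smoothness around the \emph{local iterates} $X^s_w$, so the gradient that appears is already $\E[G^{s+1}_w\mid\mathcal F_s]$; the noise then enters only through Young's inequality against $\overline X^{s+1}-\overline X^s$, and the zero-sum identity $\frac{1}{n}\sum_w(X^s_w-\overline X^s)=0$ splits $\frac{\beta}{2n}\sum_w\|\overline X^{s+1}-X^s_w\|^2$ exactly. This bypasses \eqref{GDsmoothness1}--\eqref{GDsmoothness1b} entirely and yields the time-$s$ network coefficient $\frac{\beta}{2}$, strictly better than the paper's $\frac{\beta}{2}(1+\beta\rho)$.

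The price is in Term (a). The paper expands around $X^{s+1}_w$ and uses convexity (see the derivation of \eqref{smoothTerma}), again exploiting that the deviations $X^{s+1}_w-\overline X^{s+1}$ average to zero so that $\frac{1}{n}\sum_w\frac{\beta}{2}\|X^{s+1}_v-X^{s+1}_w\|^2\le\beta\max_w\|X^{s+1}_w-\overline X^{s+1}\|^2$; your expansion around $\overline X^{s+1}$ with the crude bound $\|\nabla F_w(\overline X^{s+1})\|\le L+\beta\|\overline X^{s+1}-X^{s+1}_w\|$ followed by Young's inequality gives $\frac{3\beta}{2}\max_w\E\|X^{s+1}_w-\overline X^{s+1}\|^2$ instead of the stated $\beta\max_w\E\|X^{s+1}_w-\overline X^{s+1}\|^2$. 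So, as written, your proof establishes the theorem with the time-$(s{+}1)$ coefficient $\frac{3\beta}{2}$ and the time-$s$ coefficient $\frac{\beta}{2}$: neither bound dominates the other (yours is sharper when $\beta\rho\ge1$, looser otherwise), and since the downstream Corollary \ref{cor:diststochopt_smooth} only uses orders of magnitude, nothing breaks. If you want the literal constants of the statement, adopt the paper's Term (a) step (smoothness around $X^{s+1}_w$ plus convexity plus the zero-sum trick) while keeping your Term (b); that combination in fact proves a slightly sharper theorem than the one stated.
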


\begin{proof}
Recall that if a function $f:\R^d\rightarrow\R$ is $\beta$-smooth then for any $x,y \in \mathbb{R}^d$  we have \citep{nesterov2013introductory}
$f(x) - f(y) \leq  \langle \nabla f(y), x-y \rangle + 
\frac{\beta}{2} \| x - y\|^2 $. Fix $s\ge 1$, $v\in V$. Consider the following decomposition.
\begin{align}
	\overline F(X^{s+1}_v) - \overline F(X^\star)
	=
	\underbrace{\overline F(X^{s+1}_v) - \overline F(\overline X^{s+1})}_{\text{Term (a)}}
	+ \underbrace{\overline F(\overline X^{s+1}) - \overline F(X^\star)}_{\text{Term (b)}}.
	\label{TermaTermb}
\end{align}
\textbf{Term (a).} To bound Term (a), we use smoothness and convexity to get
\begin{align*}
	&\overline F(X^{s+1}_v) - \overline F(\overline X^{s+1})
	=
	\frac{1}{n} \sum_{w\in V} \Big(F_w(X^{s+1}_v) - F_w(X^{s+1}_w) 
	+ F_w(X^{s+1}_w) - F_w(\overline X^{s+1}) \Big)\\
	&\le \frac{1}{n} \sum_{w\in V} \Big( \langle \nabla F_w(X^{s+1}_w), X^{s+1}_v \!-\! X^{s+1}_w \rangle \!+\! \frac{\beta}{2} \| X^{s+1}_v \!-\! X^{s+1}_w \|^2
	+ \langle \nabla F_w(X^{s+1}_w), X^{s+1}_w - \overline X^{s+1} \rangle \Big)\\
	&= \frac{1}{n} \sum_{w\in V} \Big(\langle \nabla F_w(X^{s+1}_w), X^{s+1}_v - \overline X^{s+1} \rangle + \frac{\beta}{2} \| X^{s+1}_v - X^{s+1}_w \|^2\Big).
\end{align*}
As $\nabla F_w(X^{s+1}_w) = \E [G_w^{s+2}|\mathcal{F}_{s+1}]$ and $\{X^{s+1}_w\}_{w\in V}$ are $\mathcal{F}_{s+1}$-measurable, we get
\begin{align*}
	\langle \nabla F_w(X^{s+1}_w), X^{s+1}_v - \overline X^{s+1} \rangle
	&= \E [\langle G_w^{s+2}, X^{s+1}_v - \overline X^{s+1} \rangle |\mathcal{F}_{s+1}]\\
	&\le \E [\| G_w^{s+2}\| \| X^{s+1}_v - \overline X^{s+1} \| |\mathcal{F}_{s+1}]\\
	&\le \sqrt{\E [\| G_w^{s+2}\|^2  |\mathcal{F}_{s+1}]} \| X^{s+1}_v - \overline X^{s+1} \|\\
	&\le L \| X^{s+1}_v - \overline X^{s+1} \|,
\end{align*}
where we used Cauchy-Schwarz's inequality, Jensen's inequality, and $\E [\| G_w^{s+2}\|^2  |\mathcal{F}_{s+1}]\le L^2$.
Thus,
\begin{align}	
	\E [\overline F(X^{s+1}_v) - \overline F(\overline X^{s+1})]
	&\le L \E \|X^{s+1}_v - \overline X^{s+1}\|
	+ \beta \max_{w\in V} \E [\| X^{s+1}_w - \overline X^{s+1} \|^2].
	\label{smoothTerma}
\end{align}
\textbf{Term (b).} To bound Term (b), we use smoothness to find a bound that involves a telescoping sum whose terms cancel out when we take the summation over time $s$. Using smoothness and the Cauchy-Schwarz's inequality ($2\langle a,b\rangle \le \rho \|a \|^2 + \|b \|^2/\rho$ for $\rho\ge 0$) we get
\begin{align}
	\overline F(\overline X^{s+1}) - \overline F(\overline X^{s})
	&\le \frac{1}{n} \sum_{w\in V} \langle \nabla F_w(\overline X^{s}), \overline X^{s+1} - \overline X^{s} \rangle + \frac{\beta}{2} \| \overline X^{s+1} - \overline X^{s} \|^2\nonumber\\
	&= \Big\langle \frac{1}{n} \sum_{w\in V} (\nabla F_w(\overline X^{s}) - G^{s+1}_w), \overline X^{s+1} - \overline X^{s} \Big\rangle 
	+ \frac{1}{n} \sum_{w\in V} \langle G^{s+1}_w, \overline X^{s+1} - X^\star \rangle \nonumber\\
	&\quad\,+ \frac{1}{n} \sum_{w\in V} \langle G^{s+1}_w, X^\star - \overline X^{s} \rangle + \frac{\beta}{2} \| \overline X^{s+1} - \overline X^{s} \|^2\nonumber\\
	&\le \frac{\rho}{2} \Big\| \frac{1}{n} \sum_{w\in V} (\nabla F_w(\overline X^{s}) - G^{s+1}_w) \Big\|^2 
	+ \frac{1}{n} \sum_{w\in V} \langle G^{s+1}_w, \overline X^{s+1} - X^\star \rangle \nonumber\\
	&\quad\, + \frac{1}{n} \sum_{w\in V} \langle G^{s+1}_w, X^\star - \overline X^{s} \rangle + \frac{1}{2}\Big(\beta+\frac{1}{\rho}\Big) \| \overline X^{s+1} - \overline X^{s} \|^2.
	\label{GDsmoothness0int}
\end{align}
Adding and subtracting $\frac{1}{n}\sum_{w\in V} \nabla F_w(X^s_w)$ inside the inner product, using that $\{X^s_w\}_{w\in V}$ and $X^\star$ are $\mathcal{F}_s$-measurable, and that $\E[\nabla F_w(X^s_w)-G^{s+1}_w | \mathcal{F}_s] = 0$, we get
\begin{align}
	\E [\overline F(\overline X^{s+1}) - \overline F(X^\star)]
	&\le \E [\overline F(\overline X^{s}) - \overline F(X^\star)]
	+ \frac{\rho}{2} \E \Big[\Big\| \frac{1}{n} \sum_{w\in V} (\nabla F_w(\overline X^{s}) - G^{s+1}_w) \Big\|^2\Big]\nonumber\\
	&\quad\,+\frac{1}{n} \sum_{w\in V} \E \langle G^{s+1}_w, \overline X^{s+1} - X^\star \rangle
	+\frac{1}{2}\Big(\beta+\frac{1}{\rho}\Big) \E [\| \overline X^{s+1} - \overline X^{s} \|^2]
	\nonumber\\
	&\quad\,+ \frac{1}{n} \sum_{w\in V}  \E \langle \nabla F_w(X^{s}_w), X^\star - \overline X^{s} \rangle.
	\label{GDsmoothness0}
\end{align}
To bound the first term on the right-hand side of bound \eqref{GDsmoothness0} and cancel the dependence on $X^\star$ from the term $\langle\nabla F_w(X^{s}_w), X^\star - \overline X^{s} \rangle$, note that by convexity and smoothness we get
\begin{align}
	&\E [\overline F(\overline X^{s}) - \overline F(X^\star)]
	= \frac{1}{n}\sum_{w\in V} \E[ F_w(\overline X^{s}) - F_w(X^{s}_w)
	+ F_w(X^{s}_w) - F_w(X^\star)]\nonumber\\
	&\qquad\qquad= \frac{1}{n}\!\sum_{w\in V} \E[ F_w(\overline X^{s}) \!-\! F_w(X^{s}_w)
	\!+\! \langle \nabla F_w(X_w^{s}) ,X^s_w \!-\! \overline X^s \rangle
	\!+\! \langle \nabla F_w(X_w^{s}) ,\overline X^s \!-\! X^\star \rangle]\nonumber\\
	&\qquad\qquad\le \frac{\beta}{2} \max_{w\in V} \E \| X_w^{s} \!-\! \overline X^{s} \|^2 
	+ \frac{1}{n}\sum_{w\in V}\E\langle \nabla F_w(X_w^{s}) ,\overline X^s \!-\! X^\star\rangle.
	\label{GDsmoothness1}
\end{align}
To bound the second term on the right-hand side of bound \eqref{GDsmoothness0}, note that
\begin{align}
	&\E \Big[\Big\| \frac{1}{n}\! \sum_{w\in V} (\nabla F_w(\overline X^{s}) \!-\! G^{s+1}_w) \Big\|^2\Big]
	\!=\! \E \Big[\Big\| \frac{1}{n}\! \sum_{w\in V} \Big(\nabla F_w(\overline X^{s}) \!-\! \nabla F_w(X^{s}_w) \!+\! \nabla F_w(X^{s}_w) \!-\! G^{s+1}_w\Big) \Big\|^2\Big]\nonumber\\
	&\qquad= \E \Big[\Big\| \frac{1}{n} \sum_{w\in V} (\nabla F_w(\overline X^{s}) \!-\! \nabla F_w(X^{s}_w)) \Big\|^2\Big]
	+ \E \Big[\Big\| \frac{1}{n} \sum_{w\in V}(\nabla F_w(X^{s}_w) \!-\! G^{s+1}_w) \Big\|^2\Big],
	\label{GDsmoothness1a}
\end{align}
where we used that the cross terms are zero as $\E[G^{s+1}_w|\mathcal{F}_{s}] = \nabla F_{w}(X^s_w)$ and both $\{F_{w}\}_{w \in V}$ and $\{X^s_w\}_{w \in V}$ are $\mathcal{F}_s$-measurable. The first term in \eqref{GDsmoothness1a} can be bounded as follows:
\begin{align}
& \E \Big[ \Big\| \frac{1}{n} \sum_{w \in V} (\nabla F_{w}(X^{s}_w)  
- \nabla F_{w}(\overline{X}^s)) \Big\|^2  \Big] \nonumber\\
& =  \frac{1}{n^2} \sum_{w,l \in V} 
\E \langle 
\nabla F_{w}(X^{s}_w)   - \nabla F_{w}(\overline{X}^s),
\nabla F_{l}(X^{s}_l)   - \nabla F_{l}(\overline{X}^s) \rangle \nonumber\\
& \leq \frac{1}{n^2} \sum_{w,l \in V}  
\E \big[ 
\|\nabla F_{w}(X^{s}_w)   - \nabla F_{w}(\overline{X}^s)\|
\|\nabla F_{l}(X^{s}_l)   - \nabla F_{l}(\overline{X}^s)\| 
\big] \nonumber\\
 & \leq
 \frac{\beta^2 }{n^2} \sum_{w,l \in V}
 \E \big[ \| X^{s}_w   - \overline{X}^s\|
 \| X^{s}_l   - \overline{X}^s\| \big] \nonumber\\
 & \leq\frac{\beta^2}{n^2} \sum_{w,l \in V} 
  \sqrt{\E \big[  \| X_w^{s}   - \overline{X}^s\|^2\big]}
 \sqrt{\E \big[  \| X^{s}_l   - \overline{X}^s\|^2\big]}\nonumber\\
 &\leq \beta^2 \max_{w \in V} \E \big[  \| X^{s}_w   - \overline{X}^s\| ^2 \big],
 \label{GDsmoothness1b}
\end{align}
where applied Cauchy-Schwarz's inequality, smoothness, and H\"older's inequality.
Plugging \eqref{GDsmoothness1}, \eqref{GDsmoothness1a}, and \eqref{GDsmoothness1b} into \eqref{GDsmoothness0} we get the following bound for the expected value of term (b):
\begin{align}
	\E[\overline F(\overline X^{s+1}) \!-\! \overline F(X^\star)]
	&\le \frac{\beta}{2}\Big(1\!+\!\beta\rho\Big) \max_{w\in V} \E[\| X_w^{s} \!-\! \overline X^{s} \|^2]
	+\frac{\rho}{2}\E \Big[\Big\| \frac{1}{n} \sum_{w\in V}(\nabla F_w(X^{s}_w) \!-\! G^{s+1}_w) \Big\|^2\Big]\nonumber\\
	&\quad\, + \frac{1}{n}\sum_{w\in V} \E \langle G^{s+1}_w, \overline X^{s+1} - X^\star \rangle + \frac{1}{2}\Big(\beta + \frac{1}{\rho}\Big) \E [\| \overline X^{s+1} - \overline X^{s} \|^2].
	\label{smoothTermb}
\end{align}
\textbf{Term (a) + Term (b).} The main result in the theorem follows by using bounds \eqref{smoothTerma} and \eqref{smoothTermb} to bound Term (a) and Term (b) in \eqref{TermaTermb}, taking the summation over time from $s=1$ to $s=t$.

To bound the Optimisation Term, use \eqref{alg:ditributedgradaverageOrdered} and that $2\langle a,b\rangle =\|a\|^2 + \| b\|^2 - \| a - b \|^2$ so that
\begin{align*}
	\frac{1}{n}\sum_{w\in V} \langle G^{s+1}_w, \overline X^{s+1} - X^\star \rangle
	&= \frac{1}{\eta} \langle \overline X^s - \overline X^{s+1}, \overline X^{s+1} - X^\star \rangle\nonumber\\
	&= -\frac{1}{\eta} \langle \overline X^{s+1} - \overline X^{s}, \overline X^{s+1} - X^\star \rangle\nonumber\\
	&= \frac{1}{2\eta} \Big(-\| \overline X^{s+1} - \overline X^{s}\|^2 - \| \overline X^{s+1} - X^\star \|^2 + \| \overline X^{s} - X^\star \|^2 \Big).
\end{align*}
The choice $\eta=\frac{1}{\beta+1/\rho}$ leads to the cancellation of the quantity $\| \overline X^{s+1} - \overline X^{s} \|^2$ in the Optimisation Term. The telescoping sum over time, using that $X^1_w=0$ for all $w\in V$ and the assumption $\E[\| X^\star \|^2] \le G^2$, yields the final result.
\end{proof}

As for Centralised SGD \citep{dekel2012optimal}, the error bound that we give in Theorem \ref{thm:diststochopt_smooth} for the smooth case exhibits explicit dependence on the gradient noise, which in our setting is averaged out across the network. As far as the following corollary is concerned, we assume a time-uniform control on the gradient noise, namely,
\begin{align}
\label{AppendixB:Assumptions3}
\E \Big[\Big\| \frac{1}{n} \sum_{w\in V}(G^{s+1}_w\!-\!\nabla F_w(X^{s}_w)) \Big\|^2\Big] 
\leq 
\sigma^2
\end{align} 
for any $s\ge 1$.
\begin{corollary}
\label{cor:diststochopt_smooth}
Consider the Assumptions of Section \ref{sec:setupOpt}.
Let the functions $\{F_v\}_{v\in V}$ be $\beta$-smooth and satisfy 
both \eqref{def:networkNoise} and \eqref{AppendixB:Assumptions3}.
Then, Distributed SGD with $\eta = 1/(\beta + 1/\rho)$ and $\rho \geq 0$, yields, for any $v \in V$ and $t \geq 1$,
\begin{align*}
	& \E\Big[
	\overline F\Big( \frac{1}{t}\sum_{s=1}^t X^{s+1}_v \Big)  
	- \overline F(X^\star) \Big]
	\le \frac{1}{t} 
	\sum_{s=1}^t \E[ \overline F(X^{s+1}_v) - \overline F(X^\star)]\\
	& \le
	\frac{\rho}{2} \sigma^2 + \frac{(\beta + 1/\rho) G^2}{2t} + 
	\frac{3 \kappa}{\beta + 1/\rho} \frac{\log((t+1)\sqrt{n})}{1-\sigma_2(P)} 
	\Big( 
	L + 
	\frac{3}{2} \frac{ \beta(3+ \beta \rho)\kappa }{\beta + 1/\rho} \frac{\log((t+1)\sqrt{n})}{1-\sigma_2(P)} \Big) 
\end{align*}
\end{corollary}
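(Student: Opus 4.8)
The plan is to start directly from the three-term decomposition of Theorem \ref{thm:diststochopt_smooth} and bound the Gradient Noise Term, the Optimisation Term, and the Network Term in turn, feeding in the two variance-type hypotheses \eqref{def:networkNoise} and \eqref{AppendixB:Assumptions3} together with the network estimate of Proposition \ref{prop:network}. The Optimisation Term needs no further work: Theorem \ref{thm:diststochopt_smooth} already bounds it by $\frac{1}{2}(\beta+1/\rho)G^2/t$, which is exactly the second summand of the claimed bound. The Gradient Noise Term is $\frac{\rho}{2}\frac{1}{t}\sum_{s=1}^t \E[\|\frac{1}{n}\sum_{w\in V}(G_w^{s+1}-\nabla F_w(X_w^s))\|^2]$, and applying \eqref{AppendixB:Assumptions3} term by term bounds it by $\frac{\rho}{2}\sigma^2$, matching the first summand.

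The real content is the Network Term. First I would invoke Proposition \ref{prop:network}, using the $\kappa^2$ branch of its $\min\{L^2,\kappa^2\}$ bound (legitimate since \eqref{def:networkNoise} is assumed), so that $\E[\|X_w^s-\overline X^s\|^2]\le\eta^2\kappa^2(2\frac{\log(s\sqrt n)}{1-\sigma_2(P)}+1)^2$ for every $w$ and $s$, and likewise with $s$ replaced by $s+1$. For the linear-in-deviation subterm $L\E\|X_v^{s+1}-\overline X^{s+1}\|$ I would pass through Jensen's inequality, $\E\|\cdot\|\le\sqrt{\E\|\cdot\|^2}$, to obtain $L\eta\kappa(2\frac{\log((s+1)\sqrt n)}{1-\sigma_2(P)}+1)$; the two quadratic subterms receive the squared factor directly. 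After substituting $\eta=1/(\beta+1/\rho)$, the three subterms are a linear and two quadratic expressions in the logarithmic factor, with prefactors $L$, $\beta$, and $\frac{\beta}{2}(1+\beta\rho)$ respectively.

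The remaining steps are monotonicity and the collection of constants. Since $s\mapsto\log(s\sqrt n)$ is increasing, every factor $2\frac{\log(s\sqrt n)}{1-\sigma_2(P)}+1$ and $2\frac{\log((s+1)\sqrt n)}{1-\sigma_2(P)}+1$ appearing for $1\le s\le t$ is dominated by $3\ell$, where $\ell := \frac{\log((t+1)\sqrt n)}{1-\sigma_2(P)}$ and I use the simplification $2x+1\le 3x$ (valid once the logarithmic factor dominates $1$, which holds in the regime of interest). Consequently the time average $\frac{1}{t}\sum_{s=1}^t$ of the Network Term is bounded by its terminal value, with the linear subterm at most $3L\eta\kappa\,\ell$ and the two quadratic subterms summing to at most $(9\beta+\frac{9}{2}\beta(1+\beta\rho))\eta^2\kappa^2\ell^2 = \frac{9}{2}\beta(3+\beta\rho)\eta^2\kappa^2\ell^2$. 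Factoring $3\kappa\eta\ell$ out of these two contributions gives $3\kappa\eta\ell\big(L+\frac{3}{2}\beta(3+\beta\rho)\kappa\eta\ell\big)$, which on reinserting $\eta=1/(\beta+1/\rho)$ is precisely the third summand of the corollary.

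The one delicate point — and the step where the stated constants are pinned down — is the handling of the additive $+1$ inside the $(2x+1)^2$ factors from Proposition \ref{prop:network}: squaring produces cross and constant terms that must be folded back into the $\ell^2$ and $\ell$ contributions, and the clean factor $3$ out front relies on the simplification $2x+1\le 3x$. I expect this bookkeeping of constants, rather than any conceptual difficulty, to be the main obstacle; everything else is a direct application of Theorem \ref{thm:diststochopt_smooth}, Proposition \ref{prop:network}, Jensen's inequality, and the two variance hypotheses.
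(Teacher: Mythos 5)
Your proposal is correct and is exactly the paper's argument: the paper's proof of this corollary is literally the one-line citation of Theorem \ref{thm:diststochopt_smooth} and Proposition \ref{prop:network}, and your write-up supplies the intended bookkeeping — the $\kappa$-branch of the network bound with Jensen for the linear term, monotonicity in $s$, the simplification $2x+1\le 3x$ (which holds since $\log((t+1)\sqrt{n})/(1-\sigma_2(P))\ge\log(2\sqrt{2})>1$ whenever $n\ge 2$, $t\ge 1$), and the constant collection $9\beta+\tfrac{9}{2}\beta(1+\beta\rho)=\tfrac{9}{2}\beta(3+\beta\rho)$ — reproducing the stated constants exactly.
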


\begin{proof}
It follows from Theorem \ref{thm:diststochopt_smooth} and Proposition \ref{prop:network}.
\end{proof}

\subsection{Assumptions for Distributed SGD \eqref{alg:ditributedSGD}}
\label{proof:theorem:ConvexOPT}
This section verifies that the more general assumptions considered in this Appendix for Distributed SGD \eqref{alg:ditributedgrad} are satisfied within the context of the main body of this work, that is, for Distributed SGD \eqref{alg:ditributedSGD} as described within Section \ref{sec:DistSGD}. This is performed by placing Distributed SGD \eqref{alg:ditributedSGD} into the context Distributed SGD \eqref{alg:ditributedgrad} as follows. Let the random objective functions be $F_v(x) = R_v(x) = \frac{1}{m} \sum_{k =1}^m \ell(x,Z_{v,k})$ for $v \in V$, which yields the network average $\overline F(x)  = R(x)$. Consider the following stochastic gradients, for $v \in V$ and $s \geq 1$, 
$$
	G_v^{s+1} = \partial \ell(X^s_v, Z_{v,K^{s+1}_v}),
$$
where $K^s_v$ is a uniform random variable on $[m]$.
Let $\mathcal{F}_1$ be the $\sigma$-algebra generated by the datasets $\mathcal{D}$. For any $s \geq 2$, let $\mathcal{F}_{s}$ contain the $\sigma$-algebra generated by the datasets $\mathcal{D}$ and the uniform random variables up to time $s$ $\{K^2_v,\dots,K^{s}_v\}_{v \in V} $. 
The random functions $\{F_{v}\}_{v \in V}$ and their optimal value $X^\star$ are $\mathcal{F}_s$-measurable, as $\mathcal{F}_{s}$ contains the $\sigma$-algebra generated by $\mathcal{D}$. The iterates $\{X_{v}^k\}_{ k \leq s, v \in V}$ are also $\mathcal{F}_{s}$-measurable, as $\mathcal{F}_{s}$ contains the $\sigma$-algebra generated by $\{K^2_v,\dots,K^{s}_v\}_{v \in V} $. 
We now check that assumption \eqref{AppendixB:Assumptions1} and assumption \eqref{AppendixB:Assumptions2} are satisfied. The following hold for any $s \geq 1$. 
\begin{itemize}
\item 
Assumption \eqref{AppendixB:Assumptions1} on the unbiasedness of the subgradient estimators is satisfied as for any $v \in V$ we have
\begin{align*}
	\E [ G_v^{s+1}|\mathcal{F}_s] & = 
	\E  [\partial \ell (X^{s}_v,Z_{v,K^{s+1}_v}) \big|\mathcal{F}_{s} ] 
	= \frac{1}{m} \sum_{k =1}^m \partial \ell(X^{s}_v,Z_{v,k}) \in \partial F_v(X^{s}_v),
\end{align*}
where have used that the sum of subgradients belong to the subgradient of sums.
\item
Assumption \eqref{AppendixB:Assumptions2} on the boundedness of the second moment of the subgradients is satisfied as for any $v\in V$ we have
\begin{align*}
	\E  [ \| G_v^{s+1}\|^2 | \mathcal{F}_s  ] & = 
	\E  [ \| \partial \ell 
	(X^{s}_v,Z_{v,K^{s+1}_v} ) \|^2 \big| \mathcal{F}_s  ] 
	= \frac{1}{m} \sum_{k=1}^m \| \partial \ell (X^{s}_v,Z_{v,k})\|^2 \leq L^2,
\end{align*}
where we have used that the function $\ell(\,\cdot\,,z)$ is $L$-Lipschitz for all $z \in Z$.
\end{itemize}

\bibliography{references}
\end{document}